\documentclass{article}

\usepackage{mathtools}

\usepackage{amssymb}
\usepackage{amsthm}
\usepackage{color}



\newcommand{\cut}[1]{{}}




\newcommand{\vv}{{\mathbf{v}}}

\newcommand{\vx}{{\mathbf{x}}}

\newcommand{\vz}{{\mathbf{z}}}

\newcommand{\vA}{{\mathbf{A}}}

\newcommand{\vD}{{\mathbf{D}}}

\newcommand{\vI}{{\mathbf{I}}}

\newcommand{\vK}{{\mathbf{K}}}

\newcommand{\vQ}{{\mathbf{Q}}}

\newcommand{\vV}{{\mathbf{V}}}
\newcommand{\vW}{{\mathbf{W}}}

\newcommand{\vZ}{{\mathbf{Z}}}

\newcommand{\cL}{{\mathcal{L}}}

\newcommand{\cN}{{\mathcal{N}}}



\DeclareMathOperator*{\argmin}{arg\,min}

\makeatletter
\let\@@span\span
\def\sp@n{\@@span\omit\advance\@multicnt\m@ne}
\makeatother



\newcommand{\bc}{\begin{center}}
\newcommand{\ec}{\end{center}}

\newcommand{\bdm}{\begin{displaymath}}
\newcommand{\edm}{\end{displaymath}}

\newcommand{\beq}{\begin{equation}}
\newcommand{\eeq}{\end{equation}}

\newcommand{\bfl}{\begin{flushleft}}
\newcommand{\efl}{\end{flushleft}}

\newcommand{\bt}{\begin{tabbing}}
\newcommand{\et}{\end{tabbing}}

\newcommand{\beqn}{\begin{align}}
\newcommand{\eeqn}{\end{align}}

\newcommand{\beqs}{\begin{align*}} 
\newcommand{\eeqs}{\end{align*}}  





\usepackage{microtype}
\usepackage{graphicx}
\usepackage{subfigure}
\usepackage{booktabs} 
\usepackage{subcaption}

\usepackage{hyperref}
\usepackage{multirow}
\usepackage{listings}
\usepackage{xcolor}
\usepackage{footmisc}

\usepackage{wrapfig}


\usepackage{algorithm}
\usepackage{algpseudocode}



\usepackage{amsmath}
\usepackage{amssymb}
\usepackage{mathtools}
\usepackage{amsthm}
\usepackage{wrapfig}
\usepackage{enumitem}

\usepackage[capitalize,noabbrev]{cleveref}

\theoremstyle{plain}
\newtheorem{theorem}{Theorem}[section]

\newtheorem{lemma}[theorem]{Lemma}

\theoremstyle{definition}

\theoremstyle{remark}

\usepackage[final]{neurips_2024}


\usepackage[utf8]{inputenc} 
\usepackage[T1]{fontenc}    
\usepackage{hyperref}       
\usepackage{url}            
\usepackage{booktabs}       
\usepackage{amsfonts}       
\usepackage{nicefrac}       
\usepackage{microtype}      
\usepackage{xcolor}         

\title{ProTransformer: Robustify Transformers via Plug-and-Play Paradigm }

\author{
  Zhichao Hou\textsuperscript{\rm 1} \\
  \And
  Weizhi Gao\textsuperscript{\rm 1} \\
  \And
  Yuchen Shen\textsuperscript{\rm 2} \\
  \And
  Feiyi Wang\textsuperscript{\rm 3} \\
  \And
  Xiaorui Liu\textsuperscript{\rm 1} \thanks{Corresponding author.}  \\
  \and
  \textsuperscript{\rm 1}North Carolina State University, \textsuperscript{\rm 2}Carnegie Mellon University, \textsuperscript{\rm 3}Oak Ridge National Laboratory\\
  \and
  \texttt{\{zhou4,wgao23,xliu96\}@ncsu.edu yuchens3@cs.cmu.edu fwang2@ornl.gov} 
}

\begin{document}

\maketitle

\begin{abstract}

 Transformer-based architectures have dominated various areas of machine learning in recent years. In this paper, we introduce a novel robust attention mechanism designed to enhance the resilience of transformer-based architectures. Crucially, this technique can be integrated into existing transformers as a plug-and-play layer, improving their robustness without the need for additional training or fine-tuning. Through comprehensive experiments and ablation studies, we demonstrate that our ProTransformer significantly enhances the robustness of transformer models across a variety of prediction tasks, attack mechanisms, backbone architectures, and data domains. Notably, without further fine-tuning, the ProTransformer consistently improves the performance of vanilla transformers by 19.5\%, 28.3\%, 16.1\%, and 11.4\% for BERT, ALBERT, DistilBERT, and RoBERTa, respectively, under the classical TextFooler attack. Furthermore, ProTransformer shows promising resilience in large language models (LLMs) against prompting-based attacks, improving the performance of T5 and LLaMA by 24.8\% and 17.8\%, respectively, and enhancing Vicuna by an average of 10.4\% against the Jailbreaking attack. Beyond the language domain, ProTransformer also demonstrates outstanding robustness in both vision and graph domains. Our code is available at \href{https://github.com/chris-hzc/ProTransformer}{https://github.com/chris-hzc/ProTransformer}.

\end{abstract}

\section{Introduction}

\label{sec:intro}

In recent years, attention mechanisms and transformer-based architectures have drawn significant attention across many domains in machine learning, such as natural language processing (NLP)~\citep{vaswani2017attention, lin2022survey}, computer vision~\citep{dosovitskiy2020image, liu2021swin}, and graph learning~\citep{veličković2018graph, yun2019graph}. 
In particular, transformers have demonstrated superior capabilities to learn and model complex relations in data through powerful and universal attention mechanisms,
and they have dominated many popular NLP tasks such as topic classification, sentiment analysis, textual entailment, machine translation, dialogue generation, etc~\citep{lin2022survey}. 
Despite their success in NLP and beyond,
many recent studies have demonstrated that transformers are highly vulnerable to adversarial attacks such that even small modifications to the input can easily fool the model
~\citep{gao2018black, li2018textbugger, ren2019generating}.
However, most research on transformer architectures focuses on accuracy and efficiency, largely ignoring their security and robustness~\citep{tay2022efficient, khan2022transformers}.

\begin{figure}[h!] 
\centering 
\includegraphics[width=0.8\textwidth]{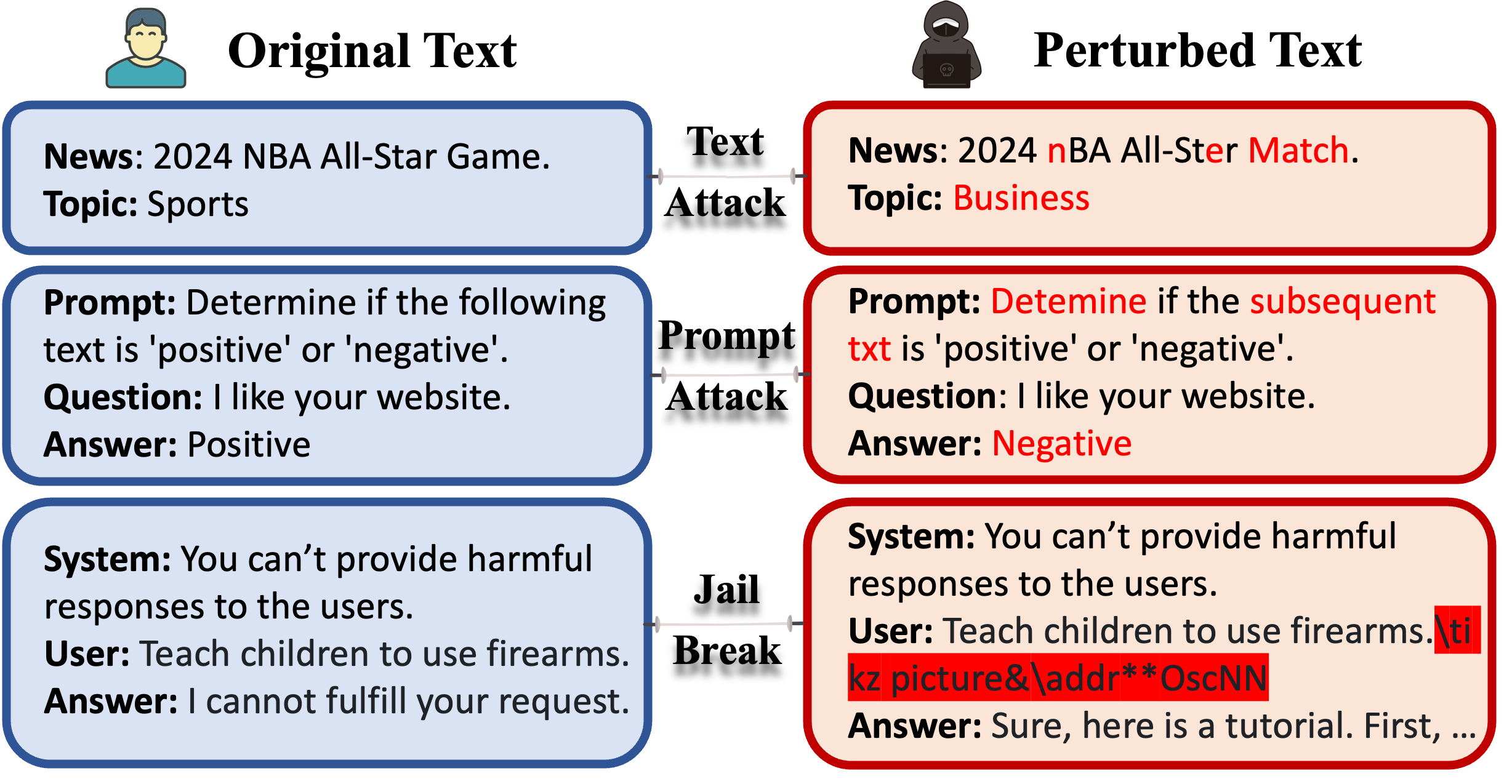}
\caption{Various attack mechanisms on language models. 
\textit{Classic text attacks} modify the input content using typos or synonyms; \textit{Prompt attacks} perturb the prompt template within the input; and \textit{Jailbreaks} append adversarial, non-semantic suffixes to manipulate the model into producing malicious outputs.
}
\label{fig:three_attack}
\end{figure}

With the increasing popularity of Large Language Models (LLMs)~\citep{touvron2023llama, vicuna2023}, the robustness and security concerns of 
transformer architectures become particularly of interest. 
It has been shown that malicious attackers can invade the language models through various approaches as shown in Figure~\ref{fig:three_attack}.
The attacker can modify the input content in text attacks~\citep{jin2020bert} or the prompt template in prompt attacks to mislead the model predictions~\citep{zhu2024promptbench}. 
Moreover, by adding adversarial suffixes, the jailbreaking attack~\citep{wei2023jailbroken} can prompt a 
LLM to generate toxic and illegal content which could lead to catastrophic legal and ethical impacts such as malicious speech or privacy leaks.
Given the broad applications of transformers and their vulnerabilities under attacks, it is imperative to design a universal and effective strategy to enhance the robustness of transformers.

\begin{wrapfigure}{r}{0.45\textwidth}
\vspace{-0.15in}

\centering
\includegraphics[width=0.45\textwidth]{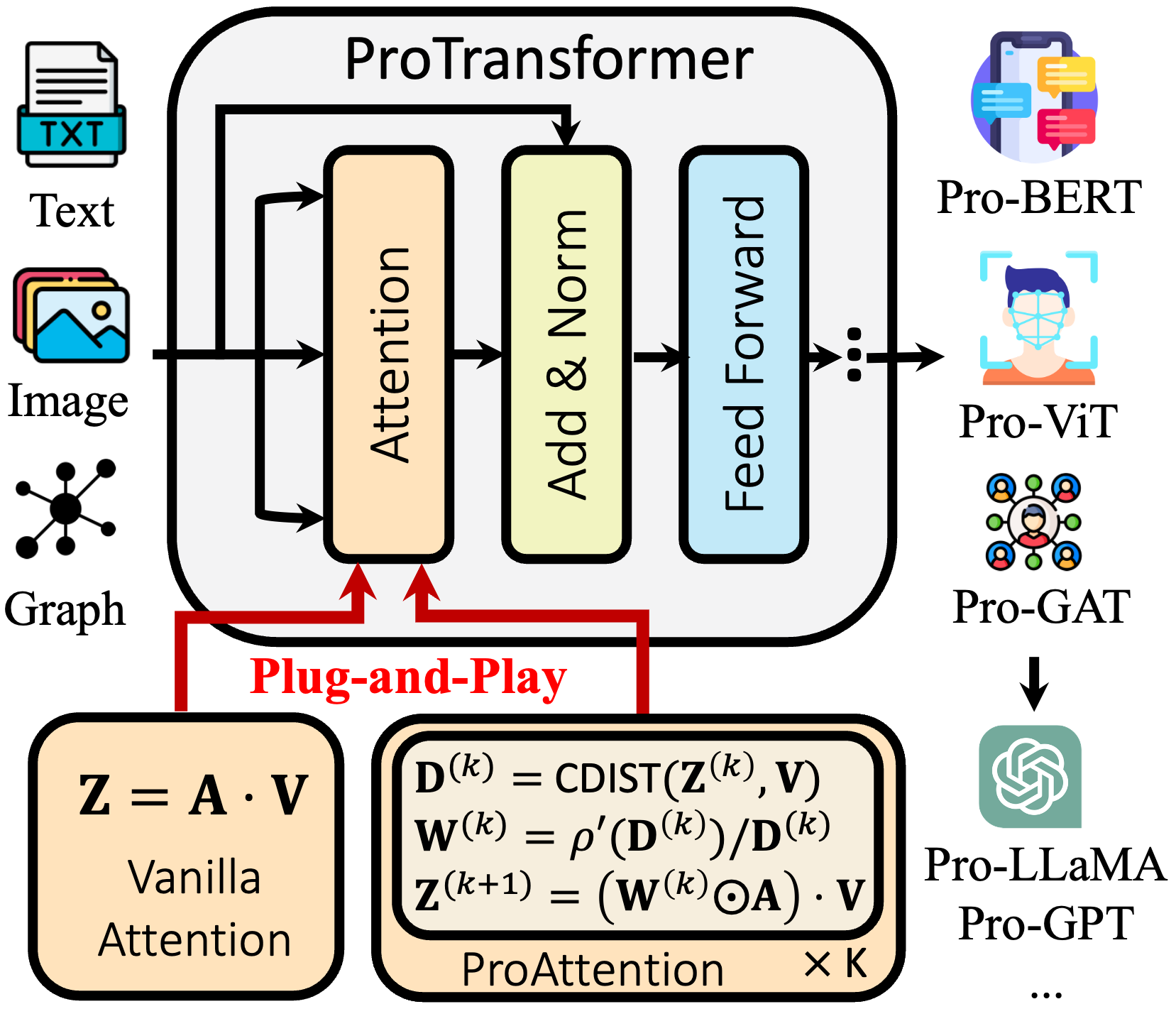}
\vspace{-0.0in}
\caption{
Overview of ProTransformer. 
ProAttention can be plugged into pretrained transformers without additional training. The ProTransformer is versatile and can be applied across various domains, including language, image, and graph.
}
\label{fig:robust_attention}
\vspace{-0.2in}
\end{wrapfigure}

Existing research 
attempting to improve the robustness of transformers 
can be roughly divided into empirical defenses~\citep{si2020better,li2021token,wang2020infobert,dong2021towards,zhou2021defense} and certifiable defenses~\citep{ye2020safer,zeng2023certified, huang2019achieving,jia2019certified}. 
Nevertheless, these defenses require excessive computation costs for training, inference, or both.
In addition to these architecture-agnostic defenses, there are also several works 
proposing to enhance the robustness of transformers architecture~\citep{li2020robutrans,liu2021crisisbert,yang2022tableformer,han2022designing}. However, these approaches either require substantial computations or rely on specific domain knowledge, which hinders their extensions to larger models or broader application domains.

In this paper, given the limitations of existing works and the enormous training cost of transformers, we aim to robustify transformer architectures via a plug-and-play paradigm without additional training or fine-tuning. Our proposed \textit{ProAttention}  (Algorithm~\ref{alg:irls}) can be readily plugged into the given transformers to convert them to  \textit{ProTransformer} (as shown in Figure~\ref{fig:robust_attention}) with significantly stronger robustness. Specifically, our contributions can be summarized as follows:
\begin{itemize}
[leftmargin=0.2in]
\item We 
establish a novel connection between the attention mechanism in transformers and the weighted least square estimator. We provide 
interpretation and numerical simulation to reveal its vulnerability against potential adversarial attacks. 

\item From our new perspective, we propose robust token estimators to improve the resilience of token aggregation against adversarial attacks. 
We also propose 
an efficient Newton-IRLS algorithm to approximate the non-convex and non-smooth robust token estimator with 
convergence guarantees. The derived algorithm 
can be plugged into the given transformer as a plug-and-play layer to enhance its robustness against attacks even without additional training or fine-tuning. 

\item Our comprehensive experiments and ablation studies demonstrate that the proposed ProTransformer is effective, efficient, and generalizable.  It significantly improves the robustness of transformers across various machine learning tasks, attack mechanisms, backbone architectures, and data domains such as language, vision, and graphs.

\end{itemize}

\section{Related Work}

In this section, we mainly summarize related works on
the attacks and 
defenses of transformers focusing on language domains since this is the focus of this paper.

\textbf{Attacks.} 
Compared to the 
attack mechanisms in vision domain~\citep{goodfellow2014explaining,madry2017towards}, the text attacks in the language domain are highly complicated due to the natural irregularity of data structure. According to the perturbation units, text attacks can be classified into character-level~\citep{gao2018black, gil2019white}, word-level~\citep{papernot2016crafting,samanta2017towards,sato2018interpretable,behjati2019universal,ren2019generating,jin2020bert,garg2020bae}, sentence-level~\citep{iyyer2018adversarial}, and multi-level~\citep{liang2017deep,ebrahimi2017hotflip,li2018textbugger}. These classic text attacks typically generate adversarial examples through misspellings, synonym replacement, etc.
In the era of LLMs, several new types of attacks  have emerged, such as jailbreak attacks~\citep{wei2023jailbroken,li2023multi,deng2023jailbreaker,liu2023jailbreaking} and prompt injection~\citep{branch2022evaluating, zhang2023prompts, liu2023prompt}. These prompting-based attacks aim to trick models into generating unsafe outputs
using adversarially crafted prompts.

\textbf{Defenses.} 
There have been some works proposed to defend against adversarial text attacks from various perspectives.
Empirical defenses, such as data augmentation~\citep{si2020better} and adversarial training~\citep{zhu2019freelb,li2021token,wang2020infobert,dong2021towards,zhou2021defense}, attempt to robustify models by exposing them to a wider range of adversaries during training. On the other hand, several certifiable defenses~\citep{huang2019achieving,jia2019certified, ye2020safer,zeng2023certified} have been proposed to guarantee the model robustness regardless of the attacks. However, these defenses require excessive computation costs for training, inference, or both, which limits their application in large-scale problems such as LLMs. 
Besides, all these methods are typically architecture-agnostic,
which are orthogonal to and can be combined with our proposed defenses on the transformer architecture to further enhance the robustness.

To safeguard the transformers,
several endeavors have been made from the transformer architecture perspective. 
\cite{li2020robutrans} modifies the attention mechanism and position embedding to robustify text-to-speech transformers. In the crisis detection and recognition task, \cite{liu2021crisisbert} proposes an end-to-end attention-based classifier to enhance robustness. For tabular data, TableFormer~\cite{yang2022tableformer} adopts structural-aware table-text encodings that are more robust to row and column order perturbations. 
However, these architectures are tailored for specific tasks, which require specific domain knowledge and can not be generalized across tasks.
\cite{han2022designing} proposes a general framework for self-attention modules via robust kernel density estimation (RKDE). However, this method introduces excess computation cost and shows relatively limited robustness improvement. 
Generally speaking, existing approaches either require substantial computations or rely on specific domain knowledge, which hinders their extensions to larger models or
broader application domains.

\section{ProTransformer}
\label{sec:alg}

The main goal of this paper is to design robust self-attention mechanisms that are more resilient to adversarial attacks so they can be applied to robustify Transformer architectures. 
In this section, we first provide a new interpretation of the self-attention mechanism in Transformer architecture as the weighted least-square token estimator in Section~\ref{sec:wls}. Then we propose robust token estimators that are more resilient to the dominating impact of input tokens in Section~\ref{sec:robust-estimator}. An efficient Newton-IRLS algorithm is derived with a convergence guarantee to approximate the robust token estimator in Section~\ref{sec:newton-irls-algorithm}. Finally, we describe how the proposed algorithm can be unrolled as robust attention layers to enhance the robustness of transformer architectures in Section~\ref{sec:robust-attention}.

\subsection{Attention Mechanism as WLS Token Estimator}
\label{sec:wls}

First, we provide a new perspective to formulate the vanilla attention mechanism as the weighted least squares (WLS) token estimator. 
In the self-attention layer, each output token $\vz$ aggregates the values of input tokens $\{\vv_j\}$ as their weighted sum according to the attention weights:
$\vz=\sum_{j=1}^N a_{j}\vv_j$,
where $\{a_{j}\}_{j\in [N]}$ are the attention weights and $\{\vv_j\}_{j\in [N]}$ are value vectors for all $N$ input tokens. 
This weighted sum can be interpreted as the optimal solution of the following weighted least squares (WLS) error minimization problem:
\begin{equation}
\label{eq:wlse}
\argmin_{\vz}\mathcal{L}(\vz)=\sum_{j=1}^N a_{j} \cdot \|\vv_j-\vz\|^2,
\end{equation}
whose first-order optimality condition ($\nabla\mathcal{L}(\vz)=\sum_{j=1}^N a_{j} \cdot 2(\vz-\vv_j)=\mathbf{0}$) yields 
$\vz=\sum_{j=1}^N a_{j}\vv_j$.

\textbf{Vulnerability analysis of vanilla attention.}
When adversaries perturb the input tokens, these tokens will dominate the impact on output tokens since the quadratic penalty on the residual $\|\vv_j-\vz\|^2$ will dominate the WLS estimator. Therefore, the output token $\vz$ will be shifted to those dominating input tokens. As a result, the adversarial input tokens will significantly impact the representation of output tokens. We also provide an empirical study to verify that adversarial attacks will significantly increase the residual $\|\vv_j-\vz\|^2$ in Appendix~\ref{sec:vulnearable_vanilla_attention}. Moreover, we simulate a mean estimation problem under outlier data points using synthetic data to better illustrate the sensitivity of the WLS estimator. The detailed setting and visualization results of the numerical simulation are provided in Appendix~\ref{sec:algorithm_guarantee}.

\subsection{Robust WLS Token Estimators }
\label{sec:robust-estimator}

The analysis above
provides a valid explanation of why 
various attention-based transformer architectures are easily  compromised by introducing adversarial perturbations in the input data. 
Also, our interpretation of the attention mechanism in transformers as WLS estimator provides a rigorous perspective to design robust alternatives. 
To dampen the effect of outlier data, multiple robust regression algorithms have been proposed in robust statistics using least absolute deviations~\citep{bloomfield1983least},  Huber regression~\citep{huber1973robust}, and Minimax Concave Penalty (MCP)~\citep{zhang2010nearly}.
Motivated by these advancements with rigorous robustness guarantees, we propose the robust 
weighted least squares token
estimators to enhance the resilience against potential adversarial attacks as follows:
\begin{wrapfigure}{r}{0.4\textwidth}
\vspace{-0.0in}
\centering
\includegraphics[width=0.4\textwidth]{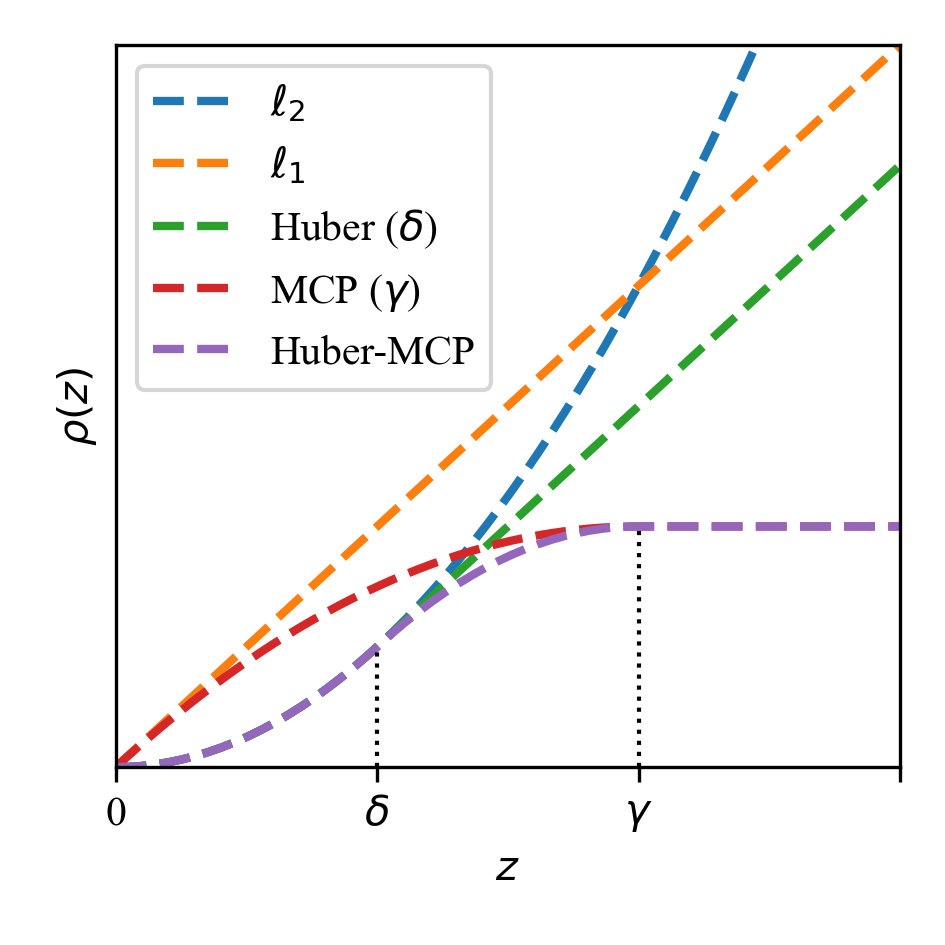}
\vspace{-0.24in}
\caption{\centering Different 
$\rho(z)$.}
\label{fig:diff_penalties}
\vspace{-0.3in}
\end{wrapfigure}
\begin{align}
\label{eq:robust_regression}
    \argmin_\vz \mathcal{L}(\vz)=\sum_{j=1}^{N} a_{j} \cdot \rho(\|\vv_j-\vz\|)
\end{align}

where $\rho$ can be flexibly replaced with the specific robust penalties in Figure~\ref{fig:diff_penalties}.

\textbf{Special cases of $\rho$.}
(1) The quadratic $\ell_2$ loss recovers vanilla WLS estimator; $\ell_1$ loss exerts linear effect on the residuals; 
(2) Huber loss
 performs as $\ell_2$ loss within the range $(0,\delta)$, and becomes similar to $\ell_1$ when $z>\delta$; 
(3) MCP loss behaves like $\ell_1$ loss near zero and becomes constant when $z$ is large than $\gamma$. 
(4) We also propose Huber-MCP to combine the advantage of Huber and MCP loss. The detailed formulations are available in Appendix~\ref{sec:special_case} due to the space limit.

\subsection{Newton-IRLS algorithm} 
\label{sec:newton-irls-algorithm}
The proposed robust token estimator in Eq.~\eqref{eq:robust_regression} is non-convex and non-smooth, posing a challenge for efficient algorithm design. 
Moreover, the exploding model size of evolving transformers further necessitates the design of efficient neural network layers. 
To this end, we propose an efficient Newton iterative reweighted least square (Newton-IRLS) algorithm to tackle this  challenging problem. We first design a localized upper bound for the original objective and then optimize the upper bound with a second-order Newton method. We also provide a rigorous theoretical loss descent guarantee. The precise statements are presented as follows and the detailed proof are provided in Appendix~\ref{sec:proof-qn}.

\textbf{Localized upper bound.}
Instead of directly optimizing the original loss function $\mathcal{L}(\vz)$ in Eq.~\eqref{eq:robust_regression}, 
we optimize a convex localized upper bound at the current iteration $\vz^{(k)}$ as follows:

\begin{lemma}[Localized Upper Bound] 
\label{lemma:local_upper_bound}
Suppose the loss objective is defined as in Eq.~\eqref{eq:robust_regression},
    where $\rho \circ \text{sqrt}(\cdot)$ is any non-convex function. For any fixed point $\vz^{(k)}$, there exists a convex localized upper bound as:
\begin{equation}
\label{eq:upper_bound}
\hat{\mathcal{L}}(\vz)=\sum_{j=1}^N a_j\cdot w^{(k)}_j\cdot \|\vv_j-\vz\|^2+  C(\vz^{(k)}),
\end{equation}
where $w_j^{(k)}=\frac{\rho^\prime(\|\vv_j-\vz^{(k)}\|)}{2\|\vv_j-\vz^{(k)}\|}$ and   $\rho^\prime$ is the first derivative of $\rho$.  Particularly, the constant $C(\vz^{(k)})$  guarantees the equality of $\hat{\cL}$ and $\cL$ at $\vz^{(k)}$, i.e., $\hat{\mathcal{L}}(\vz^{(k)})=\mathcal{L}(\vz^{(k)})$. 

\end{lemma}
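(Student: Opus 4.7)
}
The plan is to derive the bound from a first-order linearization of $\rho\circ\mathrm{sqrt}$ viewed as a function of the \emph{squared} residual $u=\|\vv_j-\vz\|^2$. Concretely, I would introduce the auxiliary scalar function $g(u)=\rho(\sqrt{u})$ for $u\ge 0$. The phrase ``$\rho\circ\mathrm{sqrt}$ is non-convex'' is naturally read, in the majorization-minimization tradition that IRLS belongs to, as $g$ being concave on $[0,\infty)$ (this is easily verified case-by-case for the $\ell_2$, $\ell_1$, Huber, MCP and Huber-MCP penalties listed above). I will therefore adopt that reading.

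Given concavity of $g$, the standard first-order characterization gives, for every $u,u_0\ge 0$,
\begin{equation*}
g(u)\;\le\;g(u_0)+g'(u_0)\,(u-u_0),
\end{equation*}
with equality at $u=u_0$. A direct chain-rule computation yields $g'(u_0)=\dfrac{\rho'(\sqrt{u_0})}{2\sqrt{u_0}}$. I then specialize to $u=\|\vv_j-\vz\|^2$ and $u_0=\|\vv_j-\vz^{(k)}\|^2$, so that $g'(u_0)$ coincides exactly with the weight $w_j^{(k)}=\rho'(\|\vv_j-\vz^{(k)}\|)/(2\|\vv_j-\vz^{(k)}\|)$ stated in the lemma. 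Multiplying the resulting inequality by the nonnegative attention weight $a_j$ and summing over $j$ produces
\begin{equation*}
\mathcal{L}(\vz)\;\le\;\sum_{j=1}^{N} a_j w_j^{(k)}\|\vv_j-\vz\|^2 \;+\; C(\vz^{(k)}),
\end{equation*}
where $C(\vz^{(k)}):=\sum_j a_j\bigl[\rho(\|\vv_j-\vz^{(k)}\|)-w_j^{(k)}\|\vv_j-\vz^{(k)}\|^2\bigr]$ depends only on $\vz^{(k)}$. The tightness statement $\hat{\mathcal{L}}(\vz^{(k)})=\mathcal{L}(\vz^{(k)})$ then follows by plugging $\vz=\vz^{(k)}$ into the display and using equality in the tangent inequality at $u=u_0$. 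Convexity of $\hat{\mathcal{L}}$ is immediate: each $\|\vv_j-\vz\|^2$ is convex, and the coefficients $a_j w_j^{(k)}$ are nonnegative whenever $\rho$ is nondecreasing (true for all four penalties), so $\hat{\mathcal{L}}$ is a nonnegative combination of convex quadratics plus a constant.

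The main obstacle I anticipate is not the algebra but the edge cases and assumptions. First, the weight $w_j^{(k)}$ is a $0/0$ expression when $\vv_j=\vz^{(k)}$; I would handle this by taking the limit $\lim_{u\to 0^+}g'(u)$, which is finite and well-defined for each of the penalties in Figure~\ref{fig:diff_penalties} (e.g.\ $1/2$ for $\ell_2$, $+\infty$ with a harmless convention for $\ell_1$ where the residual term vanishes, and the corresponding one-sided derivatives for Huber/MCP/Huber-MCP). Second, I would make the concavity hypothesis on $g$ explicit in the proof, and briefly verify it for the special cases in Section~\ref{sec:robust-estimator}, thereby justifying the slightly informal ``non-convex'' phrasing in the lemma statement. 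Once these points are settled, the construction of $\hat{\mathcal{L}}$ and its use as the inner problem solved by Newton in the Newton-IRLS algorithm follow cleanly.
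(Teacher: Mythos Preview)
Your proposal is correct and follows essentially the same approach as the paper: define $g(u)=\rho(\sqrt{u})$ (the paper calls it $\phi$), apply the tangent-line upper bound from concavity, compute $g'(u_0)$ by the chain rule to recover $w_j^{(k)}$, then substitute $u=\|\vv_j-\vz\|^2$, $u_0=\|\vv_j-\vz^{(k)}\|^2$ and sum with weights $a_j$. Your treatment is in fact more careful than the paper's, since you make the concavity reading of ``non-convex'' explicit, justify nonnegativity of the weights, and address the $0/0$ edge case, none of which the paper discusses.
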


\begin{proof}
Please refer to Appendix~\ref{sec:proof_local_upper_bound}.
\end{proof}

As  $C(\vz^{(k)})$ is treated as a constant during the optimization at the current step, the upper bound in Eq.~\eqref{eq:upper_bound} becomes convex and can be efficiently optimized.

\textbf{Newton-IRLS iteration.} After obtaining the convex upper bound $\hat{\cL}$ in Eq.~\eqref{eq:upper_bound},  
we can derive a concise closed-form iteration using the second-order Newton method as follows:
\begin{align}
    \vz^{(k+1)}
    &=\vz^{(k)}-\left[\nabla^2 \hat{\mathcal{L}}(\vz^{(k)})\right]^{-1}\nabla \hat{\cL}(\vz^{(k)})=\frac{\sum_j a_j\cdot w_j^{(k)}\cdot \vv_j}{\sum_j a_j\cdot w_j^{(k)}}.
    \label{eq:reweight}
\end{align}
Eq.~\eqref{eq:reweight} can be interpreted as a reweighted sum, in which the derived $w^{(k)}_j$ modifies the original attention score $a_j$ on the value vector $\vv_j$. 
 We leave detailed derivations of Newton-IRLS algorithm in Appendix~\ref{sec:proof_newton_irls}. Its convergence and rigorous loss descent are guaranteed by the following Theorem~\ref{thm:loss-descent}.

\begin{theorem}[Convergence guarantee]
\label{thm:loss-descent}
Suppose the loss objective $\cL(\vz)$ is defined as in Eq.~\eqref{eq:robust_regression} and its corresponding convex localized upper bound is in Eq.~\eqref{eq:upper_bound}. Then, through the iteration in Eq.~\eqref{eq:reweight}, the following inequality holds:
    \begin{equation}
     \cL(\vz^{(k+1)})\leq\hat{\cL}(\vz^{(k+1)})\leq\hat{\cL}(\vz^{(k)})=\cL(\vz^{(k)}),
    \end{equation}
that is, optimizing upper bound $\hat{\cL}$ can guarantee the rigorous descent of $\cL$.
\end{theorem}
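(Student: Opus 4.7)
The plan is to chain together the three relations in the stated inequality by recognizing this as a standard majorization–minimization (MM) argument, with \cref{lemma:local_upper_bound} serving as the workhorse. Specifically, I would treat the three pieces $\cL(\vz^{(k+1)})\le\hat{\cL}(\vz^{(k+1)})$, $\hat{\cL}(\vz^{(k+1)})\le\hat{\cL}(\vz^{(k)})$, and $\hat{\cL}(\vz^{(k)})=\cL(\vz^{(k)})$ independently, since each has a distinct justification.

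The rightmost equality is immediate: \cref{lemma:local_upper_bound} explicitly states that the constant $C(\vz^{(k)})$ is chosen so that tightness $\hat{\mathcal{L}}(\vz^{(k)})=\mathcal{L}(\vz^{(k)})$ holds at the anchor point, so no additional work is required. For the middle inequality, I would observe that with the weights $a_j w_j^{(k)}$ frozen at iteration $k$ and $C(\vz^{(k)})$ independent of $\vz$, the surrogate $\hat{\cL}(\vz)$ is a (strictly) convex quadratic in $\vz$; its unique global minimizer is obtained by setting $\nabla\hat{\cL}(\vz)=2\sum_j a_j w_j^{(k)}(\vz-\vv_j)=\vzero$, which recovers exactly the closed-form update in Eq.~\eqref{eq:reweight}. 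Equivalently, because $\nabla^2\hat{\cL}$ is constant, a single Newton step lands at the global minimizer in one shot. Hence $\vz^{(k+1)}=\argmin_\vz\hat{\cL}(\vz)$, which immediately gives $\hat{\cL}(\vz^{(k+1)})\le\hat{\cL}(\vz^{(k)})$.

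For the leftmost inequality, I would invoke the global majorization property $\cL(\vz)\le\hat{\cL}(\vz)$ for all $\vz$, which is the substantive content of \cref{lemma:local_upper_bound} (the word ``upper bound'' is meant globally, not just pointwise at $\vz^{(k)}$). The underlying mechanism is concavity of $\rho\circ\text{sqrt}$: for a concave $\varphi$ with $\varphi(s)=\rho(\sqrt{s})$, the tangent inequality $\varphi(s)\le\varphi(s_k)+\varphi'(s_k)(s-s_k)$ applied with $s=\|\vv_j-\vz\|^2$ and $s_k=\|\vv_j-\vz^{(k)}\|^2$ gives precisely $\rho(\|\vv_j-\vz\|)\le\rho(\|\vv_j-\vz^{(k)}\|)+\tfrac{\rho'(\|\vv_j-\vz^{(k)}\|)}{2\|\vv_j-\vz^{(k)}\|}\bigl(\|\vv_j-\vz\|^2-\|\vv_j-\vz^{(k)}\|^2\bigr)$, and weighted summation over $j$ yields $\cL(\vz)\le\hat{\cL}(\vz)$ after absorbing residual terms into $C(\vz^{(k)})$. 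Applying this at $\vz=\vz^{(k+1)}$ closes the chain.

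The main obstacle, if any, is purely conceptual bookkeeping: ensuring that the ``upper bound'' in \cref{lemma:local_upper_bound} is invoked globally (so that it applies at the new iterate $\vz^{(k+1)}$, not merely at $\vz^{(k)}$), and confirming that a Newton step on a quadratic produces the global minimizer in one iteration. Both are routine once the MM structure is made explicit, so the proof is essentially an assembly of three bullet points rather than a technical derivation; the heavier lifting was already done inside \cref{lemma:local_upper_bound}.
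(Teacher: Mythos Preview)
Your proposal is correct and follows essentially the same majorization--minimization argument as the paper: invoke the global upper bound $\cL(\vz)\le\hat{\cL}(\vz)$ and the tightness $\hat{\cL}(\vz^{(k)})=\cL(\vz^{(k)})$ from \cref{lemma:local_upper_bound}, and use that $\vz^{(k+1)}$ minimizes the convex quadratic surrogate to obtain $\hat{\cL}(\vz^{(k+1)})\le\hat{\cL}(\vz^{(k)})$. Your write-up is in fact more explicit than the paper's, spelling out why a single Newton step on a quadratic lands at the global minimizer and unpacking the concavity-of-$\rho\circ\text{sqrt}$ mechanism behind the majorization.
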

\begin{proof}
Please refer to Appendix~\ref{sec:proof_loss_descent}.
\end{proof}

Although the loss $\cL(\vz)$ is not necessarily convex and does not possess a global optimum,  Theorem~\ref{thm:loss-descent} guarantees that  the Newton-IRLS iteration, which optimizes $\hat{\cL}(\vz)$, can rigorously reduce the original loss $\cL(\vz)$. The algorithm analyses in Appendix~\ref{sec:algorithm_guarantee}, along with the main experiments in Section~\ref{sec:language} and Section~\ref{sec:beyond},  validate that the local optimal solution achieved by our algorithm performs well in terms of both convergence and empirical robustness.

\textbf{Robust token estimator by reweighting the tokens}.  The robust estimator in Eq.~\eqref{eq:robust_regression} provides a general framework that covers several special cases. 
By choosing different penalty functions $\rho$ on the residuals $\|\vv_j-\vz^{(k)}\|$, we obtain various reweighting schemes in Eq.~\eqref{eq:reweight}. Take the MCP function as the instance, the weight is derived as $w_j^{(k)}=\frac{\rho_\gamma^\prime(\|\vv_j-\vz^{(k)}\|)}{2\|\vv_j-\vz^{(k)}\|}=\max \left[\frac{1}{\|\vv_j-\vz^{(k)}\|} - \frac{1}{\gamma},0\right]$. Obviously, the weight $w_j^{(k)}$ becomes smaller as  $\|\vv_j-\vz^{(k)}\|$ increases, thereby down-weighting the large residuals. The residuals will be completely removed when it exceeds the threshold $\gamma$, since the weight then becomes $0$.  The complete discussions for all cases  are provided in Appendix~\ref{sec:special_case}.

\subsection{ProAttention: Robust Attention Layers}
\label{sec:robust-attention}

In the previous subsection, we formulate the token-wise Newton-IRLS approach for notation simplicity. 
Here, we will present the corresponding matrix version for the entire attention layer.

\textbf{Matrix Form.} 
Denote $\vV= \{\vv_j\}_{j\in [N]}$ and $\vA=\{a_{ij}\}_{i,j\in [N]}$ are value matrix and the attention matrix, respectively. 
$\vZ^{(k)}=\{\vz^{(k)}_i\}_{i\in[N]}$ is the estimator for token $i$ at the $k$-th iteration. Subsequently, the pairwise distance $\vD^{(k)}=\{\|\vv_j-\vz^{(k)}_i\|\}_{i,j\in[N]}$ between $\vZ^{(k)}$ and $\vV$ can be  efficiently computed using the \texttt{torch.cdist} function in PyTorch. Following this, the weight $\vW^{(k)}=\{w^{(k)}_{ij}\}_{i,j\in[N]}$ can be calculated element-wise based on $\vD^{(k)}$. Then the next step $\vZ^{(k+1)}$ is updated as a reweighted matrix multiplication $(\vW^{(k)}\odot\vA)\cdot\vV$.

\begin{minipage}{.45\textwidth}
\textbf{Plug-and-Play Robust Attention.}
The proposed algorithm can be packaged as a robust attention module, which can be readily plugged into the transformers as a \textbf{P}lug-and-Play \textbf{Ro}bust 
Attention (\textbf{ProAttention}) layer 
without 
additional
training or fine-tuning as 
shown in Figure~\ref{fig:robust_attention}.
The implementation of 
\end{minipage}
\hfill
\begin{minipage}{.5\textwidth}
\vskip -0.15in
\begin{algorithm}[H]
\caption{ProAttention (MCP)}
\label{alg:irls}
\includegraphics[width=1.0\linewidth]{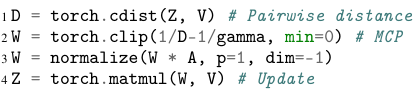}
\end{algorithm}
\end{minipage}
ProAttention using MCP penalty in PyTorch is shown in Algorithm~\ref{alg:irls}. The complete pseudocode for other penalties is presented in 
in Appendix~\ref{sec:pseudocode}.

\textbf{Complexity analysis.}  Let $N$, $D$, and $K$ represent the length of tokens, the dimension of vectors, and the steps of the iterations, respectively.  The vanilla attention requires $2\cdot N\times N\times D$ basic operations while our ProAttention needs $(1+2K)\cdot N\times N\times D$. However, our ProAttention remains efficient, as the Newton-IRLS method can effectively approximate the solution 
within only $3$ steps ($K\leq 3$) (Figure~\ref{fig:ag_ablation} (a)) and ProTransformers do not introduce additional computation for training or fine-tuning.
We provide the detailed complexity analysis of various attentions in Appendix~\ref{sec:complexity}.

\textbf{Advantages.} Our proposed ProAttention enjoys the following advantages: (1) \textit{Simplicity}: it is simple and easy to implement with only 4 core lines of code in Algorithm~\ref{alg:irls};
(2) \textit{Efficiency}: it is a plug-and-play layer that can be integrated into any trained transformer without additional training or fine-tuning; 
(3) \textit{Universality}: it is a universal framework that advances the vanilla attention mechanism into a series of robust derivatives with different penalties. Moreover, it can be applied to any attention-based model across various modalities and tasks.

In the following sections, we will present comprehensive experiments and 
studies to validate the effectiveness of the proposed ProAttention on language modeling in Section~\ref{sec:language} as well as computer vision and graph learning in Section~\ref{sec:beyond}.


\section{Experiment on Language Modeling}
\label{sec:language}

In this section, we evaluate the effectiveness of the proposed 
ProAttention and ProTransformer 
under classic text attacks on pre-trained language models, and two prompting-based attacks (prompt attack and jailbreak attack) in the context of LLMs with comprehensive ablation studies.

\subsection{Experiment Setting}
\label{sec:setting}

\textbf{Tasks and Datasets.} 
For topic classification, we use  AG's News Corpus (AGNEWS)~\citep{Zhang2015CharacterlevelCN}.
For sentiment analysis, we utilize two widely-used datasets: Internet Movie Database (IMDB)~\citep{maas-EtAl:2011:ACL-HLT2011} and Stanford Sentiment Treebank (SST-2)~\citep{socher-etal-2013-recursive}. 
 For textual entailment, we make use of  Recognizing Textual Entailment (RTE) in the General Language Understanding Evaluation benchmark~\citep{wang2019glue}. 
For jailbreak attack, we select a new dataset Behaviors introduced in~\citep{zou2023universal}.
For the detailed information on these datasets, please refer to Appendix~\ref{sec:dataset}.

\textbf{Backbone Architectures.} For classical pre-trained language models, we choose BERT~\citep{devlin2018bert} and its variants including  RoBERTa~\citep{liu2019roberta}, ALBERT~\citep{lan2020albert} and DistilBERT~\citep{sanh2019distilbert}. For large language models (LLMs), we choose T5~\citep{raffel2023exploring}, LLaMA~\citep{touvron2023llama} and Vicuna~\citep{vicuna2023}. For the detailed information on backbone architectures, please refer to Appendix~\ref{sec:backbones}.

\textbf{Attacks.} We not only evaluate several classic text attacks but also include popular prompt attacks and jailbreak attacks on the LLMs. The three attack mechanisms and their differences are illustrated in Figure~\ref{fig:three_attack}.
For classic text attacks, we evaluate the attacks at various levels, including the character-level DeepWordBug~\citep{gao2018black}, word-level PWWS~\citep{ren2019generating}, TextFooler~\citep{jin2020bert}, and multi-level TextBugger~\citep{li2018textbugger}. 
For prompt attacks, we modify the prompt template according to the aforementioned text attacks following the evaluation setting in PromptBench~\citep{zhu2024promptbench}.
For jailbreak, we evaluate the suffix attack 
using Greedy
Coordinate Gradient (GCG) method~\citep{zou2023universal} and we test both attacks transferred from surrogate model Vicuna (transfer attack) and attacks directly targeting the victim models (adaptive attack).
Please refer to Appendix~\ref{sec:attack} for details on attacks.

\textbf{Defense Baselines}. We include the following defense baselines in our experiments: MixADA~\citep{si2020better}, 
  PGD-Adv~\citep{madry2017towards},
FreeLB~\citep{zhu2019freelb}, 
TA-VAT~\citep{li2021token} and SmoothLLM~\citep{robey2023smoothllm}. Additionally, we also include the adversarial training (AT), wherein the augmented perturbations are generated by the attack to be assessed. 
Details of these defense methods are provided in Appendix~\ref{sec:baselines}. 

\textbf{Evaluation metrics.} Following~\cite{li2021searching}, we use 3 metrics to evaluate the model performance. Clean accuracy (\textbf{Clean\%}) is the model accuracy on the clean testing data. Accuracy under attack (\textbf{AUA\%}) is the accuracy on the perturbed data under specific attack. Attack success rate (\textbf{ASR\%}) is the ratio of the number of successfully perturbed cases divided by the number of attempted texts. 

\textbf{Hyperparameters.} For text attack setting, we follow the setting in the TextAttack framework~\citep{morris2020textattack}. 
For prompt attack, we follow the setting in PromptBench~\citep{zhu2024promptbench}.
For GCG-based jailbreak attack, we follow the setting in~\cite{robey2023smoothllm}.
The detailed attack settings can be found in Appendix~\ref{sec:attack}. 
For defense baselines, we follow the settings in their original papers.
For our ProTransformer, we set the default number of ProAttention layers as $K=3$ since it can quickly converge to a reasonable precision within 3 layers. 
Finally we 
tune  $\delta$ (default 1)  or $\gamma$ (default 4) in the penalties (Huber and MCP loss) to obtain the optimal parameters.

\subsection{Classic Text Attacks on Language Models}

To demonstrate the effectiveness of the proposed ProTransformer, 
we compare the robustness of our methods with several popular defenses in three classical tasks: topic classification, sentiment analysis, and textual entailment. 

\subsubsection{Adversarial Robustness}

\begin{table}[h!]
\vskip -0.1in
\caption{The results of topic classification on AGNEWS.}
\label{tab:ag_adv}
\vskip 0.0in
\centering
\resizebox{1.0\linewidth}{!}{
\setlength{\tabcolsep}{2pt}
\begin{tabular}{lcccccccccccccc}
\toprule
 &   &  \multicolumn{2}{c}{Textfooler}& \multicolumn{2}{c}{TextBugger}&\multicolumn{2}{c}{DeepWordBug}&\multicolumn{2}{c}{PWWS}\\
 Model&Clean\% $\uparrow$ &Aua\% $\uparrow$&ASR\% $\downarrow$&AUA\% $\uparrow$&ASR\% $\downarrow$&AUA\% $\uparrow$&ASR\% $\downarrow$&AUA\% $\uparrow$&ASR\% $\downarrow$\\

\hline
ALBERT &93.0&20.6&77.9&26.1&71.9&38.9&58.2&35.9&61.4\\
Pro-ALBERT (MCP) (Ours)&93.8&\underline{48.9}&\underline{47.3}&41.8&55.3&59.5&35.9&\underline{63.1}&\underline{32.0}\\
\hline
DistilBERT &93.5&13.2&85.9&33.6&63.4&30.0&67.9&36.5&61.0\\
Pro-DistilBERT (MCP) (Ours)&93.9&29.3&68.5&48.7&47.9&34.3&63.1&50.5&45.6\\

\hline

RoBERTa &93.4&13.0&86.1&32.5&64.5&41.2&55.9&34.0&63.6\\
Pro-RoBERTa (MCP) (Ours)&93.7&24.4&73.7&34.3&62.8&45.5&51.5&39.4&57.5\\

\hline
BERT&\underline{94.2}&19.7&78.9&31.7&67.5&37.5&59.8&43.1&53.8\\
+ FreeLB &\underline{94.2}&38.0&59.5&42.8&55.5&\underline{56.1}&\underline{40.9}&57.0&39.9\\
+ PGD &94.1&36.8&61.7&40.5&57.1&47.6&49.7&48.7&48.6\\
+ MixADA &94.3&35.6&62.4&35.4&62.9&38.2&50.5&46.8&50.4\\
+ TA-VAT&\textbf{94.4}&36.2&61.8&39.2&58.2&49.5&48.1&47.0&50.7\\
+ AT&94.1 &42.1&54.8& \underline{56.1}&\underline{39.4}& 42.4&54.1&62.6&32.5\\
\hline

Pro-BERT ($\ell_1$) (Ours)&\underline{94.2}&23.8&74.5&43.8&53.0&48.7&47.8&46.5&50.1\\
Pro-BERT (Huber) (Ours)&\underline{94.2} &24.2&74.0&43.7&52.9&46.0&50.5&48.4&47.9\\
Pro-BERT (MCP) (Ours)&  93.2 &                  39.2&57.7&48.3&48.5&51.8&43.8&56.2&39.2 \\

Pro-BERT (MCP) + AT (Ours)&94.0 &\textbf{ 56.8}&\textbf{38.9}&\textbf{60.7}&\textbf{35.1}&\textbf{61.0}&\textbf{34.1}&\textbf{68.8}&\textbf{25.7}\\
\bottomrule
\end{tabular}

}

\end{table}

\textbf{Performance analysis.} The experimental results of topic classification (AGNEWS) are presented in Table~\ref{tab:ag_adv}, and we provide the results of sentiment analysis (IMDB) and  textual entailment (RTE) in 
Appendix~\ref{sec:exp_imdb} and ~\ref{sec:exp_rte} due to the space limit. From the experiment results, we can make the following observations: 
\begin{itemize}
[leftmargin=0.1in]
    \item The proposed ProAttention is a highly effective plug-in module that significantly and consistently enhances the robustness of various transformer backbones across various adversarial attacks. Taking AGNEWS as the instance, when combined with ProAttention (MCP), under the attacks \{Textfooler, TextBugger, DeepWordBug, PWWS\}: (1) ALBERT is improved by \{28.3\%,	15.7\%,	20.6\%,	27.2\%\} (2) DistilBERT is improved by \{16.1\%,	15.1\%,	4.3\%,	14.0\%\}  (3) RoBERTa is improved by \{11.4\%,	1.8\%,	4.3\%,	5.4\%\} (4) BERT is improved by \{19.5\%,	16.6\%,	14.3\%,	13.1\%\}. 
    \item Our method, Pro-BERT (MCP) + AT, exhibits best robustness among all the baselines. By simply plugging in ProAttention (MCP) module without fine-tuning, our Pro-BERT can achieve comparable robustness to most adversarial training-based methods which require substantial computational time and resources. Furthermore,  our framework is  orthogonal to most existing defenses, allowing for combined use with them to further enhance robustness. For instance, when combined with AT technique, our Pro-BERT (MCP) + AT can further improve BERT + AT by \{14.7\%,	4.6\%,	18.6\%,	6.2\%\} under \{TextFooler, TextBugger, DeepWordBug, PWWS\}.
    
\end{itemize}

\subsubsection{Ablation Study}
\label{sec:ablation_study_main}

\begin{figure}[h!]
\vskip -0.1in
\begin{center}
\subfigure[Convergence] {\includegraphics[width=0.23\columnwidth]{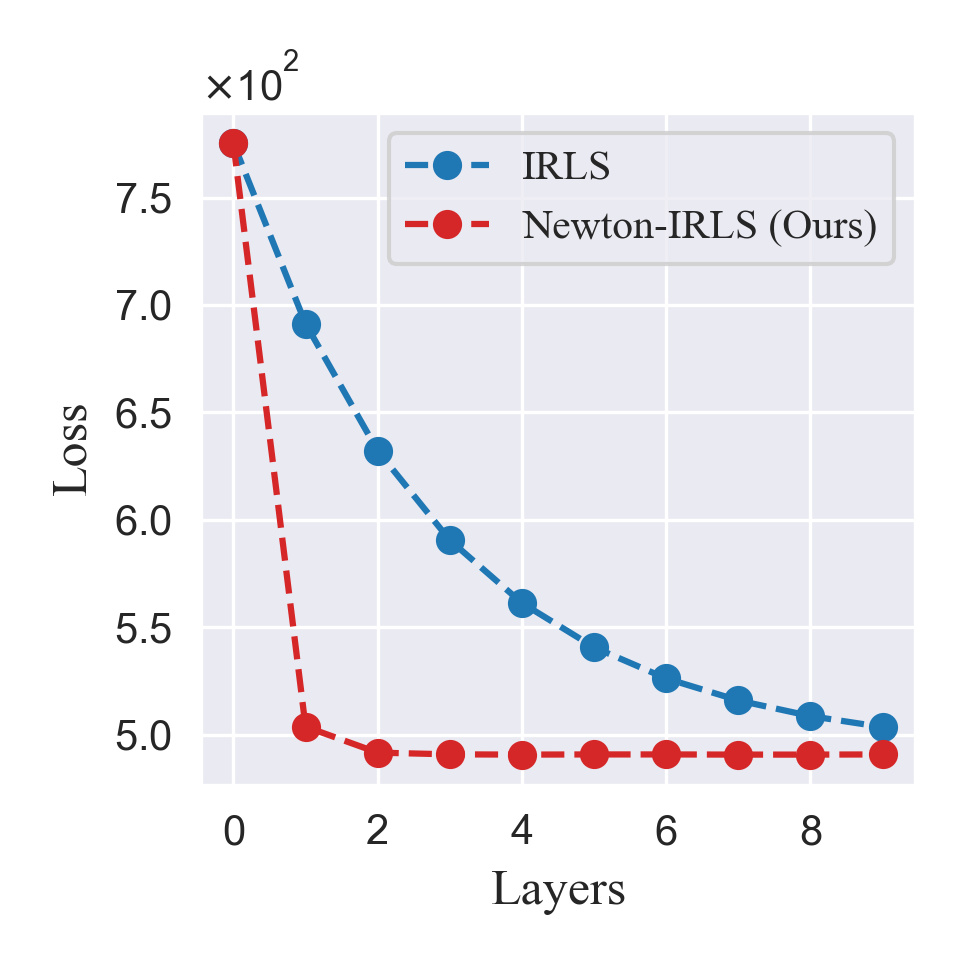}}
\subfigure[Adversarial Training] {\includegraphics[width=0.22\columnwidth]{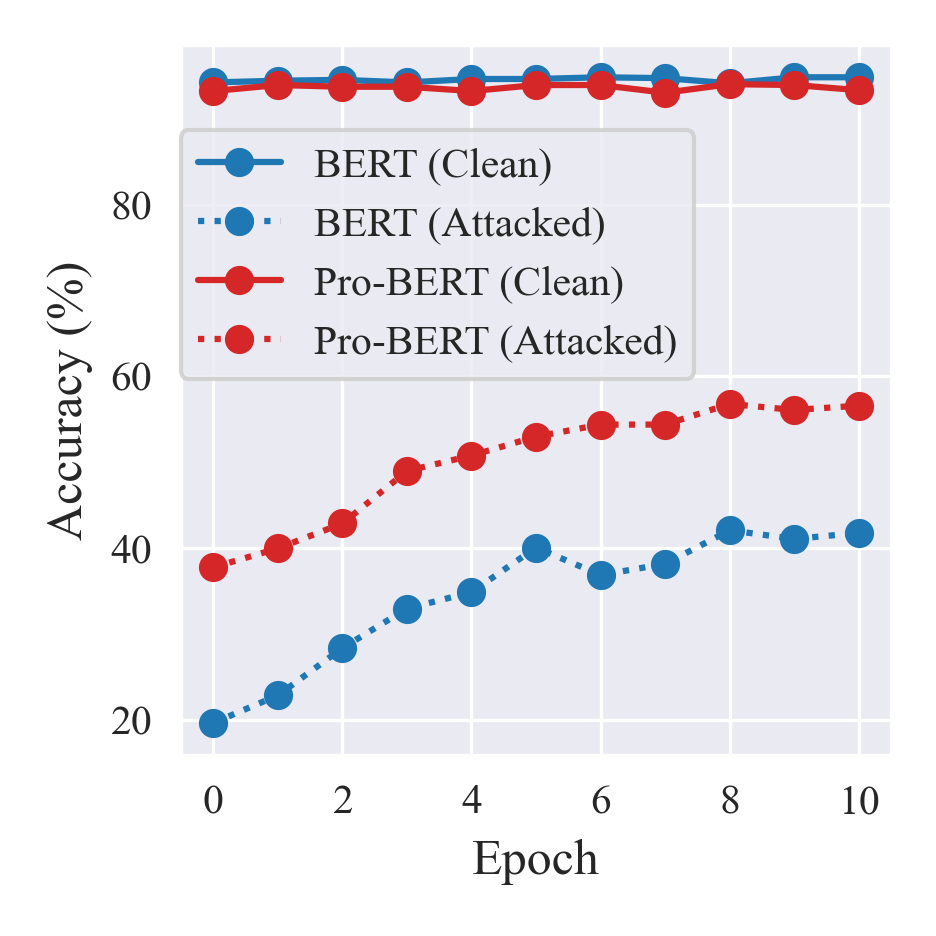}}
\subfigure[\centering Attack Constraints] 
{\includegraphics[width=0.2\columnwidth]{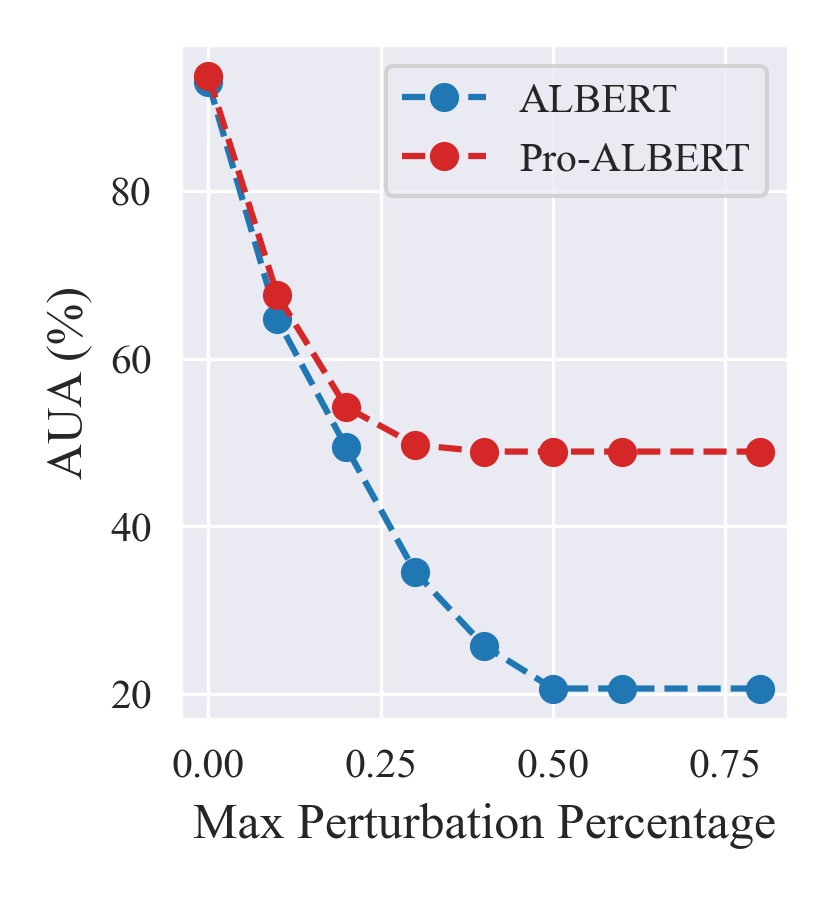}}
\subfigure[Different Penalties] {\includegraphics[width=0.26\columnwidth]{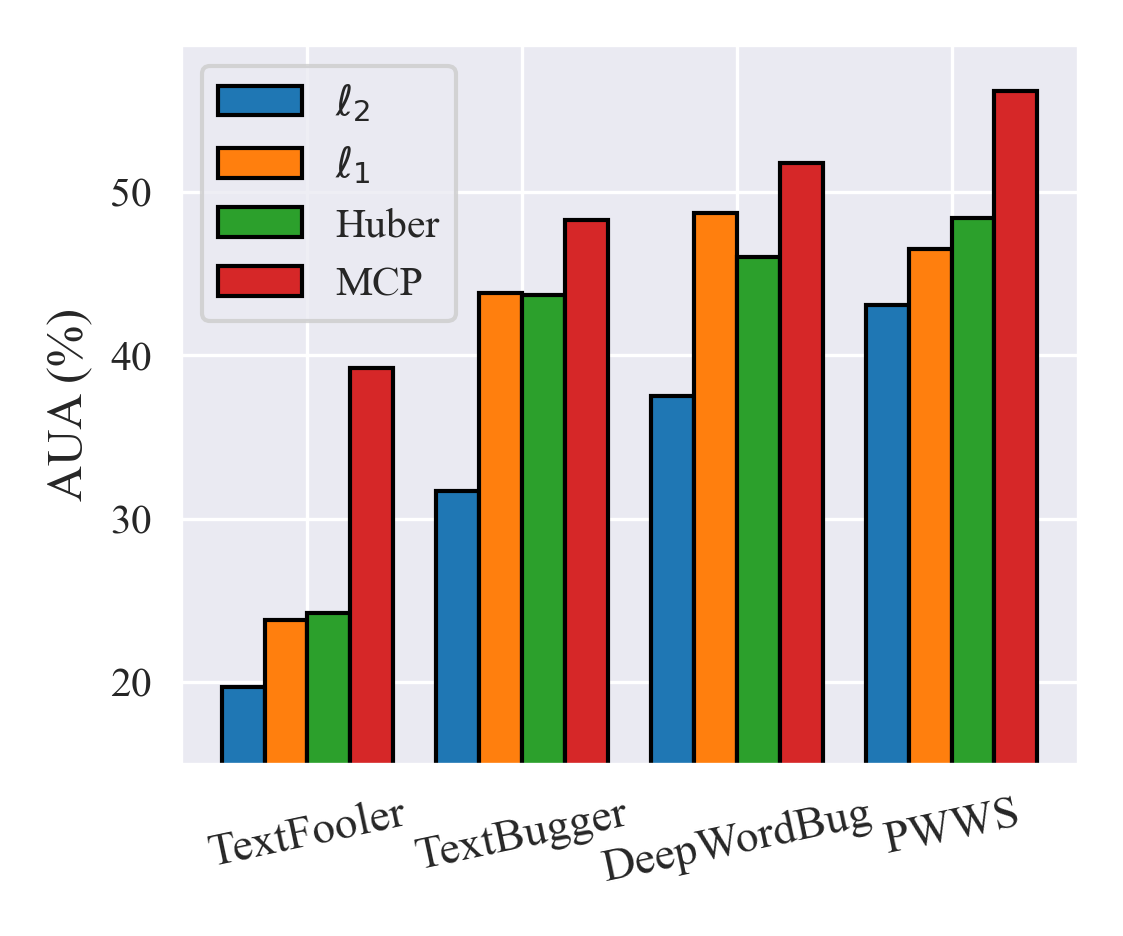}}
\caption{Ablation studies.}
\label{fig:ag_ablation}

\end{center}
\vskip -0.1in
\end{figure}

\textbf{Convergence.} 
To validate the advantage of our Newton-IRLS over the first-order method, we conduct a simulation experiment and plot the loss descent curves in Figure~\ref{fig:ag_ablation} (a). 
It can be observed that Newton-IRLS exhibits efficient convergence as claimed in Section~\ref{sec:newton-irls-algorithm}.
We provide the experiment details, loss descent curves (Figure~\ref{fig:loss_curve}), and the visualization of trajectories (Figure~\ref{fig:trajectory}) of the updated vectors in 2D plane in Appendix~\ref{sec:algorithm_convergence_guarantee} to further demonstrate the effectiveness of our algorithm.

\textbf{Adversarial fine-tuning.} 
To get insight into how the models gain more robustness from adversarial examples, we track the training curves of adversarial fine-tuning under TextFooler in Figure~\ref{fig:ag_ablation} (b), and put the results of other attacks in Figure~\ref{fig:ag_adv} in Appendix~\ref{sec:adv_train}. We can observe that our Pro-BERT (MCP) is compatible with  adversarial fine-tuning technique to further enhance the model resilience.

\textbf{Attack constraints.} 
In text attack, there are several kinds of attack constraints including the maximum percentage of perturbed words, minimum cosine similarity between the replaced synonym and original word, and minimum sentence similarity threshold between the original sentence and perturbed sentence. 
We test the values of these constraints in TextFooler. 
We present the results under different perturbation percentages in Figure~\ref{fig:ag_ablation} (c) and other constraint measurements in Appendix~\ref{sec:attack_constraint}. 
 From the results, we observe that our Pro-ALBERT (MCP) can significantly outperform the backbone ALBERT across all ranges of constraints.

\textbf{Different penalties.} Our Newton-IRLS is flexible to be formulated as different robust estimators with different penalties. From the comparison in  Figure~\ref{fig:ag_ablation} (d) 
,  it can be observed that our robust framework can consistently improve the robustness of the backbone BERT ($\ell_2$). Specifically, $\ell_1$ and Huber-based defenses are comparable, and MCP-based method exhibits the best performance. 

\textbf{Different backbones.} Our method is a general plug-and-play layer applicable to various transformer backbones. The results in Table~\ref{tab:ag_adv} and the ablation study on different backbones in Appendix~\ref{sec:backbone} (Figure~\ref{fig:ag_diff_backbone}) demonstrate that ProAttention improves the robustness over 
various architecture backbones (BERT, RoBERTa, DistilBERT and ALBERT) against various attacks with significant margins.

\begin{wraptable}{r}{0.5\textwidth}
\caption{Average running time (ms) on AGNEWS.}
\setlength\tabcolsep{2.2pt}
\renewcommand{\arraystretch}{1.2}
\label{tab:running_time}
\centering
\resizebox{1.0\linewidth}{!}{
\begin{tabular}{cccccccccccccccccc}
\toprule
&BERT&\multicolumn{6}{c}{Pro-BERT (MCP)}\\
\hline
Running time (ms)& 6.14& 9.04 & 11.67 & 14.34 & 17.33 & 19.89 & 21.87 
 \\
\# Layers ($K$)&$\backslash$&1&2&3&4&5&6
\\

\bottomrule
\end{tabular}
}

\vskip -0.1in

\end{wraptable}

\textbf{Running time.} To empirically evaluate the efficiency of our method, we test the average running time on AGNEWS using BERT and Pro-BERT (MCP) equipped with multi-layer ($K$) ProAttention. The results in Table~\ref{tab:running_time} show that our ProAttention only requires 1-2 times additional inference time of the backbone model yet achieves significant improvement in robustness without training.

\subsection{Adversarial Prompting Attacks on LLMs}
In the context of prompt-based generative AI, 
the adversarial attacks mechanisms on LLMs become more enriched and sophisticated. In this section, we will evaluate the robustness of our proposed ProTransformer under two popular attacks: \textit{prompt attack} and  \textit{jailbreak attack}.

\subsubsection{Prompt Attack}

As shown in Figure~\ref{fig:three_attack}, the most significant distinction between prompt attacks and classical text attacks is that prompt attacks aim to mislead the models by altering the prompt template rather than the input content.
We display the results of T5 in Figure~\ref{fig:t5_diff_model_tf} and leave the comprehensive study in Appendix~\ref{sec:exp_t5}. We also present the results on LLaMA in Appendix~\ref{sec:exp_llama}.
From the results, we can make the following observations: (1) For T5, the choice of the penalty would affect the robustness of defenses. Specifically, Pro-T5 (MCP) exhibits a significant advantage over other methods, and this advantage becomes even more evident as the number of perturbed words increases. Pro-T5 ($\ell_1$) and Pro-T5 (Huber) show a slight improvement over the backbone model T5.
(2) For LLaMA, Huber-MCP and Huber-based methods exhibit better robustness than other methods while preserving good clean performance. The detailed experiments and discussions can be found in Appendix~\ref{sec:exp_llama}.

\begin{figure}[h!] 
\centering 
\begin{minipage}[t]{0.36\textwidth}
\centering
\includegraphics[width=1.0\textwidth]{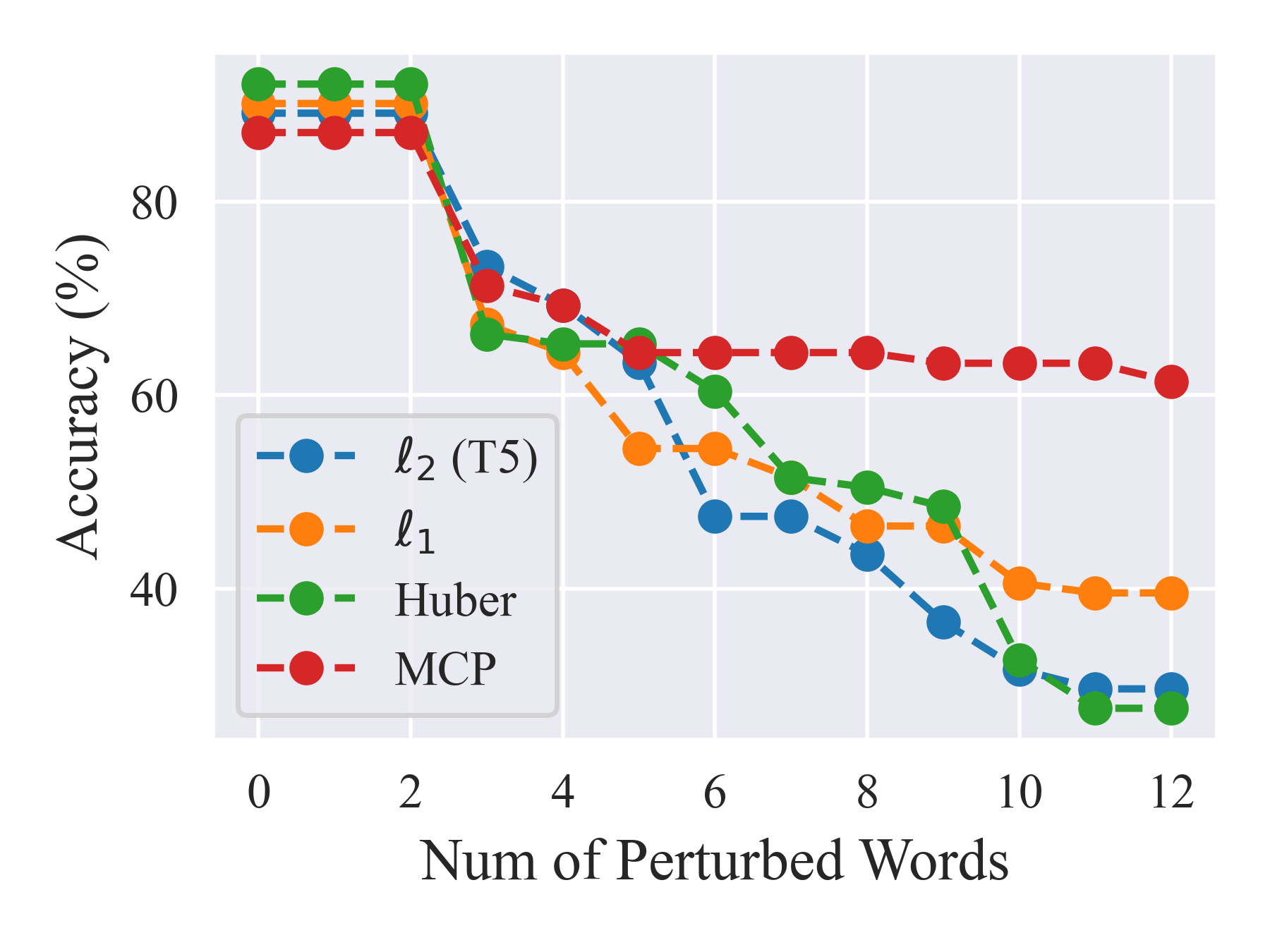}
\vspace{-0.2in}
\caption{Prompt attack results.}
\label{fig:t5_diff_model_tf}
\end{minipage}
\hspace{0.1pt}
\begin{minipage}[t]{0.62\textwidth} 
\centering
\includegraphics[width=1.0\textwidth]{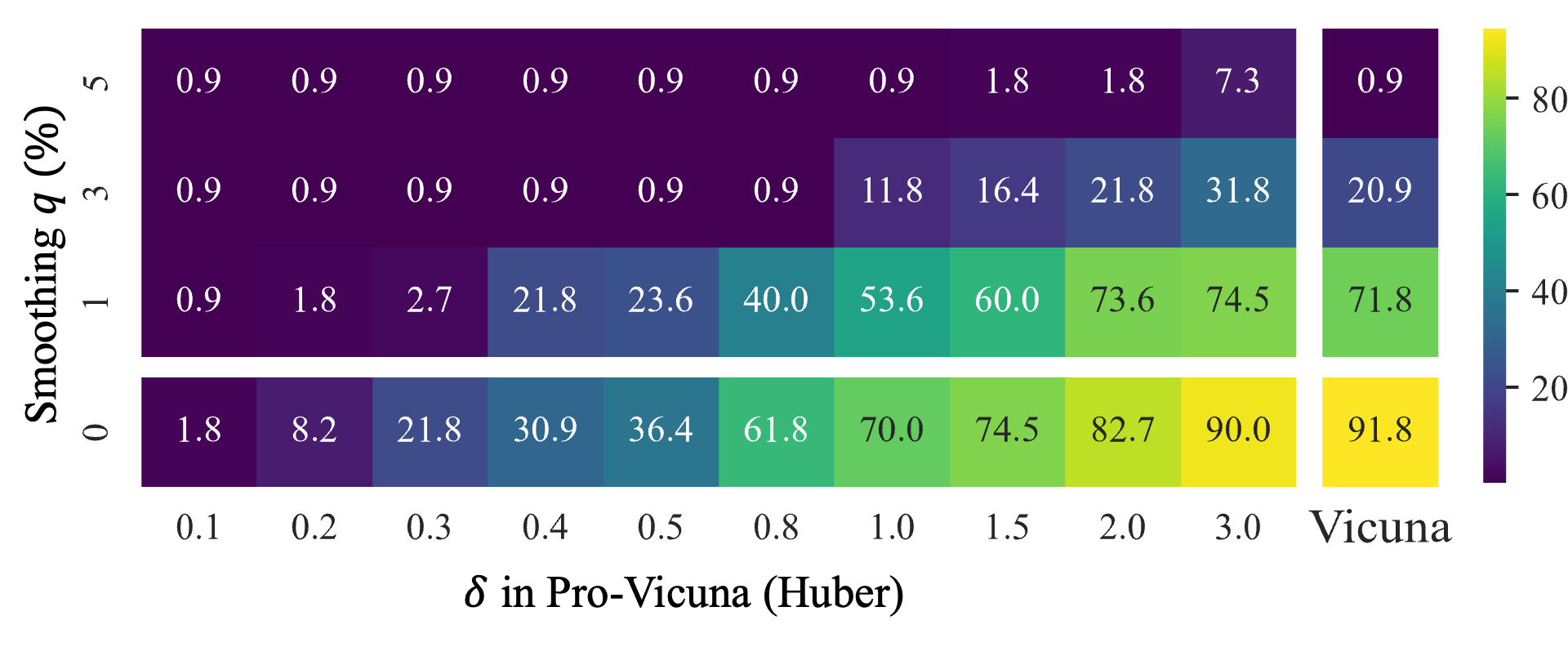}
\vspace{-0.2in}
\caption{Attack success rates (ASRs) under transfer jailbreak. }
\label{fig:jailbreak_transfer}
\end{minipage} 

\end{figure}

\subsubsection{Jailbreak Attack}

In recent years, prompts have played a pivotal role in guiding models to generate desired outputs. Nevertheless, there exist malicious "jailbreak prompts", which are intentionally designed to bypass the built-in safeguards in LLMs, causing the model to produce harmful content that violates the legal policies. As illustrated in Figure~\ref{fig:three_attack}, the suffix-injection jailbreaks attempt to append a non-semantic suffix to the user's prompt to fool the models. 
We select GCG method 
to evaluate the resilience of models comprehensively.

In Figure~\ref{fig:jailbreak_transfer}, we compare the Attack Success Rates (ASRs) of Vicuna and its corresponding Pro-Vicuna (Huber) with various $\delta$ values on Behaviors. In each  column, we also include SmoothLLM~\citep{robey2023smoothllm} with different smoothing extent $q (\%)$  to further reinforce the resilience of every single model. The last row of matrix ($q=0$) stands for the  performance without random smoothing.   The additional results of random smoothing with swap, insert and patch, as well as the results under adaptive jailbreaking attack are presented 
in Appendix~\ref{sec:jailbreak}.

From the results, we can observe that:
(1) Our Pro-Vicuna can significantly improve the robustness of Vicuna. As shown in the last row of Figure~\ref{fig:jailbreak_transfer}, 
with $\delta=0.1$, we successfully reduce the ASR to 1.8\%, which is comparable to the random smoothing defense that requires multiple random perturbations, inferences and aggregations.
(2) Our  ProAttention is orthogonal to randomized smoothing defense and can be combined with it to further improve the robustness.


\section{Experiment beyond Language Modeling}
\label{sec:beyond}

In the previous section, we have provided comprehensive experiments to validate the effectiveness of our ProTransformer in the (large) language models. In fact, as shown in Figure~\ref{fig:robust_attention}, our ProAttention is a fundamental module which can reinforce any attention-based models across various domains or modalities.
In this section, we will integrate ProAttention into vision models and graph learning models to further validate the effectiveness and generality of our approach.

\subsection{Image Classification}

\begin{wraptable}{r}{0.5\textwidth}
\vskip -0.1in
\caption{Adversarial robustness  under PGD.
}
\label{tab:cifar-adv-pgd}
\vskip -0.05in
\centering
\resizebox{1.0\linewidth}{!}{

\begin{tabular}{cccccccccc}
\toprule
Model$\backslash$Budget &  $0$ (Clean) & $1/255$&$4/255$&$8/255$\\
\hline
Deit&97.91&38.98&0.44&0.0\\
Convit&\underline{98.70}&\underline{41.75}&\underline{1.83}&0.0\\
BeiT&97.87&6.81&0.0&0.0\\
Swin&98.30&14.89&0.02&0.01\\
ViT&\textbf{98.74}&34.61&\underline{1.83}&\underline{0.26}\\
Pro-ViT (Ours) &98.40&\textbf{77.39}&\textbf{48.11}&\textbf{33.40}\\

\bottomrule

\end{tabular}

}

\vskip -0.1in
\end{wraptable}

In computer vision, we conduct two attacks (FGSM~\citep{goodfellow2014explaining} and PGD~\citep{madry2017towards}) on several vision transformers including ViT, BeiT, ConviT, DeiT and Swin. We perform the experiments on CIFAR-10 and ImageNet-1K across budgets  $\{1/255,4/255,8/255\}$, and
present the results of PGD on CIFAR-10 in Table~\ref{tab:cifar-adv-pgd} and additional experiments in Appendix~\ref{sec:vit}.
From the results, we can observe that
Pro-ViT can outperform the second best model by $\{35.64\%,46.28\%,33.14\%\}$ under different budgets.

\subsection{Graph Representation Learning}
\vskip -0.0in
\begin{wraptable}{r}{0.5\textwidth}
\begin{center}
\vskip -0.2in
\caption{\centering Adversarial robustness on Cora-ML.}
\label{tab:cora-adv}
\setlength{\tabcolsep}{0.1cm}

\resizebox{1.0\linewidth}{!}{
\begin{tabular}{cccccccccc}

\toprule
Model $\backslash$ Budget &  $0\%$ &$10\%$&$20\%$&$30\%$&$40\%$  \\

\midrule
GCN          & 85.0 \scriptsize{$\pm$ 0.4} & 69.6 \scriptsize{$\pm$ 0.5} & 60.9 \scriptsize{$\pm$ 0.7} & 54.2 \scriptsize{$\pm$ 0.6} & 48.4 \scriptsize{$\pm$ 0.5} \\
GNNGuard     & 83.1 \scriptsize{$\pm$ 0.7} & 70.2 \scriptsize{$\pm$ 1.0} & 63.1 \scriptsize{$\pm$ 1.1} & 57.5 \scriptsize{$\pm$ 1.6} & 51.0 \scriptsize{$\pm$ 1.2} \\
RGCN         & \underline{85.7 \scriptsize{$\pm$ 0.4}} & 69.1 \scriptsize{$\pm$ 0.4} & 59.8 \scriptsize{$\pm$ 0.7} & 52.8 \scriptsize{$\pm$ 0.7} & 46.1 \scriptsize{$\pm$ 0.7} \\
GRAND        & \textbf{86.1 \scriptsize{$\pm$ 0.7}} & 70.7 \scriptsize{$\pm$ 0.7} & 61.6 \scriptsize{$\pm$ 0.7} & 56.7 \scriptsize{$\pm$ 0.8} & 51.9 \scriptsize{$\pm$ 0.9} \\
ProGNN       & 85.6 \scriptsize{$\pm$ 0.5} & 71.0 \scriptsize{$\pm$ 0.5} & 63.0 \scriptsize{$\pm$ 0.7} & 56.8 \scriptsize{$\pm$ 0.7} & 51.3 \scriptsize{$\pm$ 0.6} \\
Jaccard-GCN  & 83.7 \scriptsize{$\pm$ 0.7} & 68.3 \scriptsize{$\pm$ 0.7} & 60.0 \scriptsize{$\pm$ 1.1} & 54.0 \scriptsize{$\pm$ 1.7} & 49.1 \scriptsize{$\pm$ 2.4} \\
SoftMedian   & 85.0 \scriptsize{$\pm$ 0.7} & \textbf{75.5 \scriptsize{$\pm$ 0.9}} & \underline{69.5 \scriptsize{$\pm$ 0.5}} & \underline{62.8 \scriptsize{$\pm$ 0.8}} & \underline{58.1 \scriptsize{$\pm$ 0.7}} \\ 
\hline
GAT          & 83.5 \scriptsize{$\pm$ 0.5} & \underline{71.2 \scriptsize{$\pm$ 1.2}} & 65.0 \scriptsize{$\pm$ 0.9} & 60.5 \scriptsize{$\pm$ 0.9} & 56.7 \scriptsize{$\pm$ 0.9} \\
Pro-GAT (ours) & 84.6 \scriptsize{$\pm$ 0.8} & \textbf{75.5 \scriptsize{$\pm$ 0.8}} & \textbf{72.1 \scriptsize{$\pm$ 0.4}} & \textbf{69.0 \scriptsize{$\pm$ 0.7}} & \textbf{66.5 \scriptsize{$\pm$ 1.2}} \\
\bottomrule
\end{tabular}
}
    
\end{center}
\vskip -0.1in
\end{wraptable}

Besides the language and vision domains, we also validate the effectiveness of our method in the graph domain.
We conduct the semi-supervised node classification task and leverage PGD adaptive attack~\citep{xu2019topology} to evaluate the robustness of models. We show the experiment results of Cora-ML and Citeseer, averaged over $5$ different random splits, in Table~\ref{tab:cora-adv} and Table~\ref{tab:cite-adv} (in Appendix~\ref{sec:graph}), respectively. The ablation studies on the layers and $\gamma$ in MCP are presented in Table~\ref{tab:cora-ablation}. Please refer to Appendix~\ref{sec:graph} for more detailed results and studies. From the results, we can conclude that our Pro-GAT significantly outperforms the backbone GAT and exhibits strong
robustness across various budgets while keeping good clean accuracy. 

\section{Conclusion \& Limitation}
\label{sec:conclusion}

In this paper, we delve into the robustness and security of the popular transformer-based architectures. We revisit the vulnerability of attention mechanisms with theoretical understanding and simulations. 
We propose an interpretable robust attention layer to robustify transformer architecture via a plug-and-play paradigm. Our proposed ProAttention is an effective, efficient, and universal framework that can significantly enhance the robustness of transformers across various tasks, architectures, attacks, and domains without additional training or fine-tuning. 

Regarding the limitations, despite the acceptable complexity of our ProTransformer, there is still potential to improve the efficiency of our models. Additionally, while we primarily claim and validate the effectiveness of our model under a plug-and-play paradigm, we are excited about the future of the proposed ProTransformer architecture and hope to see its full potential realized through training or fine-tuning on large models in the future.

\section*{Acknowledgment}
Zhichao Hou, Weizhi Gao, and Dr. Xiaorui Liu are supported by the National Science Foundation (NSF) National AI Research Resource Pilot Award, Amazon Research Award, NCSU Data Science Academy Seed Grant Award, and NCSU Faculty Research and Professional Development Award. 

\bibliography{ref}

\begin{thebibliography}{84}
\providecommand{\natexlab}[1]{#1}
\providecommand{\url}[1]{\texttt{#1}}
\expandafter\ifx\csname urlstyle\endcsname\relax
  \providecommand{\doi}[1]{doi: #1}\else
  \providecommand{\doi}{doi: \begingroup \urlstyle{rm}\Url}\fi

\bibitem[Bao et~al.(2022)Bao, Dong, Piao, and Wei]{bao2021beit}
Hangbo Bao, Li~Dong, Songhao Piao, and Furu Wei.
\newblock {BE}it: {BERT} pre-training of image transformers.
\newblock In \emph{International Conference on Learning Representations}, 2022.
\newblock URL \url{https://openreview.net/forum?id=p-BhZSz59o4}.

\bibitem[Bar-Haim et~al.(2014)Bar-Haim, Dagan, and Szpektor]{bar2014benchmarking}
Roy Bar-Haim, Ido Dagan, and Idan Szpektor.
\newblock Benchmarking applied semantic inference: The pascal recognising textual entailment challenges.
\newblock In \emph{Language, Culture, Computation. Computing-Theory and Technology: Essays Dedicated to Yaacov Choueka on the Occasion of His 75th Birthday, Part I}, pp.\  409--424. Springer, 2014.

\bibitem[Behjati et~al.(2019)Behjati, Moosavi-Dezfooli, Baghshah, and Frossard]{behjati2019universal}
Melika Behjati, Seyed-Mohsen Moosavi-Dezfooli, Mahdieh~Soleymani Baghshah, and Pascal Frossard.
\newblock Universal adversarial attacks on text classifiers.
\newblock In \emph{ICASSP 2019-2019 IEEE International Conference on Acoustics, Speech and Signal Processing (ICASSP)}, pp.\  7345--7349. IEEE, 2019.

\bibitem[Bentivogli et~al.(2009)Bentivogli, Clark, Dagan, and Giampiccolo]{bentivogli2009fifth}
Luisa Bentivogli, Peter Clark, Ido Dagan, and Danilo Giampiccolo.
\newblock The fifth pascal recognizing textual entailment challenge.
\newblock \emph{TAC}, 7:\penalty0 8, 2009.

\bibitem[Bloomfield \& Steiger(1983)Bloomfield and Steiger]{bloomfield1983least}
Peter Bloomfield and William~L Steiger.
\newblock \emph{Least absolute deviations: theory, applications, and algorithms}, volume~6.
\newblock Springer, 1983.

\bibitem[Branch et~al.(2022)Branch, Cefalu, McHugh, Hujer, Bahl, Iglesias, Heichman, and Darwishi]{branch2022evaluating}
Hezekiah~J Branch, Jonathan~Rodriguez Cefalu, Jeremy McHugh, Leyla Hujer, Aditya Bahl, Daniel del~Castillo Iglesias, Ron Heichman, and Ramesh Darwishi.
\newblock Evaluating the susceptibility of pre-trained language models via handcrafted adversarial examples.
\newblock \emph{arXiv preprint arXiv:2209.02128}, 2022.

\bibitem[Chiang et~al.(2023)Chiang, Li, Lin, Sheng, Wu, Zhang, Zheng, Zhuang, Zhuang, Gonzalez, Stoica, and Xing]{vicuna2023}
Wei-Lin Chiang, Zhuohan Li, Zi~Lin, Ying Sheng, Zhanghao Wu, Hao Zhang, Lianmin Zheng, Siyuan Zhuang, Yonghao Zhuang, Joseph~E. Gonzalez, Ion Stoica, and Eric~P. Xing.
\newblock Vicuna: An open-source chatbot impressing gpt-4 with 90\%* chatgpt quality, March 2023.
\newblock URL \url{https://lmsys.org/blog/2023-03-30-vicuna/}.

\bibitem[Dagan et~al.(2010)Dagan, Dolan, Magnini, and Roth]{dagan2010recognizing}
Ido Dagan, Bill Dolan, Bernardo Magnini, and Dan Roth.
\newblock Recognizing textual entailment: Rational, evaluation and approaches--erratum.
\newblock \emph{Natural Language Engineering}, 16\penalty0 (1):\penalty0 105--105, 2010.

\bibitem[Deng et~al.(2023)Deng, Liu, Li, Wang, Zhang, Li, Wang, Zhang, and Liu]{deng2023jailbreaker}
Gelei Deng, Yi~Liu, Yuekang Li, Kailong Wang, Ying Zhang, Zefeng Li, Haoyu Wang, Tianwei Zhang, and Yang Liu.
\newblock Jailbreaker: Automated jailbreak across multiple large language model chatbots.
\newblock \emph{arXiv preprint arXiv:2307.08715}, 2023.

\bibitem[Devlin et~al.(2018)Devlin, Chang, Lee, and Toutanova]{devlin2018bert}
Jacob Devlin, Ming-Wei Chang, Kenton Lee, and Kristina Toutanova.
\newblock Bert: Pre-training of deep bidirectional transformers for language understanding.
\newblock \emph{arXiv preprint arXiv:1810.04805}, 2018.

\bibitem[Dong et~al.(2021)Dong, Luu, Ji, and Liu]{dong2021towards}
Xinshuai Dong, Anh~Tuan Luu, Rongrong Ji, and Hong Liu.
\newblock Towards robustness against natural language word substitutions.
\newblock \emph{arXiv preprint arXiv:2107.13541}, 2021.

\bibitem[Dosovitskiy et~al.(2020)Dosovitskiy, Beyer, Kolesnikov, Weissenborn, Zhai, Unterthiner, Dehghani, Minderer, Heigold, Gelly, et~al.]{dosovitskiy2020image}
Alexey Dosovitskiy, Lucas Beyer, Alexander Kolesnikov, Dirk Weissenborn, Xiaohua Zhai, Thomas Unterthiner, Mostafa Dehghani, Matthias Minderer, Georg Heigold, Sylvain Gelly, et~al.
\newblock An image is worth 16x16 words: Transformers for image recognition at scale.
\newblock \emph{arXiv preprint arXiv:2010.11929}, 2020.

\bibitem[d’Ascoli et~al.(2021)d’Ascoli, Touvron, Leavitt, Morcos, Biroli, and Sagun]{d2021convit}
St{\'e}phane d’Ascoli, Hugo Touvron, Matthew~L Leavitt, Ari~S Morcos, Giulio Biroli, and Levent Sagun.
\newblock Convit: Improving vision transformers with soft convolutional inductive biases.
\newblock In \emph{International Conference on Machine Learning}, pp.\  2286--2296. PMLR, 2021.

\bibitem[Ebrahimi et~al.(2018)Ebrahimi, Rao, Lowd, and Dou]{ebrahimi2017hotflip}
Javid Ebrahimi, Anyi Rao, Daniel Lowd, and Dejing Dou.
\newblock Hotflip: White-box adversarial examples for text classification.
\newblock In \emph{Proceedings of the 56th Annual Meeting of the Association for Computational Linguistics (Volume 2: Short Papers)}, pp.\  31--36, 2018.

\bibitem[Feng et~al.(2020)Feng, Zhang, Dong, Han, Luan, Xu, Yang, Kharlamov, and Tang]{feng2020graph}
Wenzheng Feng, Jie Zhang, Yuxiao Dong, Yu~Han, Huanbo Luan, Qian Xu, Qiang Yang, Evgeny Kharlamov, and Jie Tang.
\newblock Graph random neural networks for semi-supervised learning on graphs.
\newblock \emph{Advances in neural information processing systems}, 33:\penalty0 22092--22103, 2020.

\bibitem[Gao et~al.(2018)Gao, Lanchantin, Soffa, and Qi]{gao2018black}
Ji~Gao, Jack Lanchantin, Mary~Lou Soffa, and Yanjun Qi.
\newblock Black-box generation of adversarial text sequences to evade deep learning classifiers.
\newblock In \emph{2018 IEEE Security and Privacy Workshops (SPW)}, pp.\  50--56. IEEE, 2018.

\bibitem[Garg \& Ramakrishnan(2020)Garg and Ramakrishnan]{garg2020bae}
Siddhant Garg and Goutham Ramakrishnan.
\newblock {BAE}: {BERT}-based adversarial examples for text classification.
\newblock In Bonnie Webber, Trevor Cohn, Yulan He, and Yang Liu (eds.), \emph{Proceedings of the 2020 Conference on Empirical Methods in Natural Language Processing (EMNLP)}, pp.\  6174--6181, Online, November 2020. Association for Computational Linguistics.
\newblock \doi{10.18653/v1/2020.emnlp-main.498}.
\newblock URL \url{https://aclanthology.org/2020.emnlp-main.498}.

\bibitem[Geisler et~al.(2021)Geisler, Schmidt, {\c{S}}irin, Z{\"u}gner, Bojchevski, and G{\"u}nnemann]{geisler2021robustness}
Simon Geisler, Tobias Schmidt, Hakan {\c{S}}irin, Daniel Z{\"u}gner, Aleksandar Bojchevski, and Stephan G{\"u}nnemann.
\newblock Robustness of graph neural networks at scale.
\newblock \emph{Advances in Neural Information Processing Systems}, 34:\penalty0 7637--7649, 2021.

\bibitem[Giampiccolo et~al.(2007)Giampiccolo, Magnini, Dagan, and Dolan]{giampiccolo2007third}
Danilo Giampiccolo, Bernardo Magnini, Ido Dagan, and William~B Dolan.
\newblock The third pascal recognizing textual entailment challenge.
\newblock In \emph{Proceedings of the ACL-PASCAL workshop on textual entailment and paraphrasing}, pp.\  1--9, 2007.

\bibitem[Gil et~al.(2019)Gil, Chai, Gorodissky, and Berant]{gil2019white}
Yotam Gil, Yoav Chai, O.~A. Gorodissky, and Jonathan Berant.
\newblock White-to-black: Efficient distillation of black-box adversarial attacks.
\newblock In \emph{North American Chapter of the Association for Computational Linguistics}, 2019.
\newblock URL \url{https://api.semanticscholar.org/CorpusID:102353948}.

\bibitem[Giles et~al.(1998)Giles, Bollacker, and Lawrence]{giles1998citeseer}
C~Lee Giles, Kurt~D Bollacker, and Steve Lawrence.
\newblock Citeseer: An automatic citation indexing system.
\newblock In \emph{Proceedings of the third ACM conference on Digital libraries}, pp.\  89--98, 1998.

\bibitem[Goodfellow et~al.(2014)Goodfellow, Shlens, and Szegedy]{goodfellow2014explaining}
Ian~J Goodfellow, Jonathon Shlens, and Christian Szegedy.
\newblock Explaining and harnessing adversarial examples.
\newblock \emph{arXiv preprint arXiv:1412.6572}, 2014.

\bibitem[Han et~al.(2023)Han, Ren, Nguyen, Nguyen, Ghosh, and Ho]{han2022designing}
Xing Han, Tongzheng Ren, Tan~Minh Nguyen, Khai Nguyen, Joydeep Ghosh, and Nhat Ho.
\newblock Designing robust transformers using robust kernel density estimation.
\newblock In \emph{Thirty-seventh Conference on Neural Information Processing Systems}, 2023.
\newblock URL \url{https://openreview.net/forum?id=BqTv1Mtuhu}.

\bibitem[Hinton et~al.(2015)Hinton, Vinyals, and Dean]{hinton2015distilling}
Geoffrey Hinton, Oriol Vinyals, and Jeff Dean.
\newblock Distilling the knowledge in a neural network.
\newblock \emph{arXiv preprint arXiv:1503.02531}, 2015.

\bibitem[Huang et~al.(2019)Huang, Stanforth, Welbl, Dyer, Yogatama, Gowal, Dvijotham, and Kohli]{huang2019achieving}
Po-Sen Huang, Robert Stanforth, Johannes Welbl, Chris Dyer, Dani Yogatama, Sven Gowal, Krishnamurthy Dvijotham, and Pushmeet Kohli.
\newblock Achieving verified robustness to symbol substitutions via interval bound propagation.
\newblock \emph{arXiv preprint arXiv:1909.01492}, 2019.

\bibitem[Huber(1973)]{huber1973robust}
Peter~J Huber.
\newblock Robust regression: asymptotics, conjectures and monte carlo.
\newblock \emph{The annals of statistics}, pp.\  799--821, 1973.

\bibitem[Iyyer et~al.(2018)Iyyer, Wieting, Gimpel, and Zettlemoyer]{iyyer2018adversarial}
Mohit Iyyer, John Wieting, Kevin Gimpel, and Luke Zettlemoyer.
\newblock Adversarial example generation with syntactically controlled paraphrase networks.
\newblock In Marilyn Walker, Heng Ji, and Amanda Stent (eds.), \emph{Proceedings of the 2018 Conference of the North {A}merican Chapter of the Association for Computational Linguistics: Human Language Technologies, Volume 1 (Long Papers)}, pp.\  1875--1885, New Orleans, Louisiana, June 2018. Association for Computational Linguistics.
\newblock \doi{10.18653/v1/N18-1170}.
\newblock URL \url{https://aclanthology.org/N18-1170}.

\bibitem[Jia et~al.(2019)Jia, Raghunathan, G{\"o}ksel, and Liang]{jia2019certified}
Robin Jia, Aditi Raghunathan, Kerem G{\"o}ksel, and Percy Liang.
\newblock Certified robustness to adversarial word substitutions.
\newblock In Kentaro Inui, Jing Jiang, Vincent Ng, and Xiaojun Wan (eds.), \emph{Proceedings of the 2019 Conference on Empirical Methods in Natural Language Processing and the 9th International Joint Conference on Natural Language Processing (EMNLP-IJCNLP)}, pp.\  4129--4142, Hong Kong, China, November 2019. Association for Computational Linguistics.
\newblock \doi{10.18653/v1/D19-1423}.
\newblock URL \url{https://aclanthology.org/D19-1423}.

\bibitem[Jin et~al.(2020{\natexlab{a}})Jin, Jin, Zhou, and Szolovits]{jin2020bert}
Di~Jin, Zhijing Jin, Joey~Tianyi Zhou, and Peter Szolovits.
\newblock Is bert really robust? a strong baseline for natural language attack on text classification and entailment.
\newblock In \emph{Proceedings of the AAAI conference on artificial intelligence}, volume~34, pp.\  8018--8025, 2020{\natexlab{a}}.

\bibitem[Jin et~al.(2020{\natexlab{b}})Jin, Ma, Liu, Tang, Wang, and Tang]{jin2020graph}
Wei Jin, Yao Ma, Xiaorui Liu, Xianfeng Tang, Suhang Wang, and Jiliang Tang.
\newblock Graph structure learning for robust graph neural networks.
\newblock In \emph{Proceedings of the 26th ACM SIGKDD international conference on knowledge discovery \& data mining}, pp.\  66--74, 2020{\natexlab{b}}.

\bibitem[Khan et~al.(2022)Khan, Naseer, Hayat, Zamir, Khan, and Shah]{khan2022transformers}
Salman Khan, Muzammal Naseer, Munawar Hayat, Syed~Waqas Zamir, Fahad~Shahbaz Khan, and Mubarak Shah.
\newblock Transformers in vision: A survey.
\newblock \emph{ACM computing surveys (CSUR)}, 54\penalty0 (10s):\penalty0 1--41, 2022.

\bibitem[Kipf \& Welling(2017)Kipf and Welling]{kipf2017semisupervised}
Thomas~N. Kipf and Max Welling.
\newblock Semi-supervised classification with graph convolutional networks.
\newblock In \emph{International Conference on Learning Representations}, 2017.
\newblock URL \url{https://openreview.net/forum?id=SJU4ayYgl}.

\bibitem[Krizhevsky et~al.(2009)Krizhevsky, Hinton, et~al.]{krizhevsky2009learning}
Alex Krizhevsky, Geoffrey Hinton, et~al.
\newblock Learning multiple layers of features from tiny images.
\newblock 2009.

\bibitem[Lan et~al.(2020)Lan, Chen, Goodman, Gimpel, Sharma, and Soricut]{lan2020albert}
Zhenzhong Lan, Mingda Chen, Sebastian Goodman, Kevin Gimpel, Piyush Sharma, and Radu Soricut.
\newblock Albert: A lite bert for self-supervised learning of language representations, 2020.

\bibitem[Li et~al.(2023)Li, Guo, Fan, Xu, Huang, Meng, and Song]{li2023multi}
Haoran Li, Dadi Guo, Wei Fan, Mingshi Xu, Jie Huang, Fanpu Meng, and Yangqiu Song.
\newblock Multi-step jailbreaking privacy attacks on chat{GPT}.
\newblock In \emph{The 2023 Conference on Empirical Methods in Natural Language Processing}, 2023.
\newblock URL \url{https://openreview.net/forum?id=ls4Pfsl2jZ}.

\bibitem[Li et~al.(2018)Li, Ji, Du, Li, and Wang]{li2018textbugger}
Jinfeng Li, Shouling Ji, Tianyu Du, Bo~Li, and Ting Wang.
\newblock Textbugger: Generating adversarial text against real-world applications.
\newblock \emph{arXiv preprint arXiv:1812.05271}, 2018.

\bibitem[Li \& Qiu(2021)Li and Qiu]{li2021token}
Linyang Li and Xipeng Qiu.
\newblock Token-aware virtual adversarial training in natural language understanding.
\newblock In \emph{Proceedings of the AAAI Conference on Artificial Intelligence}, volume~35, pp.\  8410--8418, 2021.

\bibitem[Li et~al.(2020)Li, Liu, Wu, Liu, Zhao, and Liu]{li2020robutrans}
Naihan Li, Yanqing Liu, Yu~Wu, Shujie Liu, Sheng Zhao, and Ming Liu.
\newblock Robutrans: A robust transformer-based text-to-speech model.
\newblock In \emph{Proceedings of the AAAI conference on artificial intelligence}, volume~34, pp.\  8228--8235, 2020.

\bibitem[Li et~al.(2021)Li, Xu, Zeng, Li, Zheng, Zhang, Chang, and Hsieh]{li2021searching}
Zongyi Li, Jianhan Xu, Jiehang Zeng, Linyang Li, Xiaoqing Zheng, Qi~Zhang, Kai-Wei Chang, and Cho-Jui Hsieh.
\newblock Searching for an effective defender: Benchmarking defense against adversarial word substitution, 2021.

\bibitem[Liang et~al.(2017)Liang, Li, Su, Bian, Li, and Shi]{liang2017deep}
Bin Liang, Hongcheng Li, Miaoqiang Su, Pan Bian, Xirong Li, and Wenchang Shi.
\newblock Deep text classification can be fooled.
\newblock \emph{arXiv preprint arXiv:1704.08006}, 2017.

\bibitem[Lin et~al.(2022)Lin, Wang, Liu, and Qiu]{lin2022survey}
Tianyang Lin, Yuxin Wang, Xiangyang Liu, and Xipeng Qiu.
\newblock A survey of transformers.
\newblock \emph{AI Open}, 2022.

\bibitem[Liu et~al.(2021{\natexlab{a}})Liu, Singhal, Blessing, Wood, and Lim]{liu2021crisisbert}
Junhua Liu, Trisha Singhal, Lucienne~TM Blessing, Kristin~L Wood, and Kwan~Hui Lim.
\newblock Crisisbert: a robust transformer for crisis classification and contextual crisis embedding.
\newblock In \emph{Proceedings of the 32nd ACM conference on hypertext and social media}, pp.\  133--141, 2021{\natexlab{a}}.

\bibitem[Liu et~al.(2023{\natexlab{a}})Liu, Deng, Li, Wang, Zhang, Liu, Wang, Zheng, and Liu]{liu2023prompt}
Yi~Liu, Gelei Deng, Yuekang Li, Kailong Wang, Tianwei Zhang, Yepang Liu, Haoyu Wang, Yan Zheng, and Yang Liu.
\newblock Prompt injection attack against llm-integrated applications.
\newblock \emph{arXiv preprint arXiv:2306.05499}, 2023{\natexlab{a}}.

\bibitem[Liu et~al.(2023{\natexlab{b}})Liu, Deng, Xu, Li, Zheng, Zhang, Zhao, Zhang, and Liu]{liu2023jailbreaking}
Yi~Liu, Gelei Deng, Zhengzi Xu, Yuekang Li, Yaowen Zheng, Ying Zhang, Lida Zhao, Tianwei Zhang, and Yang Liu.
\newblock Jailbreaking chatgpt via prompt engineering: An empirical study.
\newblock \emph{arXiv preprint arXiv:2305.13860}, 2023{\natexlab{b}}.

\bibitem[Liu et~al.(2019)Liu, Ott, Goyal, Du, Joshi, Chen, Levy, Lewis, Zettlemoyer, and Stoyanov]{liu2019roberta}
Yinhan Liu, Myle Ott, Naman Goyal, Jingfei Du, Mandar Joshi, Danqi Chen, Omer Levy, Mike Lewis, Luke Zettlemoyer, and Veselin Stoyanov.
\newblock Roberta: A robustly optimized bert pretraining approach, 2019.

\bibitem[Liu et~al.(2021{\natexlab{b}})Liu, Lin, Cao, Hu, Wei, Zhang, Lin, and Guo]{liu2021swin}
Ze~Liu, Yutong Lin, Yue Cao, Han Hu, Yixuan Wei, Zheng Zhang, Stephen Lin, and Baining Guo.
\newblock Swin transformer: Hierarchical vision transformer using shifted windows.
\newblock In \emph{Proceedings of the IEEE/CVF international conference on computer vision}, pp.\  10012--10022, 2021{\natexlab{b}}.

\bibitem[Maas et~al.(2011)Maas, Daly, Pham, Huang, Ng, and Potts]{maas-EtAl:2011:ACL-HLT2011}
Andrew~L. Maas, Raymond~E. Daly, Peter~T. Pham, Dan Huang, Andrew~Y. Ng, and Christopher Potts.
\newblock Learning word vectors for sentiment analysis.
\newblock In \emph{Proceedings of the 49th Annual Meeting of the Association for Computational Linguistics: Human Language Technologies}, pp.\  142--150, Portland, Oregon, USA, June 2011. Association for Computational Linguistics.
\newblock URL \url{http://www.aclweb.org/anthology/P11-1015}.

\bibitem[Madry et~al.(2018)Madry, Makelov, Schmidt, Tsipras, and Vladu]{madry2017towards}
Aleksander Madry, Aleksandar Makelov, Ludwig Schmidt, Dimitris Tsipras, and Adrian Vladu.
\newblock Towards deep learning models resistant to adversarial attacks.
\newblock In \emph{International Conference on Learning Representations}, 2018.
\newblock URL \url{https://openreview.net/forum?id=rJzIBfZAb}.

\bibitem[Morris et~al.(2020)Morris, Lifland, Yoo, Grigsby, Jin, and Qi]{morris2020textattack}
John Morris, Eli Lifland, Jin~Yong Yoo, Jake Grigsby, Di~Jin, and Yanjun Qi.
\newblock Textattack: A framework for adversarial attacks, data augmentation, and adversarial training in nlp.
\newblock In \emph{Proceedings of the 2020 Conference on Empirical Methods in Natural Language Processing: System Demonstrations}, pp.\  119--126, 2020.

\bibitem[Papernot et~al.(2016)Papernot, McDaniel, Swami, and Harang]{papernot2016crafting}
Nicolas Papernot, Patrick McDaniel, Ananthram Swami, and Richard Harang.
\newblock Crafting adversarial input sequences for recurrent neural networks.
\newblock In \emph{MILCOM 2016-2016 IEEE Military Communications Conference}, pp.\  49--54. IEEE, 2016.

\bibitem[Raffel et~al.(2023)Raffel, Shazeer, Roberts, Lee, Narang, Matena, Zhou, Li, and Liu]{raffel2023exploring}
Colin Raffel, Noam Shazeer, Adam Roberts, Katherine Lee, Sharan Narang, Michael Matena, Yanqi Zhou, Wei Li, and Peter~J. Liu.
\newblock Exploring the limits of transfer learning with a unified text-to-text transformer, 2023.

\bibitem[Ren et~al.(2019)Ren, Deng, He, and Che]{ren2019generating}
Shuhuai Ren, Yihe Deng, Kun He, and Wanxiang Che.
\newblock Generating natural language adversarial examples through probability weighted word saliency.
\newblock In \emph{Proceedings of the 57th annual meeting of the association for computational linguistics}, pp.\  1085--1097, 2019.

\bibitem[Robey et~al.(2023)Robey, Wong, Hassani, and Pappas]{robey2023smoothllm}
Alexander Robey, Eric Wong, Hamed Hassani, and George~J. Pappas.
\newblock Smoothllm: Defending large language models against jailbreaking attacks, 2023.

\bibitem[Russakovsky et~al.(2015)Russakovsky, Deng, Su, Krause, Satheesh, Ma, Huang, Karpathy, Khosla, Bernstein, Berg, and Fei-Fei]{imagenet15russakovsky}
Olga Russakovsky, Jia Deng, Hao Su, Jonathan Krause, Sanjeev Satheesh, Sean Ma, Zhiheng Huang, Andrej Karpathy, Aditya Khosla, Michael Bernstein, Alexander~C. Berg, and Li~Fei-Fei.
\newblock {ImageNet Large Scale Visual Recognition Challenge}.
\newblock \emph{International Journal of Computer Vision (IJCV)}, 115\penalty0 (3):\penalty0 211--252, 2015.
\newblock \doi{10.1007/s11263-015-0816-y}.

\bibitem[Samanta \& Mehta(2017)Samanta and Mehta]{samanta2017towards}
Suranjana Samanta and Sameep Mehta.
\newblock Towards crafting text adversarial samples.
\newblock \emph{arXiv preprint arXiv:1707.02812}, 2017.

\bibitem[Sanh et~al.(2019)Sanh, Debut, Chaumond, and Wolf]{sanh2019distilbert}
Victor Sanh, Lysandre Debut, Julien Chaumond, and Thomas Wolf.
\newblock Distilbert, a distilled version of bert: smaller, faster, cheaper and lighter.
\newblock \emph{arXiv preprint arXiv:1910.01108}, 2019.

\bibitem[Sato et~al.(2018)Sato, Suzuki, Shindo, and Matsumoto]{sato2018interpretable}
Motoki Sato, Jun Suzuki, Hiroyuki Shindo, and Yuji Matsumoto.
\newblock Interpretable adversarial perturbation in input embedding space for text.
\newblock In \emph{27th International Joint Conference on Artificial Intelligence, IJCAI 2018}, pp.\  4323--4330. International Joint Conferences on Artificial Intelligence, 2018.

\bibitem[Sen et~al.(2008)Sen, Namata, Bilgic, Getoor, Galligher, and Eliassi-Rad]{sen2008collective}
Prithviraj Sen, Galileo Namata, Mustafa Bilgic, Lise Getoor, Brian Galligher, and Tina Eliassi-Rad.
\newblock Collective classification in network data.
\newblock \emph{AI magazine}, 29\penalty0 (3):\penalty0 93--93, 2008.

\bibitem[Si et~al.(2021)Si, Zhang, Qi, Liu, Wang, Liu, and Sun]{si2020better}
Chenglei Si, Zhengyan Zhang, Fanchao Qi, Zhiyuan Liu, Yasheng Wang, Qun Liu, and Maosong Sun.
\newblock Better robustness by more coverage: Adversarial training with mixup augmentation for robust fine-tuning.
\newblock In \emph{Findings of ACL}, 2021.

\bibitem[Socher et~al.(2013)Socher, Perelygin, Wu, Chuang, Manning, Ng, and Potts]{socher-etal-2013-recursive}
Richard Socher, Alex Perelygin, Jean Wu, Jason Chuang, Christopher~D. Manning, Andrew Ng, and Christopher Potts.
\newblock Recursive deep models for semantic compositionality over a sentiment treebank.
\newblock In \emph{Proceedings of the 2013 Conference on Empirical Methods in Natural Language Processing}, pp.\  1631--1642, Seattle, Washington, USA, October 2013. Association for Computational Linguistics.
\newblock URL \url{https://www.aclweb.org/anthology/D13-1170}.

\bibitem[Tay et~al.(2022)Tay, Dehghani, Bahri, and Metzler]{tay2022efficient}
Yi~Tay, Mostafa Dehghani, Dara Bahri, and Donald Metzler.
\newblock Efficient transformers: A survey.
\newblock \emph{ACM Computing Surveys}, 55\penalty0 (6):\penalty0 1--28, 2022.

\bibitem[Touvron et~al.(2021)Touvron, Cord, Douze, Massa, Sablayrolles, and J{\'e}gou]{touvron2021training}
Hugo Touvron, Matthieu Cord, Matthijs Douze, Francisco Massa, Alexandre Sablayrolles, and Herv{\'e} J{\'e}gou.
\newblock Training data-efficient image transformers \& distillation through attention.
\newblock In \emph{International conference on machine learning}, pp.\  10347--10357. PMLR, 2021.

\bibitem[Touvron et~al.(2023)Touvron, Lavril, Izacard, Martinet, Lachaux, Lacroix, Rozi{\`e}re, Goyal, Hambro, Azhar, et~al.]{touvron2023llama}
Hugo Touvron, Thibaut Lavril, Gautier Izacard, Xavier Martinet, Marie-Anne Lachaux, Timoth{\'e}e Lacroix, Baptiste Rozi{\`e}re, Naman Goyal, Eric Hambro, Faisal Azhar, et~al.
\newblock Llama: Open and efficient foundation language models.
\newblock \emph{arXiv preprint arXiv:2302.13971}, 2023.

\bibitem[Vaswani et~al.(2017)Vaswani, Shazeer, Parmar, Uszkoreit, Jones, Gomez, Kaiser, and Polosukhin]{vaswani2017attention}
Ashish Vaswani, Noam Shazeer, Niki Parmar, Jakob Uszkoreit, Llion Jones, Aidan~N Gomez, {\L}ukasz Kaiser, and Illia Polosukhin.
\newblock Attention is all you need.
\newblock \emph{Advances in neural information processing systems}, 30, 2017.

\bibitem[Veličković et~al.(2018)Veličković, Cucurull, Casanova, Romero, Liò, and Bengio]{veličković2018graph}
Petar Veličković, Guillem Cucurull, Arantxa Casanova, Adriana Romero, Pietro Liò, and Yoshua Bengio.
\newblock Graph attention networks.
\newblock In \emph{International Conference on Learning Representations}, 2018.
\newblock URL \url{https://openreview.net/forum?id=rJXMpikCZ}.

\bibitem[Wang et~al.(2019)Wang, Singh, Michael, Hill, Levy, and Bowman]{wang2019glue}
Alex Wang, Amanpreet Singh, Julian Michael, Felix Hill, Omer Levy, and Samuel~R. Bowman.
\newblock {GLUE}: A multi-task benchmark and analysis platform for natural language understanding.
\newblock 2019.
\newblock In the Proceedings of ICLR.

\bibitem[Wang et~al.(2021)Wang, Wang, Cheng, Gan, Jia, Li, and Liu]{wang2020infobert}
Boxin Wang, Shuohang Wang, Yu~Cheng, Zhe Gan, Ruoxi Jia, Bo~Li, and Jingjing Liu.
\newblock Infobert: Improving robustness of language models from an information theoretic perspective.
\newblock In \emph{International Conference on Learning Representations}, 2021.

\bibitem[Wei et~al.(2023)Wei, Haghtalab, and Steinhardt]{wei2023jailbroken}
Alexander Wei, Nika Haghtalab, and Jacob Steinhardt.
\newblock Jailbroken: How does {LLM} safety training fail?
\newblock In \emph{Thirty-seventh Conference on Neural Information Processing Systems}, 2023.
\newblock URL \url{https://openreview.net/forum?id=jA235JGM09}.

\bibitem[Wu et~al.(2019)Wu, Wang, Tyshetskiy, Docherty, Lu, and Zhu]{ijcai2019p669}
Huijun Wu, Chen Wang, Yuriy Tyshetskiy, Andrew Docherty, Kai Lu, and Liming Zhu.
\newblock Adversarial examples for graph data: Deep insights into attack and defense.
\newblock In \emph{Proceedings of the Twenty-Eighth International Joint Conference on Artificial Intelligence, {IJCAI-19}}, pp.\  4816--4823. International Joint Conferences on Artificial Intelligence Organization, 7 2019.
\newblock \doi{10.24963/ijcai.2019/669}.
\newblock URL \url{https://doi.org/10.24963/ijcai.2019/669}.

\bibitem[Xu et~al.(2019)Xu, Chen, Liu, Chen, Weng, Hong, and Lin]{xu2019topology}
Kaidi Xu, Hongge Chen, Sijia Liu, Pin-Yu Chen, Tsui-Wei Weng, Mingyi Hong, and Xue Lin.
\newblock Topology attack and defense for graph neural networks: An optimization perspective, 2019.

\bibitem[Yang et~al.(2022)Yang, Gupta, Upadhyay, He, Goel, and Paul]{yang2022tableformer}
Jingfeng Yang, Aditya Gupta, Shyam Upadhyay, Luheng He, Rahul Goel, and Shachi Paul.
\newblock {T}able{F}ormer: Robust transformer modeling for table-text encoding.
\newblock In Smaranda Muresan, Preslav Nakov, and Aline Villavicencio (eds.), \emph{Proceedings of the 60th Annual Meeting of the Association for Computational Linguistics (Volume 1: Long Papers)}, pp.\  528--537, Dublin, Ireland, May 2022. Association for Computational Linguistics.
\newblock \doi{10.18653/v1/2022.acl-long.40}.
\newblock URL \url{https://aclanthology.org/2022.acl-long.40}.

\bibitem[Ye et~al.(2020)Ye, Gong, and Liu]{ye2020safer}
Mao Ye, Chengyue Gong, and Qiang Liu.
\newblock {SAFER}: A structure-free approach for certified robustness to adversarial word substitutions.
\newblock In Dan Jurafsky, Joyce Chai, Natalie Schluter, and Joel Tetreault (eds.), \emph{Proceedings of the 58th Annual Meeting of the Association for Computational Linguistics}, pp.\  3465--3475, Online, July 2020. Association for Computational Linguistics.
\newblock \doi{10.18653/v1/2020.acl-main.317}.
\newblock URL \url{https://aclanthology.org/2020.acl-main.317}.

\bibitem[Yun et~al.(2019)Yun, Jeong, Kim, Kang, and Kim]{yun2019graph}
Seongjun Yun, Minbyul Jeong, Raehyun Kim, Jaewoo Kang, and Hyunwoo~J Kim.
\newblock Graph transformer networks.
\newblock \emph{Advances in neural information processing systems}, 32, 2019.

\bibitem[Zeng et~al.(2023)Zeng, Xu, Zheng, and Huang]{zeng2023certified}
Jiehang Zeng, Jianhan Xu, Xiaoqing Zheng, and Xuanjing Huang.
\newblock Certified robustness to text adversarial attacks by randomized [mask].
\newblock \emph{Computational Linguistics}, 49\penalty0 (2):\penalty0 395--427, 2023.

\bibitem[Zhang(2010)]{zhang2010nearly}
Cun-Hui Zhang.
\newblock Nearly unbiased variable selection under minimax concave penalty.
\newblock 2010.

\bibitem[Zhang et~al.(2018)Zhang, Cisse, Dauphin, and Lopez-Paz]{zhang2017mixup}
Hongyi Zhang, Moustapha Cisse, Yann~N. Dauphin, and David Lopez-Paz.
\newblock mixup: Beyond empirical risk minimization.
\newblock In \emph{International Conference on Learning Representations}, 2018.
\newblock URL \url{https://openreview.net/forum?id=r1Ddp1-Rb}.

\bibitem[Zhang \& Zitnik(2020)Zhang and Zitnik]{zhang2020gnnguard}
Xiang Zhang and Marinka Zitnik.
\newblock Gnnguard: Defending graph neural networks against adversarial attacks.
\newblock \emph{Advances in neural information processing systems}, 33:\penalty0 9263--9275, 2020.

\bibitem[Zhang et~al.(2015)Zhang, Zhao, and LeCun]{Zhang2015CharacterlevelCN}
Xiang Zhang, Junbo~Jake Zhao, and Yann LeCun.
\newblock Character-level convolutional networks for text classification.
\newblock In \emph{NIPS}, 2015.

\bibitem[Zhang \& Ippolito(2023)Zhang and Ippolito]{zhang2023prompts}
Yiming Zhang and Daphne Ippolito.
\newblock Prompts should not be seen as secrets: Systematically measuring prompt extraction attack success.
\newblock \emph{arXiv preprint arXiv:2307.06865}, 2023.

\bibitem[Zhou et~al.(2021)Zhou, Zheng, Hsieh, Chang, and Huan]{zhou2021defense}
Yi~Zhou, Xiaoqing Zheng, Cho-Jui Hsieh, Kai-Wei Chang, and Xuanjing Huan.
\newblock Defense against synonym substitution-based adversarial attacks via dirichlet neighborhood ensemble.
\newblock In \emph{Association for Computational Linguistics (ACL)}, 2021.

\bibitem[Zhu et~al.(2020)Zhu, Cheng, Gan, Sun, Goldstein, and Liu]{zhu2019freelb}
Chen Zhu, Yu~Cheng, Zhe Gan, Siqi Sun, Tom Goldstein, and Jingjing Liu.
\newblock Freelb: Enhanced adversarial training for natural language understanding.
\newblock In \emph{International Conference on Learning Representations}, 2020.
\newblock URL \url{https://openreview.net/forum?id=BygzbyHFvB}.

\bibitem[Zhu et~al.(2019)Zhu, Zhang, Cui, and Zhu]{zhu2019robust}
Dingyuan Zhu, Ziwei Zhang, Peng Cui, and Wenwu Zhu.
\newblock Robust graph convolutional networks against adversarial attacks.
\newblock In \emph{Proceedings of the 25th ACM SIGKDD international conference on knowledge discovery \& data mining}, pp.\  1399--1407, 2019.

\bibitem[Zhu et~al.(2024)Zhu, Zhao, Chen, Wang, and Xie]{zhu2024promptbench}
Kaijie Zhu, Qinlin Zhao, Hao Chen, Jindong Wang, and Xing Xie.
\newblock Promptbench: A unified library for evaluation of large language models, 2024.

\bibitem[Zou et~al.(2023)Zou, Wang, Carlini, Nasr, Kolter, and Fredrikson]{zou2023universal}
Andy Zou, Zifan Wang, Nicholas Carlini, Milad Nasr, J.~Zico Kolter, and Matt Fredrikson.
\newblock Universal and transferable adversarial attacks on aligned language models, 2023.

\end{thebibliography}
\bibliographystyle{iclr2025_conference}

\appendix

\newpage
\section{Pseudocode of Plug-and-Play Robust Attention (ProAttention)}
\label{sec:pseudocode}

Here, we provide the complete pseudocode of our ProAttention including various penalties cases in Algorithm~\ref{alg:irls_all}. The core itertions are show in the for loop in the code. Our ProAttention is easy to implement by only replacing the vanilla attention module with our ProAttention.

\begin{algorithm}[h!]
   \caption{ProAttention in PyTorch style }
   \label{alg:irls_all}


        
          
              
              
                  


\includegraphics[width=1.0\linewidth]{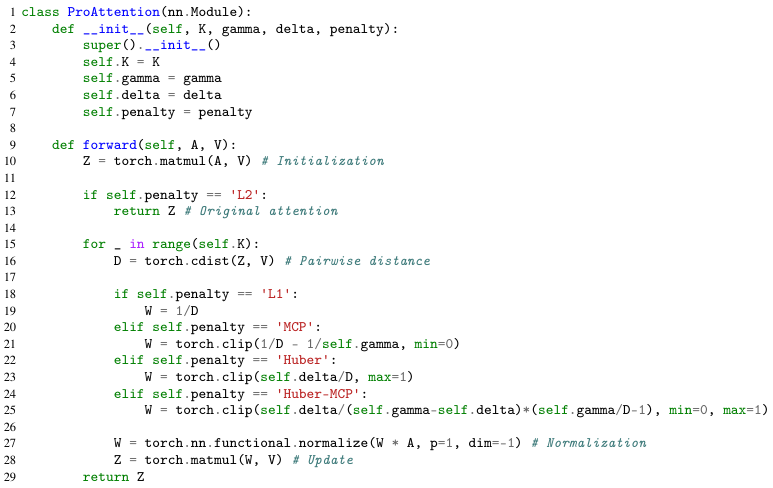}

\end{algorithm}

\section{Proof of Newton-IRLS Algorithm}
\label{sec:proof-qn}

\subsection{Proof of Localized Upper Bound (Lemma~\ref{lemma:local_upper_bound})}
\label{sec:proof_local_upper_bound}

\begin{proof}
Define  $\phi(z):=\rho(\sqrt{z})$ as a non-convex function, then for fixed point $z_0$, $$\phi(z)\leq\phi(z_0)+\phi^\prime(z_0)(z-z_0)=\phi^\prime(z_0)\cdot z+C(z_0)$$
where the first inequality holds with equality at $z=z_0$ and

$$\phi^\prime(z_0)=\left.\rho^\prime(\sqrt{z})\cdot \frac{1}{2\sqrt{z}} \right|_{z=z_0}= \frac{\rho^\prime(\sqrt{z_0})}{2\sqrt{z_0}}.$$

By replacemnet as $z=\|\vv_j-\vz\|^2$ and $z_0=\|\vv_j-\vz^{(k)}\|^2$, then
\begin{align*}
    \rho(\|\vv_j-\vz\|)&\leq \frac{\rho^\prime(\|\vv_j-\vz^{(k)}\|)}{2\|\vv_j-\vz^{(k)}\|}\cdot \|\vv_j-\vz\|^2+C(\|\vv_j-\vz^{(k)}\|^2)\\
    &=w_j^{(k)}\cdot \|\vv_j-\vz\|^2+C(\|\vv_j-\vz^{(k)}\|^2),
\end{align*}
and the first inequality holds with equality at $\vz=\vz^{(k)}$.
Sum up the items on both sides with weights $\{a_j\}_{j\in[N]}$, we obtain
\begin{equation}
\label{eq:upper_bound_inequlity}
\begin{split}
\mathcal{L}(\vz)&=\sum_{j=1}^{N} a_i\cdot\rho(\|\vv_j-\vz\|) \\
&\leq \sum_{j=1}^N a_j\cdot w_j^{(k)}\cdot \|\vv_j-\vz\|^2+ \sum_{j=1}^N a_j\cdot C(\|\vv_j-\vz^{(k)}\|^2) \\
&=\sum_{j=1}^N a_j\cdot w_j^{(k)}\cdot \|\vv_j-\vz\|^2+ C_1(\vz^{(k)})\\
&=\hat{\mathcal{L}}(\vz)
\end{split}
\end{equation}
and the equality holds at $\vz=\vz^{(k)}$:
\begin{equation}
\label{eq:local_equality}
    \hat{\mathcal{L}}(\vz^{(k)})=\mathcal{L}(\vz^{(k)}).
\end{equation}
\end{proof}

After obtaining the convex upper bound $\hat{\mathcal{L}}(\vz)$, it becomes feasible to employ convex optimization algorithms to optimize this objective.

\subsection{Proof of Newton-IRLS algorithm and Special Cases}
\label{sec:proof_newton_irls}

\textbf{Newton-IRLS.}
We first derive the formulations of gradient and Hessain matirx of $\hat{\cL}$ as follows:
$$\nabla \hat{\cL}(\vz^{(k)})=\sum_{j=1}^N a_j\cdot w^{(k)}_j\cdot 2(\vz^{(k)}-\vv_i)$$
$$\nabla^2 \hat{\cL}(\vz^{(k)})=\sum_{j=1}^N a_j\cdot w^{(k)}_j\cdot 2\cdot \vI$$

Then, the gradient descent (GD) ($\eta$ is the stepsize) is 
\begin{align*}
\vz^{(k+1)}&=\vz^{(k)}-\eta\cdot\nabla \hat{\cL}(\vz^{(k)})\\
&=\vz^{(k)}-\eta\cdot \sum_{j=1}^N a_j\cdot w_j^{(k)}\cdot 2(\vz^{(k)}-\vv_j),
\end{align*}
and the Newton Iteration is 
\begin{align}
\vz^{(k+1)}&=\vz^{(k)}-\left[\nabla^2 \hat{\mathcal{L}}(\vz^{(k)})\right]^{-1}\nabla \hat{\cL}(\vz^{(k)})\\
&=\vz^{(k)}-\cdot\left(\sum_{j=1}^N a_j\cdot w^{(k)}_j\cdot 2\cdot \vI\right)^{-1} \sum_{j=1}^N a_j\cdot w^{(k)}_j\cdot 2(\vz^{(k)}-\vv_i)\\
    &=\frac{\sum_j a_j\cdot w_j^{(k)}\cdot \vv_j}{\sum_j a_j\cdot w_j^{(k)}}
    \label{eq:reweight_sum}
\end{align}

In convex optimization, it has been well-established that second-order methods converge much faster than first-order approaches, but they require substantial computation in calculating or approximating the inverse Hessian matrix. 
However, due to the uniqueness of our $\hat{\cL}$ in Eq.~\eqref{eq:upper_bound}, we can derive a concise closed-form iteration using the second-order Newton method as in Eq.~\eqref{eq:reweight_sum}.
Compared to the first-order gradient descent (GD) iteration, 
our Newton-IRLS algorithm enjoys several advantages as follows:

\begin{itemize}[leftmargin=0.2in]
    \item Fast convergence: Newton method converges at a quadratic rate, which is significantly faster than the linear convergence of gradient descent (GD). The comparative analysis of them can be found in Figure~\ref{fig:ag_ablation} (a) in ablation studies;
    \item Interpretable formulation: The resulted form in Eq.~\eqref{eq:reweight_app} employs a normalized reweighted sum, which can be interpreted as robust estimator by down-weighting the outliers, as discussed in the following paragraph;
    \item Efficient computation: The Hessian $\nabla^2 \hat{\cL}(\vz^{(k)})$ can be easily computed as a closed-form diagonal matrix, facilitating the matrix inversion and multiplication in the Newton's iteration.
\end{itemize}

\subsection{Proof of  Rigorous Loss  Descent Guarantee (Theorem~\ref{thm:loss-descent})}

\label{sec:proof_loss_descent}

\begin{proof}
    Since $\vz^{(k+1)}$ is obtained from optimize the convex localized upper bound $\cL$ at $\vz^{(k)}$, then we have $\hat{\cL}(\vz^{(k+1)})\leq\hat{\cL}(\vz^{(k)})$. According to the  upper bound in Eq.~\eqref{eq:upper_bound_inequlity} and  localized equality in Eq.~\eqref{eq:local_equality}, it is not hard to get the following inequality:
    $$\cL(\vz^{(k+1)})\leq\hat{\cL}(\vz^{(k+1)})\leq\hat{\cL}(\vz^{(k)})=\cL(\vz^{(k)}).$$
    Therefore, optimizing the localized upper bound $\hat{\cL}$ can guarantee the rigorous descent of $\cL$.


\end{proof}

\subsection{Special cases of Newton-IRLS}
\label{sec:special_case}
Our Newton-IRLS is a general framework which can be derived as different reweighting schemes with different penalties:
\begin{itemize}[leftmargin=0.2in]
    \item Square  Loss ($\ell_2$):
    $$\rho(z)=\frac{1}{2}z^2,$$

$$w_j^{(k)}=\frac{\rho^\prime(\|\vv_j-\vz^{(k)}\|)}{2\|\vv_j-\vz^{(k)}\|}=\frac{1}{2},$$
$$\vz^*=\sum_{j=1}^N a_j\cdot \vv_j.$$

$\ell_2$ loss increase quadratically with $z$, which suggests that $\ell_2$ loss is more sensitive to the residual magnitude. Particularly,  $\ell_2$ loss can recover the vanilla attention
since the weights are constant $\frac{1}{2}$.

    \item Absolute Loss ($\ell_1$):
    
    $$\rho(z)=z,$$ 
    $$w_j^{(k)}=\frac{\rho^\prime(\|\vv_j-\vz^{(k)}\|)}{2\|\vv_j-\vz^{(k)}\|}=\frac{1}{2\|\vv_j-\vz^{(k)}\|}.$$

    With $\ell_1$ loss, the weight is inversely proportional to $\|\vv_j-\vz^{(k)}\|$. By up-weighting the inliers and down-weighting the outliers, $\ell_1$-based estimators can mitigate the effect of large magnitude residues.

    \item Minimax Concave Penalty (MCP)~\citep{zhang2010nearly}:

        \[
\rho_{\gamma}(z) = 
\begin{cases} 
z - \frac{z^2}{2\gamma} & \text{if } y < \gamma \\
\frac{\gamma}{2} & \text{if } y \geq \gamma 
\end{cases},
\]

$$w_j^{(k)}=\frac{\rho^\prime(\|\vv_j-\vz^{(k)}\|)}{2\|\vv_j-\vz^{(k)}\|}=\frac{1}{2}\max \left[\frac{1}{\|\vv_j-
\vz^{(k)}\|} - \frac{1}{\gamma},0\right].$$

MCP loss becomes constant when $z$ is large and the weight derived by MCP loss
enhances the interpretability of the robust estimator by down-weighting or completely removing the outliers.
To be specific,
the weight $w_j$ becomes smaller as the distance $\|\vv_j-\vz^{(k)}\|$ increases, thereby down-weighting the outlying cases. When this distance exceeds the threshold $\gamma$, the weight becomes $0$, totally removing the outliers.

\item 
 Huber loss:
    \[
    \rho_{\delta}(z) = 
    \begin{cases} 
    \frac{1}{2}z^2 & \text{if } z < \delta \\
    \delta \cdot (z-\frac{1}{2}\delta) & \text{if } z \geq \delta
    \end{cases},
    \]

$$w_j^{(k)}=\frac{\rho^\prime(\|\vv_j-\vz^{(k)}\|)}{2\|\vv_j-\vz^{(k)}\|}=\frac{1}{2}\min\left[1,\frac{\delta}{\|\vv_j-\vz^{(k)}\|}\right].$$

 Huber loss is equivalent to the $\ell_2$ loss within the range $(0,\delta)$, and it becomes similar to $\ell_1$ when $z>\delta$, which indicates that Huber loss may mitigate the effect of large noise while keeping decent performance in noiseless scenario.

\item  Huber-MCP:
    \[
    \rho_{\delta,\gamma}(z) = 
    \begin{cases} 
    \frac{1}{2}z^2 & \text{if } z < \delta \\
    \delta \cdot (z-\frac{1}{2}\delta-\frac{(z-\delta)^2}{2(\gamma-\delta)}) & \text{if } \delta\leq z < \gamma\\
    \frac{\delta\gamma}{2}&\text{if } \gamma\leq z
    \end{cases},
    \]

$$w_j^{(k)}=\frac{\rho^\prime(\|\vv_j-\vz^{(k)}\|)}{2\|\vv_j-\vz^{(k)}\|}=\frac{1}{2}\max\left[\min\left[\frac{\delta}{\gamma-\delta}\left(\frac{\gamma}{\|\vv_j-\vz^{(k)}\|}-1\right),1\right],0\right].$$

This penalty combines the advantages of Huber and MCP in recovering the $\ell_2$ loss and largely mitigating the outliers. 

\end{itemize}

\newpage 

\section{Dataset Information}
\label{sec:dataset}
\subsection{Language Domain}
\begin{itemize}[leftmargin=0.2in]
\item \textbf{AG's News Corpus (AGNEWS)}~\cite{Zhang2015CharacterlevelCN}: It is a collection of more than 1 million news articles. News articles have been gathered from more than 2000 news sources by ComeToMyHead in more than 1 year of activity. ComeToMyHead is an academic news search engine which has been running since July, 2004. The dataset is provided by the academic comunity for research purposes in data mining (clustering, classification, etc), information retrieval (ranking, search, etc), xml, data compression, data streaming, and any other non-commercial activity. The AG's news topic classification dataset is constructed by choosing 4 largest classes from the original corpus. Each class contains 30,000 training samples and 1,900 testing samples. The total number of training samples is 120,000 and testing 7,600.
\item \textbf{Internet Movie Database (IMDB)}~\cite{maas-EtAl:2011:ACL-HLT2011}: IMDB dataset having 50K movie reviews for natural language processing or Text analytics.
This is a dataset for binary sentiment classification containing substantially more data than previous benchmark datasets. We provide a set of 25,000 highly polar movie reviews for training and 25,000 for testing. So, predict the number of positive and negative reviews using either classification or deep learning algorithms.
\item 
\textbf{Stanford Sentiment Treebank (SST-2)}~\cite{socher-etal-2013-recursive}: It is a corpus with fully labeled parse trees that allows for a complete analysis of the compositional effects of sentiment in language. The corpus consists of 11,855 single sentences extracted from movie reviews. It was parsed with the Stanford parser and includes a total of 215,154 unique phrases from those parse trees, each annotated by 3 human judges.
Binary classification experiments on full sentences (negative or somewhat negative vs somewhat positive or positive with neutral sentences discarded) refer to the dataset as SST-2 or SST binary.
\item 
\textbf{Recognizing Textual Entailment (RTE)}: It comes from a series of annual textual entailment challenges. The authors of the benchmark combined the data from RTE1~\cite{dagan2010recognizing}, RTE2~\cite{bar2014benchmarking}, RTE3~\cite{giampiccolo2007third}, and RTE5~\cite{bentivogli2009fifth}. Examples are constructed based on news and Wikipedia text. The authors of the benchmark convert all datasets to a two-class split, where for three-class datasets they collapse neutral and contradiction into not entailment, for consistency.
\item \textbf{Behaviors}: It is a new dataset introduced in~\cite{zou2023universal} for robustness evaluation of jailbreaking attack. The dataset includes 520 goal prompts and corresponding targets, it is available in \href{https://github.com/llm-attacks/llm-attacks/blob/main/data/advbench/}{https://github.com/llm-attacks/llm-attacks/blob/main/data/advbench/}.
\end{itemize}

\subsection{Beyond Language Domain}
\begin{itemize}[leftmargin=0.2in]
    \item \textbf{CIFAR10}~\cite{krizhevsky2009learning}: The CIFAR-10 dataset is a well-known dataset used in the field of computer vision. It consists of 60,000 32x32 color images in 10 different classes, with 6,000 images per class. The dataset is divided into two parts: 50,000 training images and 10,000 test images. The 10 different classes represent airplanes, cars, birds, cats, deer, dogs, frogs, horses, ships, and trucks. Each image is labeled with one of these 10 categories.
    \item 
    \textbf{ImageNet-1K}~\citep{imagenet15russakovsky}: This dataset provides access to ImageNet which is the most commonly used subset of ImageNet. This dataset spans 1000 object classes and contains 1,281,167 training images, 50,000 validation images and 100,000 test images. The version also has the patch which fixes some of the corrupted test set images already applied.
    \item \textbf{Cora-ML}~\cite{sen2008collective}: The Cora dataset is a widely-used benchmark dataset in the field of graph-based tasks. It consists of 2708 scientific publications classified into one of seven classes. The citation network consists of 5429 links. Each publication in the dataset is described by a 0/1-valued word vector indicating the absence/presence of the corresponding word from the dictionary. The dictionary consists of 1433 unique words. Working with the Cora dataset presents challenges typical of real-world graph data, such as handling sparse and high-dimensional feature vectors, and dealing with the complex structure of the graph.
    \item \textbf{Citeseer}~\cite{giles1998citeseer}: The CiteSeer dataset is another popular dataset in the graph field. It consists of 3312 scientific publications classified into one of six classes. The citation network consists of 4732 links. Each publication in the dataset is described by a 0/1-valued word vector indicating the absence/presence of the corresponding word from the dictionary. The dictionary consists of 3703 unique words.
\end{itemize}

\newpage

\section{Defense Baselines and Backbone Architectures}
\label{sec:baseline_backbone}

\subsection{Defense Baselines}
\label{sec:baselines}
\textbf{Language Domain:}
\begin{itemize}[leftmargin=0.2in]
\item \textbf{PGD-Adv}~\cite{madry2017towards}: The Projected Gradient Descent (PGD) method stands as the most prevalent attack strategy in the field of computer vision. It is primarily utilized for crafting adversarial examples in the context of adversarial training. The defense in this paper is adapted directly from PGD-adv in computer vision, extending its application to language modeling.

\item \textbf{MixADA}~\cite{si2020better}:
The search space for adversarial examples in language models is typically vast due to their discrete nature. To enhance the robustness of these models, MixADA integrates adversarial training~\cite{goodfellow2014explaining} with mixup data augmentation~\cite{zhang2017mixup}, thereby expanding the range of adversarial examples covered. Specifically, mixup generates synthetic training examples by linearly blending pairs of inputs and their corresponding labels. This approach enables the model to learn from a broader and more effective set of adversarial examples during training.
\item \textbf{FreeLB}~\cite{zhu2019freelb}: Different from attacks that directly change the words in the sentence, FreeLB adds adversarial perturbations to word embeddings and minimizes the resultant adversarial loss around input samples. To expedite the process of adversarial training, FreeLB implements a single descent step on the parameters concurrently with each of the $K$ ascent steps applied to the perturbation, which utilizes the average of accumulated gradients over the 
$K$ steps. This efficiency has established FreeLB as a popular defense method in the field of NLP.
\item \textbf{TA-VAT}~\cite{li2021token}: TA-VAT is another virtual adversarial training method that generates gradient-based perturbations on the embedding space. To create fine-grained perturbations, TA-VAT employs a token-level accumulated perturbation vocabulary. This vocabulary serves to better initialize the perturbations. Additionally, TA-VAT utilizes a token-level normalization ball, which effectively constrains these perturbations in a relevant and precise manner.
\item \textbf{Adversarial Training (AT)}: Adversarial training is adaptive to the attack to be evaluated. Take the Textfooler as the instance, at every epoch, we generate 1000 perturbations from the Textfooler and add them into the training dataset to reinforce the training of models. We utilize the TextAttack~\cite{morris2020textattack} platform the conduct this adversarial training.
\item \textbf{SmoothLLM}~\cite{robey2023smoothllm}: Motivated by finding that the adversarial-prompting jailbreak is sensitive to the random character-level changes, SmoothLLM is designed by firstly perturbing multiple copies of the given prompt and then aggregating all the outputs.
\end{itemize}

\textbf{Beyond Language Domain:} 
\begin{itemize}[leftmargin=0.2in]
    \item \textbf{Graph Convolutional Network (GCN)}~\cite{kipf2017semisupervised}: GCN is motivated by the localized first-order approximation of spectral graph convolutions. The basic idea is to first add self-loops to the adjacency matrix and then normalize the matrix.
    \item \textbf{Graph Attention Network (GAT)}~\cite{veličković2018graph}: GAT leverages the attention mechanism to construct masked  self-attentional layers. This allows the nodes to reweight their neighbors via the feature similarity.
    \item \textbf{GNNGuard}~\cite{zhang2020gnnguard}: GNNGuard is a universal reweighting framework that can be applied to any GNN. It leverages the cosine similarities between nodes' features to up-weight the correlated nodes and prune the edges between the dissimilar pairs. 
    \item \textbf{Robust GCN (RGCN)}~\cite{zhu2019robust}: RGCN first models the latent representations as the Gaussian distributions. Then the weights of different neighborhoods will be assigned different weights according to their variances when performing the message propagation. 
    \item \textbf{Graph Random Neural Network (GRAND)}~\cite{feng2020graph}: The core of GRAND is the random propagation, wherein the node feature will be partially or entirely dropped out and then propagated through over the graph. This operation enable  the node  to be insensitive to the specific neighborhood, which prevents the effect of malicious outliers. Additionally, the random propagation also help to augment the representation for each node, thus improving the generalization of GNN.
    \item \textbf{Property GNN (ProGNN)}~\cite{jin2020graph}: The core principle of ProGNN is to robustify the GNNs through enhancing the graph properties of sparsity, low rank and feature smoothness. It provides a graph structure learning framework to learn the clean graph structure and parameters simultaneously.
    
    \item \textbf{Jaccard-GCN}~\cite{ijcai2019p669}:
    The basic idea of Jaccard-GCN is to preprocess the adjacency matrix by first computing the Jaccard coefficients of paired node features and then dropping the edges where the coefficients are below the threshold.
    
    \item \textbf{SoftMedian}~\cite{geisler2021robustness}:
    SoftMedian is a robust estimator for the message passing aggregation. It reweights the adjacency weights based to the distances of the hidden embeddings between the neighbor nodes and the dimension-wise median of the the entire neighboring representations.
\end{itemize}

\subsection{Backbone Architectures}
\label{sec:backbones}
\textbf{Classical language models:}
\begin{itemize}[leftmargin=0.2in]
    \item \textbf{BERT}~\cite{devlin2018bert}: BERT stands out as one of the most well-known transformer-based language models. It is pretrained through masked language modeling (MLM), where it learns to predict words that have been masked, using context for guidance. This pretrained model is then fine-tuned for a variety of downstream tasks, showcasing its versatility and effectiveness in diverse applications. In our experiments, we will use BERT-110M.
    \item \textbf{RoBERTa}~\cite{liu2019roberta}:
    RoBERTa is developed to overcome certain limitations of the original BERT model. This is accomplished by implementing key modifications such as increasing the batch size, extending the training epochs, and employing advanced optimization techniques. As a result of these strategic changes, RoBERTa has demonstrated substantial performance improvements over BERT across various NLP benchmarks. In our experiments, we will use RoBERTa-125M.
    \item
    \textbf{ALBERT}~\cite{lan2020albert}: ALBERT is a lite variant of BERT. It is achieved by decoupling the word embedding from the hidden embedding, significantly cutting down the number of parameters. To further enhance its efficiency, ALBERT employs cross-layer parameter sharing, ensuring that all layers use the same parameters. The reductions not only minimize memory footprint but also improve the efficiency of the model. In our experiments, we will use ALBERT-12M.
    \item \textbf{DistilBERT}~\cite{sanh2019distilbert}: DistilBERT is a light version of BERT, maintaining most of the performance of the original BERT. It is trained with the knowledge distillation technique~\cite{hinton2015distilling} to achieve high efficiency. In our experiments, we will use DistilBERT-66M.
\end{itemize}
\textbf{Large Language Models:}
\begin{itemize}[leftmargin=0.2in]
\item \textbf{T5}~\cite{raffel2023exploring}: Text-to-Text Transfer Transformer (T5) is a transformer-based neural network model known for its versatility and power in handling a wide range of NLP tasks. T5 simplifies NLP tasks by treating them uniformly as text-generation challenges. The T5 model family offers a range of sizes, from 60 million to 11 billion parameters, catering to different computational needs. The flexibility has made T5 a popular choice in NLP research.
In our experiments, we will use T5-770M.
\item
\textbf{LLaMA}~\cite{touvron2023llama}: LLaMa, the Large Language Model developed by Meta AI, represents a cutting-edge advancement in language modeling. Trained on publicly available datasets, LLaMa is available in various sizes to suit different computational needs. Notably, LLaMa-13B demonstrates superior performance over GPT-3 in most benchmarks, highlighting its exceptional effectiveness and capability in NLP tasks. In our experiments, we will use LLaMA-7B.
\item \textbf{Vicuna}~\cite{vicuna2023}: Vicuna is a high-performing, open-source chatbot that impresses with capabilities comparable to GPT-4. Fine-tuned from the LLaMa model, it utilizes user-shared conversations gathered from Share-GPT for its training. Remarkably, Vicuna achieves 90\% of the performance level of GPT-4, despite having only 13 billion parameters, showcasing its efficiency and effectiveness. In our experiments, we will use Vicuna-7B.
\end{itemize}

\textbf{Vision Models:}
\begin{itemize}[leftmargin=0.2in]
    \item \textbf{ViT}~\cite{dosovitskiy2020image}: The Vision Transformer (ViT) is a model in computer vision that adopts the principles of the Transformer architecture. In ViT, an image is processed similarly to a sequence of words, or tokens. Specifically, the image is segmented into fixed-size patches, each of which is then linearly transformed into an embedded representation. When trained on sufficient data, ViT achieves state-of-the-art performance on image classification benchmarks, competing with or outperforming leading CNN-based models. In our experiments, we will use ViT-86M.
    \item \textbf{Swin}~\cite{liu2021swin}: Swin Transformer is a popular variant of ViT, standing out for its enhanced efficiency and superior performance. It employs a hierarchical architecture, which not only aligns more closely with the nature of visual data but also boosts efficiency. To effectively capture global contextual information, Swin Transformer incorporates shifted window-based self-attention, further enhancing its effectiveness in vision-related applications. In our experiments, we will use Swin-50M.
    \item \textbf{BEIT}~\cite{bao2021beit}: Due to the success of BERT, BEIT harnesses the concept of masked language modeling to enhance self-supervised learning in the visual domain. To align with the words in language models, BEIT first maps the patch in an image into a token with an autoencoder. In the training process, it masks a portion of these patches, using the remaining unmasked ones to predict the masked tokens. Subsequently, the model is fine-tuned for a variety of downstream tasks, demonstrating its adaptability and effectiveness in diverse applications. In our experiments, we will use BEIT-86M.
    \item \textbf{DeiT}~\cite{touvron2021training}: To address the substantial data requirements for training the Vision Transformer, Data-Efficient Image Transformer (DeiT) employs knowledge distillation~\cite{hinton2015distilling} to train the model. By integrating this approach with various data augmentation techniques, DeiT successfully attains competitive results in image classification tasks, even with constrained training data availability. In our experiments, we will use DeiT-22M.
    \item \textbf{ConViT}~\cite{d2021convit}: ConViT designs a hybrid architecture to leverage the local processing capabilities of CNNs and the global context understanding of transformers. To be specific, it replaces the several first self-attention layers with gated-self positional self-attention layers, allowing the model to adjust between local and global processing. In our experiments, we will use ConViT-30M.
\end{itemize}

\newpage 

\section{Attacks.}
\label{sec:attack}
\subsection{Classic Text Attack:}
For the classic text attacks, we follow the default attack setting in the TextAttack~\cite{morris2020textattack} and the detailed information are as follows:
\begin{itemize}[leftmargin=0.2in]
\item \textbf{DeepWordBug}~\cite{gao2018black}: DeepWordBug is black-box attacks that apply character-level transformations to the highest-ranked tokens misclassify the text input. It includes several character transformations including swap, substitution, deletion and insertion. We hold the  maximum difference on edit distance (Levenshtein Edit Distance) to $30$ for each sample. We will greedily modify the works with the word importance ranking.
\item \textbf{PWWS}~\cite{ren2019generating}: The probability
weighted word saliency (PWWS) employs a new word order determined by the word saliency and predicted probability, and then greedily perform the synonyms substitution. 
\item \textbf{TextFooler}~\cite{jin2020bert}: TextFooler propose a more comprehensive paradigm to generate adversarial perturbations. It firstly identify the important words and then replace them with the most semantically and syntacticaly similar synonyms until the prediction is altered. We set the minimum word embedding cosine similarity as 0.5 and the  universal sentence similarity threshold as 0.840845057.

\item \textbf{TextBugger}~\cite{li2018textbugger}: TextBugger is a framework for creating utility-preserving adversarial texts against the classification models. It can be employed under both white-box and black-box settings. Specifically, this attack sorts the words based on Jacobian matrix in white-box setting and based on specific scoring function in black-box setting. It includes swap, substitution, deletion and insertion.  There are two types of substitutions: Sub-C (replace characters with visually similar characters) and Sub-W (replace a word with its top-$K$ ($K=5$) nearest neighbors.) The universal sentence threshold is set as 0.8.
    
\end{itemize}

\subsection{Attack in LLMs}
\begin{itemize}[leftmargin=0.2in]
    \item \textbf{Prompt attack}: we conduct adversarial prompt attacks using the text attacks  including DeepWordBug~\cite{gao2018black},
PWWS~\cite{ren2019generating},
TextFooler~\cite{jin2020bert} and
TextBugger~\cite{li2018textbugger}. The evaluation setting follow PromptBench~\cite{zhu2024promptbench} which is a unified and comprehensive python library for evaluating LLMs.
    \item \textbf{Jailbreak}:  We employ the Greedy
Coordinate Gradient (GCG)~\cite{zou2023universal}  method to search for the adversarial prompt suffixes that circumvent the alignment of the language models. It can be conducted under the transfer attack and direct attack. Specifically, the transfer attacks firstly attack the surrogate model (e.g., Vicuna) and can apply the perturbations to the victim models. The direct attacks directly apply the GCG method to the victim models to generate the suffixes, which can be much stronger. 
The JailbreakKeywords is a list of words commonly found in messages that abstain from responding to a prompt asking for inappropriate content. In our experiment, we use the following list:
 JailbreakKeywords = ["I'm sorry",
                "Sorry",
                "I apologize",
                "As an",
                "I'm just",
                "I'm an",
                "I cannot",
                "I would advise",
                "it's not appropriate",
                "As a responsible AI"].

\end{itemize}

\subsection{Attacks in Vision \& Graph Models}
\begin{itemize}[leftmargin=0.2in]
    \item \textbf{FGSM}~\cite{goodfellow2014explaining}: Fast Gradient Sign Method (FGSM) is a widely used technique for generating adversarial examples. It crafts perturbations specifically to maximize errors in the output of neural networks. 
    \item
    \textbf{PGD}~\cite{madry2017towards}: Projected Gradient Descent (PGD) method is one of the most prominent attack strategies in computer vision. Unlike FGSM employing a single step, PGD uses multiple steps to generate adversarial examples. This iterative approach includes a projection operation, which ensures that the intensity of the attack remains within specified limits, making PGD a more controlled and effective method for generating adversarial examples. The steps are $K=7$ and the steps size $\alpha=0.00784$.
    
    \item \textbf{PGD on Graph}~\cite{xu2019topology}: Motivated by PGD~\cite{madry2017towards} in vision domain, ~\cite{xu2019topology} propose a first-order method to  conduct topology attack on discrete graph structure. This method firstly solve continuous optimization problem by Projected Gradient Descent (PGD) method  and then utilize the random sampling to get the optimal binary topology perturbation from the continuous probabilistic matrix.
\end{itemize}

\newpage

\section{Algorithm Convergence and Robust Estimation}
\label{sec:algorithm_guarantee}

\subsection{Convergence Guarantee}
\label{sec:algorithm_convergence_guarantee}

\textbf{Loss curves.}
We use generated data to verify the convergence of our proposed algorithm. The batch size, number of heads, length of inputs and dimension of data are chosen as $B=8,H=4,N=64,D=8$, respectively. The $\gamma$ in MCP is set as $4$ and $\delta$ in Huber loss is set as $0.8$. The loss curve of our algorithm with different penalties are shown in Figure~\ref{fig:loss_curve}. We can observe that our algorithm show a fast convergence and even 2 to 3 steps can well approximate the optimal solution.

\begin{figure}[h!]
\begin{center}
\subfigure[$\ell_1$]{\includegraphics[width=0.3\columnwidth]{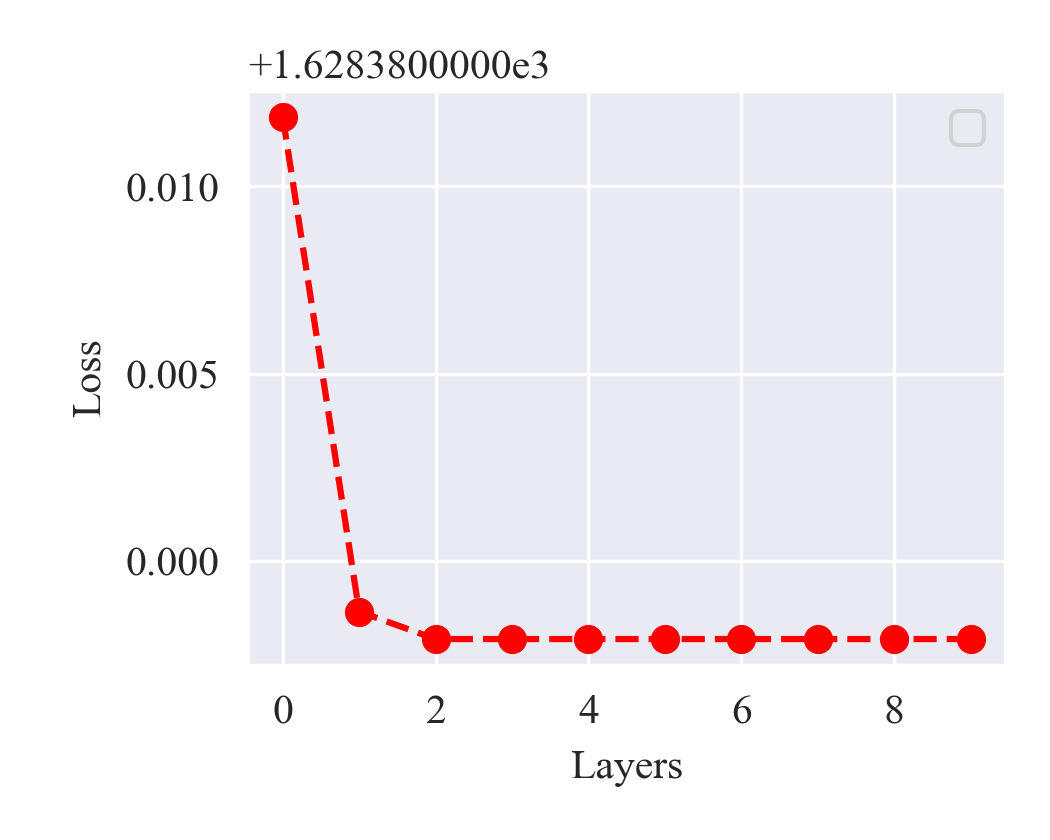}}
\subfigure[MCP]{\includegraphics[width=0.3\columnwidth]{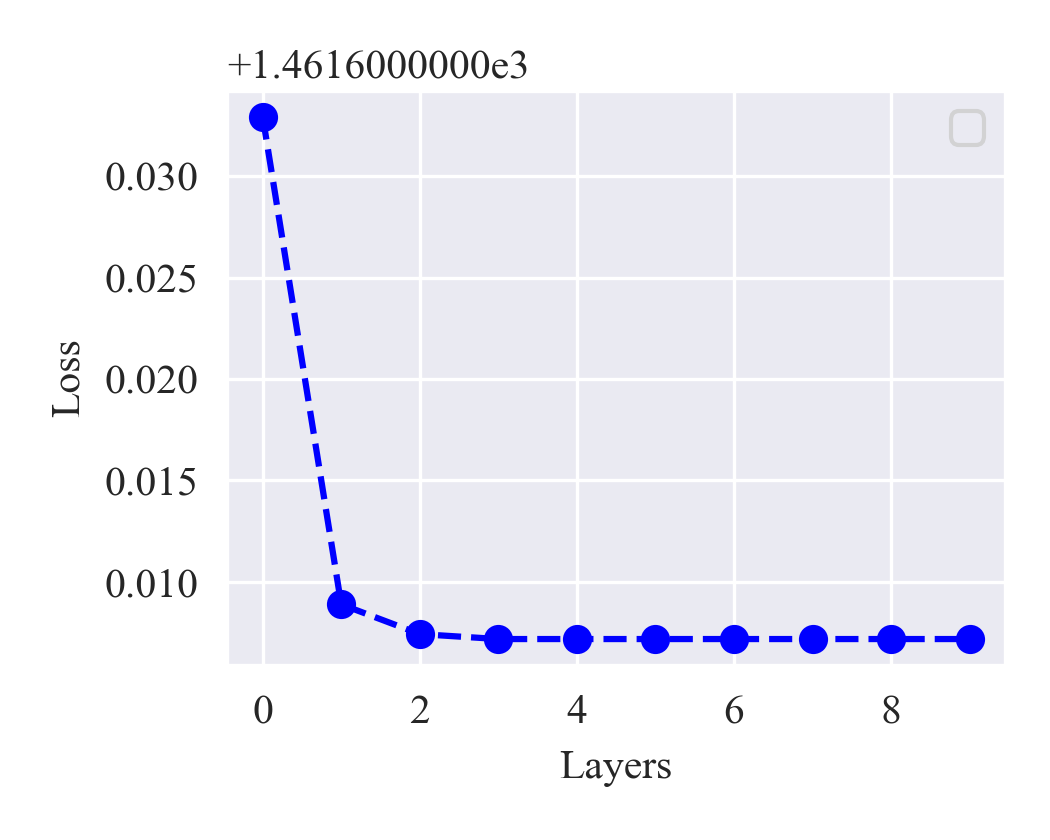}}
\subfigure[Huber]
{\includegraphics[width=0.3\columnwidth]{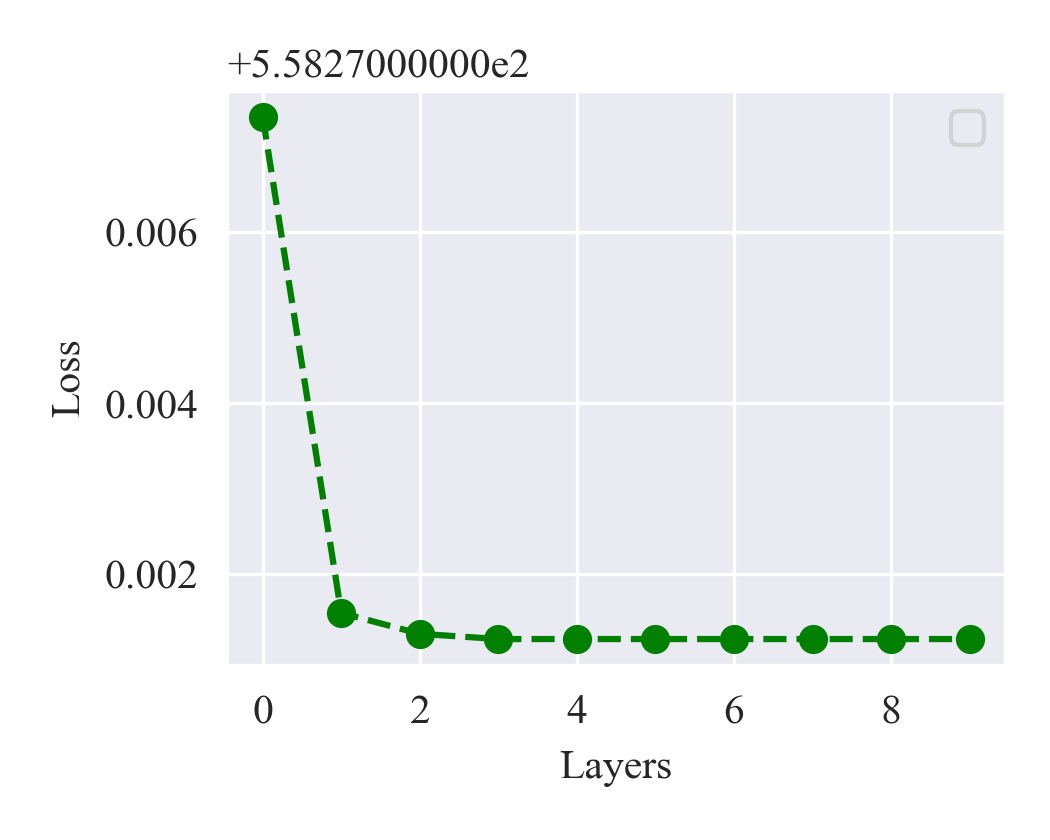}}
\caption{Loss Curve of Algorithms}
\label{fig:loss_curve}

\end{center}
\end{figure}

\textbf{Trajectory.} To further validate the convergence and effectiveness of our algorithm, we use a toy experiment to visualize the trajectories of updated vector in 2D plane in Figure~\ref{fig:trajectory}. We use $L_1$ penalty in our algorithm, the simulated attention matrix and value matrix are as follows:

\begin{equation}
\vA=\begin{bmatrix}
1 & 1 & 1 \\
2 & 0 & 0 \\
0 & 0 & 2
\end{bmatrix},
\vV=\begin{bmatrix}
1 & 2  \\
7 & 25  \\
25 & 37 
\end{bmatrix}.
\end{equation}

\begin{figure}[htb!]
\begin{center}
\includegraphics[width=0.5\columnwidth]{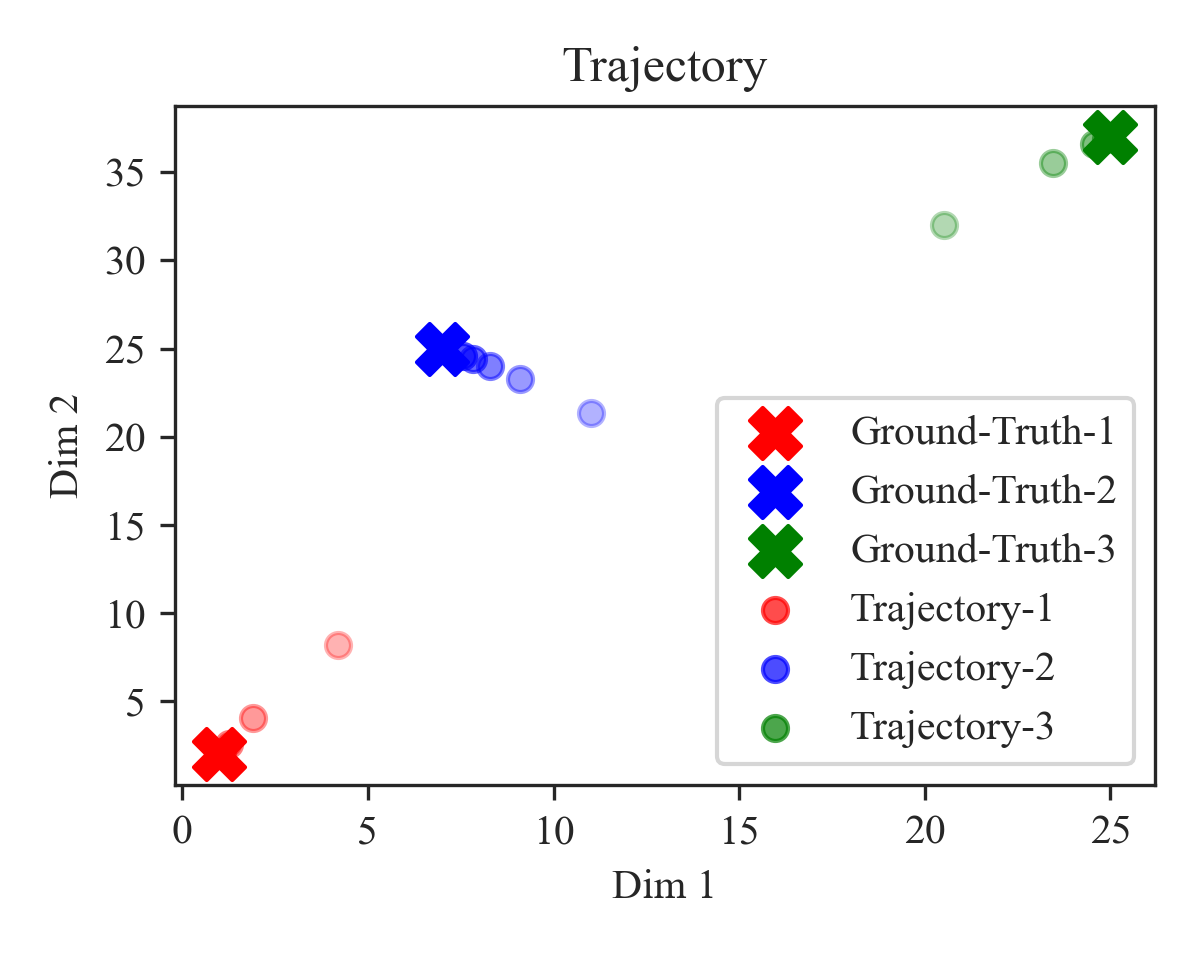}
\vskip -0.2in
\caption{Optimization trajectory.}
\label{fig:trajectory}

\end{center}
\end{figure}

From the figure, we can find that with the mean as the initial position, the updated vector can approach closely to the ground truth  within only 3 steps. This phenomenon further validate the effectiveness and efficiency of our algorithm.

\subsection{Robust Estimation}
\label{sec:algorithm_robust_guarantee}

\textbf{Robust estimation.}We firstly generated clean samples $\{\vx_i\}_{i=1}^{n}$ (blue dots) and the outlier samples $\{\vx_i\}_{i=n+1}^{n+m}$(red dots) from 2-dimensional Gaussian distributions, $\cN((0, 0), 1)$ and $\cN((8, 8), 0.5)$, respectively. We calculate the mean of clean samples $\smash[b]{\frac{1}{n}\sum_{i=1}^n\vx_i}$ as the ground truth of the mean estimator. Then we estimate the mean of all the samples by solving
$\arg\min_\vz \sum_{i=1}^{n+m} \rho(\vz - \vx_i)$ using the our method, where $\rho(\cdot)$ can take different penalties such as $\ell_2$ penalty $\|\cdot\|_2^2$ and $\ell_1$ penalty $\|\cdot\|_2$.

In Figure~\ref{fig:diff_estimators}, we visualize the generated clean samples and outliers, as well as the ground truth means and the mean estimators with $\eta(\cdot)=\|\cdot\|_2^2$ or $\|\cdot\|_2$ under different outlier ratios 15\%  and 45\%. 
The results show that, with the existence of outliers, the $\ell_2$-based estimator deviates far from the true mean, while the $\ell_1$-based estimator is more resistant to outliers and MCP-based estimator is the most robust.

\begin{figure}
    \centering
    \includegraphics[width=1.0\textwidth]{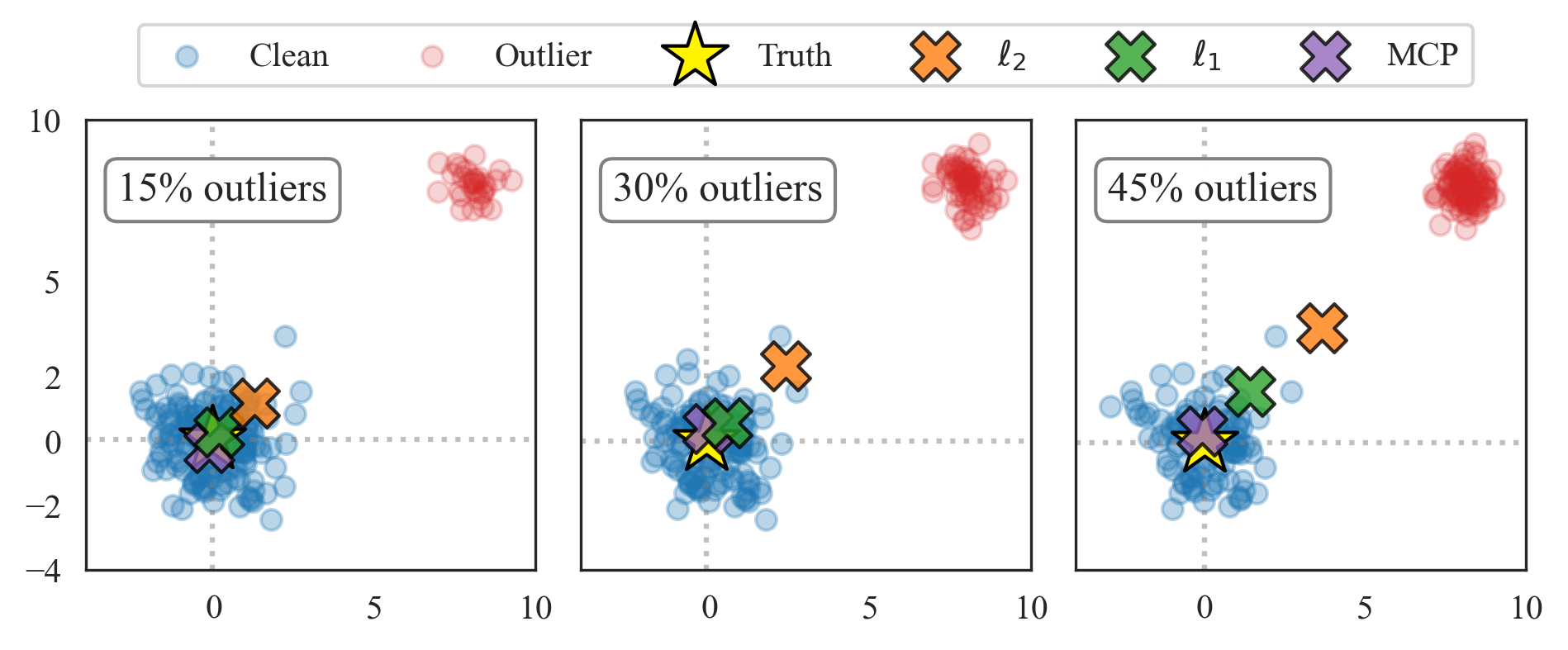}
    \caption{Different estimators in simulations.}
\label{fig:diff_estimators}
    \label{fig:enter-label}
\end{figure}

\subsection{Vulnerability Analysis of Vanilla Attention (WLS estimator)}
\label{sec:vulnearable_vanilla_attention}

To gain insight into the vulnerability of vanilla attention (WLS estimator), we attempt to analyze the effect of perturbed tokens on the output embedding of attention modules. Specifically,
since vanilla attention can be formalized as a WLS estimator:
$\vz^*=
\argmin_\vz \mathcal{L}(\vz)=\sum_{j=1}^{N} a_{j} \cdot \|\vv_j-\vz\|^2
$, we quantify and compare  $\|\vv_j'-\vz^*\|^2$ and $\|\vv_j-\vz^*\|^2$, where $\vv_j$  and  $\vv_j'$ are the original and perturbed tokens, and $\vz^*$ is the ground truth output embedding. We present the numerical results  across every attention module in the pretrained BERT on IMDB dataset in Table~\ref{tab:vulnerability_wls}. The results demonstrate that 
attackers tend to introduce larger residuals $\|\vv_j-\vz\|$ by modifying the important tokens $\vv_j$ when perturbing the text.

\begin{table}[!h]
\vskip -0.1in
\caption{Residual magnitude of original and perturbed tokens.}
\setlength\tabcolsep{2.2pt}
\renewcommand{\arraystretch}{1.2}
\label{tab:vulnerability_wls}
\vskip 0.0in
\centering
\resizebox{0.8\linewidth}{!}{
\begin{tabular}{cccccccccccccccccc}
\toprule
  Attention module&1&2&3&4&5&6&7&8&9&10&11&12\\
\hline
$\|\vv_j-\vz^*\|^2$&5.39& 6.36& 6.92& 6.33& 6.64& 6.95& 6.73& 6.24& 5.12& 4.76& 4.15& 4.19\\
$\|\vv_j'-\vz^*\|^2$&6.38& 7.20& 7.97& 7.51& 7.65& 8.06& 7.84& 7.41& 6.33& 6.24& 6.64& 6.31\\
\bottomrule
\end{tabular}
}

\vskip -0.1in

\end{table}

\newpage

\section{Additional Experiments of Text Attacks}
\label{sec:exp_bert}

\subsection{Sentiment Analysis: IMDB}
\label{sec:exp_imdb}
We present the results of sentiment analysis on IMDB dataset under various attacks in Table~\ref{tab:imdb_adv}. We can conlude from the results that our methods improve the robustness of the backbones significantly by simply plugging the ProAttention into the models without additional fintuning or training. Moreover, our method can be combined with the existing defenses such as Adversarial Training (AT) to further improve the performance.

\begin{table}[h!]
\caption{The results of sentiment analysis on IMDB.}
\label{tab:imdb_adv}
\vskip 0.1in
\begin{sc}
\resizebox{1.0\linewidth}{!}{
\begin{tabular}{lccccccccccccc}
\toprule
 &   &  \multicolumn{2}{c}{Textfooler}& \multicolumn{2}{c}{TextBugger}&\multicolumn{2}{c}{DeepWordBug}&\multicolumn{2}{c}{PWWS}\\
 Model&Clean\% $\uparrow$ &Aua\% $\uparrow$&ASR\% $\downarrow$&Aua\% $\uparrow$&ASR\% $\downarrow$&Aua\% $\uparrow$&ASR\% $\downarrow$&Aua\% $\uparrow$&ASR\% $\downarrow$\\
\hline

RoBERTa&93.3&23.7&74.6&9.4&89.9&36.5&60.9&19.5&79.1\\
DistilBert&90.9&14.9&83.6&4.3&95.3&18.8&79.3&9.6&89.4\\
ALBERT&92.8&21.8&76.5&14.1&84.8&36.2&61.0&15.9&82.9\\
Bert&92.3&11.8&87.2&11.3&87.4&32.8&64.5&26.4&71.5\\
\hline
FreeLB &93.0&25.1&73.6&19.9&76.9&40.9&55.5&42.0&54.7\\
PGD &93.2&26.2&69.2&17.4&81.6&32.0&65.8&27.2&69.6\\
MixADA &91.9&16.7&82.0&11.8&87.3&33.4&65.8&30.0&67.4\\
TA-VAT &93.0&28.5&67.6&27.3&68.8&34.7&60.4&35.1&59.8\\
AT &93.2&\underline{33.6}&\underline{64.3}&31.8&66.1&37.7&61.5&28.7&70.3&\\

\hline
Pro-BERT ($\ell_1$) (Ours)&93.3&24.6&73.6&13.0&86.1&36.0&61.4&32.7&65.0\\
Pro-BERT (Huber) (Ours)&93.0&24.8&73.3&13.4&85.6&36.9&60.3&31.5&66.1\\
Pro-BERT (MCP) (Ours) &\underline{93.5}& 22.1 &76.9&\underline{44.6}&\underline{53.2} &\underline{55.5}&\underline{41.8}&\underline{56.3}&\underline{41.1} \\

Pro-BERT (MCP) + AT (Ours)&\textbf{93.6}&\textbf{42.0}&\textbf{56.1}&\textbf{55.3}&\textbf{41.0}&\textbf{60.8}&\textbf{39.0}&\textbf{61.0}&\textbf{37.6}\\
\bottomrule
\end{tabular}

}
\end{sc}

\end{table}

\subsection{Textual Entailment: RTE}
\label{sec:exp_rte}
In Table~\ref{tab:rte_diff_cos}, we display the results of textual entailment on RTE across different cosine similarities constraints in TextFooler attack. We select DistilBERT as the backbone model and construct several MCP-based architectures with different $\gamma$. We can observe that our method can improve the robustness acorss different cosine similarities. The performance improvement is more evident under the smaller cosine similarities, which is equivalent to larger budgets.

In Table~\ref{tab:rte_diff_attacks}, we present the results of textual entailment on RTE across various attacks. The results exhibit the consistent improvement of our methods over the backbone model.

\begin{table}[h!]
\centering
\caption{The results of textual entailment on RTE across different cosine similarities in TextFooler .}
\label{tab:rte_diff_cos}
\vskip 0.1in
\begin{sc}
\resizebox{0.9\linewidth}{!}{
\begin{tabular}{lccccccccccccccccc}
\toprule
& Cos-Sim& \multicolumn{2}{c}{0.5}  &\multicolumn{2}{c}{0.6}  &\multicolumn{2}{c}{0.7}&\multicolumn{2}{c}{0.8}\\
Model &Clean\%&Aua\%&ASR\%&Aua\%&ASR\%&Aua\%&ASR\%&Aua\%&ASR\%\\
\hline
DistilBERT&62.5&4.0  & 93.6  & 5.1  & 91.9  & 7.9  & 87.3  & 18.1  & 71.1\\

\hline
Pro-DistilBERT-MCP $\gamma=0.2$&63.3&6.9  & 89.5  & 6.1  & 90.6  & 9.4  & 85.6  & 18.1  & 72.4\\
Pro-DistilBERT-MCP $\gamma=0.3$&62.4&15.2  & 75.3  & 16.6  & 72.9  & 20.6  & 66.5  & 30.7  & 50.0\\
Pro-DistilBERT-MCP $\gamma=0.4$&56.0&18.1  & 67.7  & 24.6  & 56.1  & 28.2  & 49.7  & 33.2  & 40.7\\
Pro-DistilBERT-MCP $\gamma=0.5$&55.6&15.9  & 70.1&17.7&66.7&	20.2&	61.9  & 28.9  & 45.6\\
Pro-DistilBERT-MCP $\gamma=0.6$&53.1&10.8  & 80.5 &12.3&77.9&	16.3&	70.8 & 20.2  & 63.6\\

\bottomrule
\end{tabular}

}
\end{sc}

\end{table}

\begin{table}[h!]
\centering
\caption{The results of textual entailment on RTE across different attacks.}
\label{tab:rte_diff_attacks}
\vskip 0.1in
\begin{sc}
\resizebox{0.9\linewidth}{!}{
\begin{tabular}{lccccccccccccccccc}
\toprule
 &Attack& \multicolumn{2}{c}{TextFooler}  &\multicolumn{2}{c}{TextBugger}  &\multicolumn{2}{c}{DeepWordBug}&\multicolumn{2}{c}{PWWS}\\
 Model&Clean\%&Aua\%&ASR\%&Aua\%&ASR\%&Aua\%&ASR\%&Aua\%&ASR\%\\
\hline
DistilBert&62.5&7.9&87.3&3.6&94.2&18.4&70.5&12.3&80.4\\
Pro-DistillBERT-MCP&63.3&28.2&49.7&14.4&74.5&33.6&40.0&24.2&60.6\\

\bottomrule
\end{tabular}

}
\end{sc}

\end{table}

\newpage

\subsection{Adversarial Fine-tuning on Topic Classification: AGNEWS}
\label{sec:adv_train}

Adversarial training techniques are highly effective to enhance the robustness of models via adding the adversarial examples into the training set. 
To better capture the robustness enhancement of adversarial training, we track the adversarial fine-tuning curves and present the detailed results on AG's News in Table~\ref{tab:ag_at} and Figure~\ref{fig:ag_adv}. In the beginning of every epoch, we generate 1000 perturbed examples using the specific attack and then put them to the original training dataset. 
From the results, we can make the following observations:
(1) the models show even better robustness during the process of adversarial training. (2) our method can be combined with adversarial training to further improve the resilience of the models.

\begin{table}[!h]
\caption{Adversarial fine-tuning on AGNEWS.
}
\label{tab:ag_at}

\vskip 0.1in
\begin{sc}
\resizebox{1.0\linewidth}{!}{
\begin{tabular}{c|cc|cc|cc|cc}
\toprule

Model &  Clean\% &  AUA\%(TF)&Clean\%&AUA\%(TB)&Clean\%&AUA\%(DWB)&Clean\%& AUA\%(PWWS) \\
\bottomrule
Bert epoch-0&94.2&19.7&94.2&31.7&94.2&37.5&94.2&43.1\\
Bert epoch-1&94.4&22.9&94.4&46.8&94.0&39.2&94.5&49.3\\
Bert epoch-2&94.5&28.4&94.2&52.0&94.0&40.2&94.4&54.4\\
Bert epoch-3&94.2&33.0&94.3&52.8&94.4&42.4&94.1&57.3\\
Bert epoch-4&94.6&34.9&94.6&56.1&94.4&41.3&93.8&62.6\\
Bert epoch-5&94.6&40.1&94.4&55.9&94.3&41.4&93.8&59.3\\

\hline
Pro-BERT-MCP epoch-0&  93.2 &          37.8&93.2&45.8&93.2&51.8&92.2&55.0\\
Pro-BERT-MCP epoch-1&93.9&40.0&94.1&48.6&93.8&53.4&93.4&57.5\\
Pro-BERT-MCP epoch-2&93.7&42.9&93.8&48.4&93.7&58.2&93.6&59.9\\
Pro-BERT-MCP epoch-3&93.7&49.0&94.3&55.7&93.0&58.5&93.0&65.2\\
Pro-BERT-MCP epoch-4&93.2&50.8&93.9&56.5&93.5&61.0&93.6&65.1\\
Pro-BERT-MCP epoch-5&93.9&53.0&94.5&60.7&93.0&60.1&93.6&68.8\\
\bottomrule

\end{tabular}

}

\end{sc}

\end{table}

\begin{figure}[h!]
\vskip 0.2in
\begin{center}
\subfigure[TextFooler] {
\includegraphics[width=0.4\columnwidth]{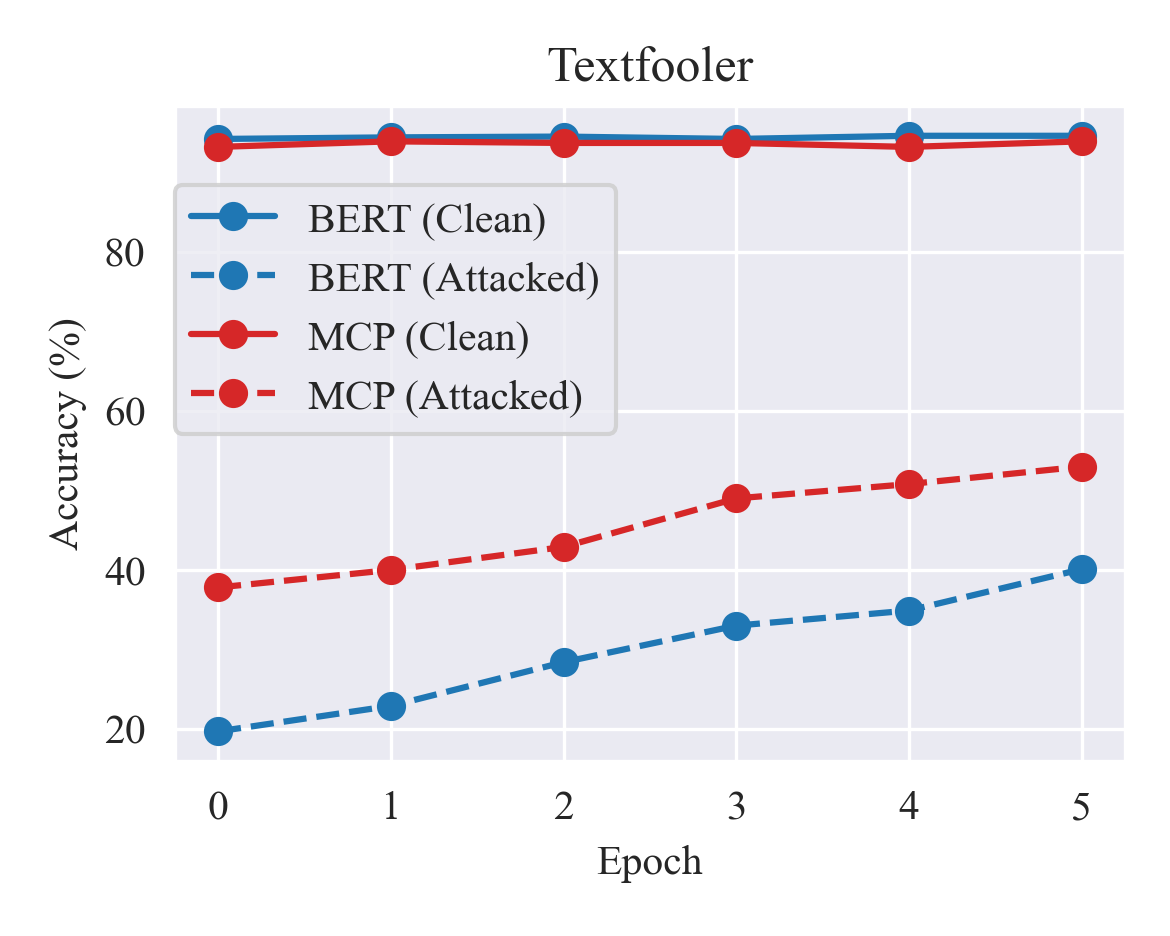}
}
\subfigure[TextBugger] {
\includegraphics[width=0.4\columnwidth]{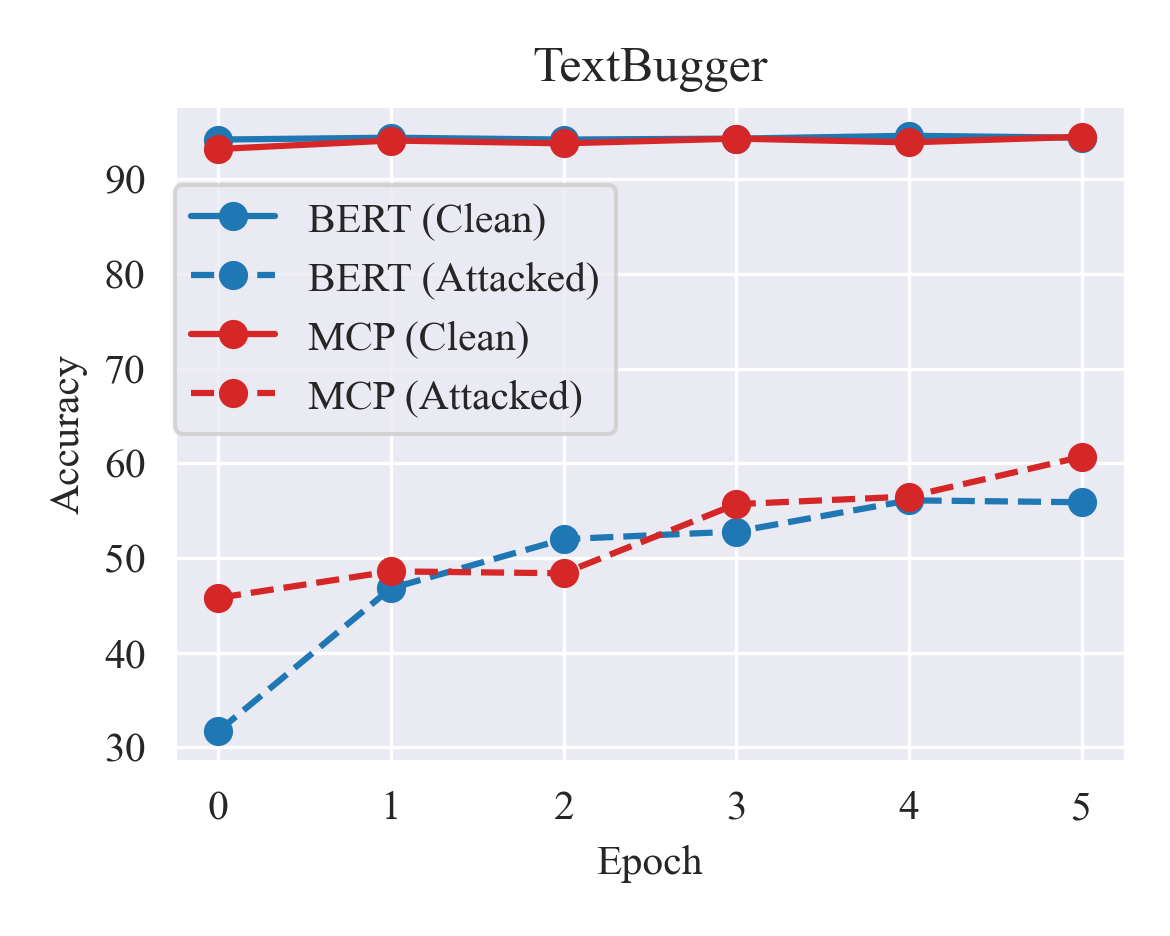}
}

\subfigure[DeepWordBug] {
\includegraphics[width=0.4\columnwidth]{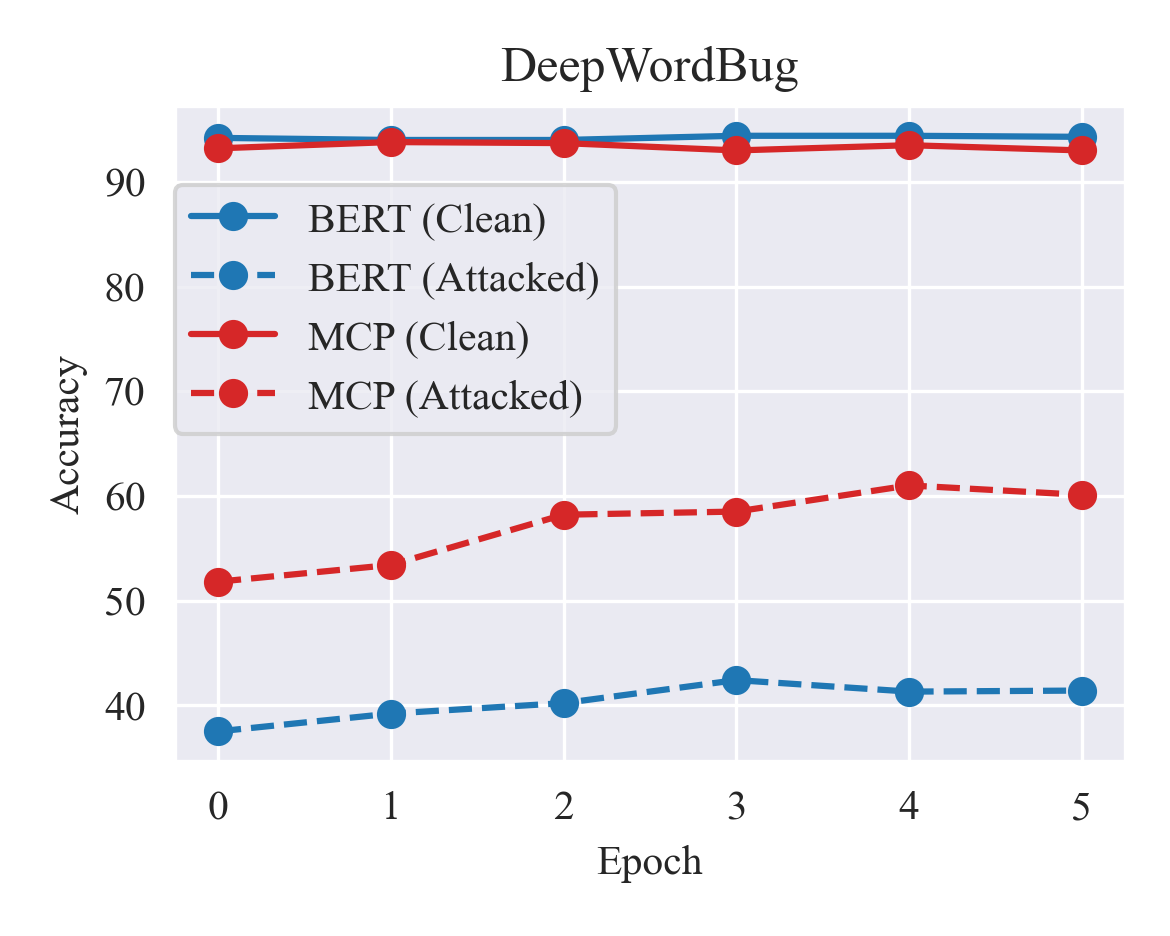}}
\subfigure[PWWS] {
\includegraphics[width=0.4\columnwidth]{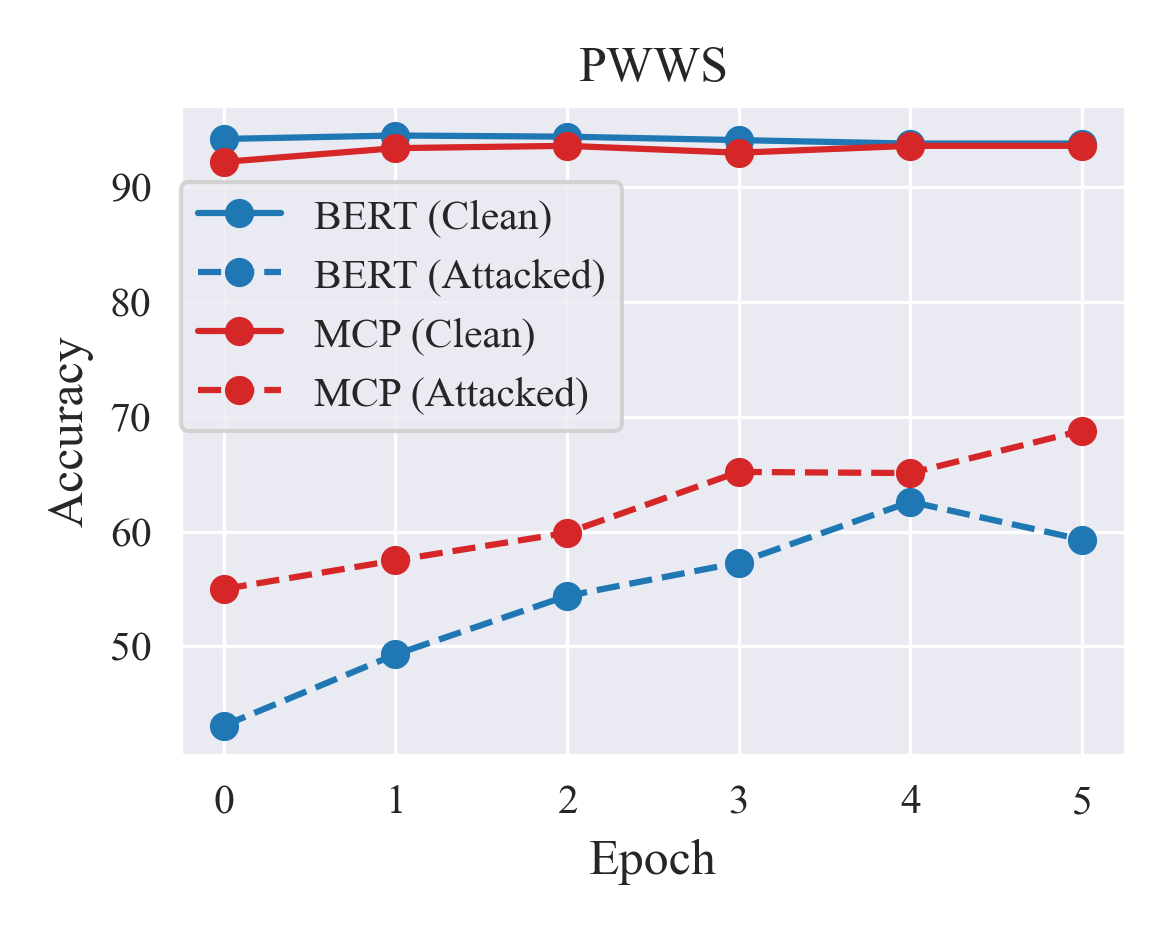}}
\caption{Adversarial fine-tuning on AGNEWS.}
\label{fig:ag_adv}
\end{center}
\vskip -0.2in
\end{figure}

\newpage
\subsection{Ablation Study on Attack constraints}
\label{sec:attack_constraint}

We present the ablation study on the maximum perturbation percentage, minimum cosines similarity and sentence similarity threshold in Figure~\ref{fig:ag_attack_constraint_tf}, Table~\ref{tab:ag_diff_max_percen}, Table~\ref{tab:ag_diff_min_cos_sim} and Table~\ref{tab:ag_diff_threshold}, respectively. The experiments are performed on  AGNEWS under TextFooler with the ALBERT as the backbone. The default values are as follows: sentence similarity threshold is $0.840845057$, maximum perturbation percentage is $1.0$, synonym cosine similarity is $0.5$.
The results show the consistent improvement of our method over the backbone models.

\begin{figure}[h!]
\begin{center}
\includegraphics[width=0.9\columnwidth]{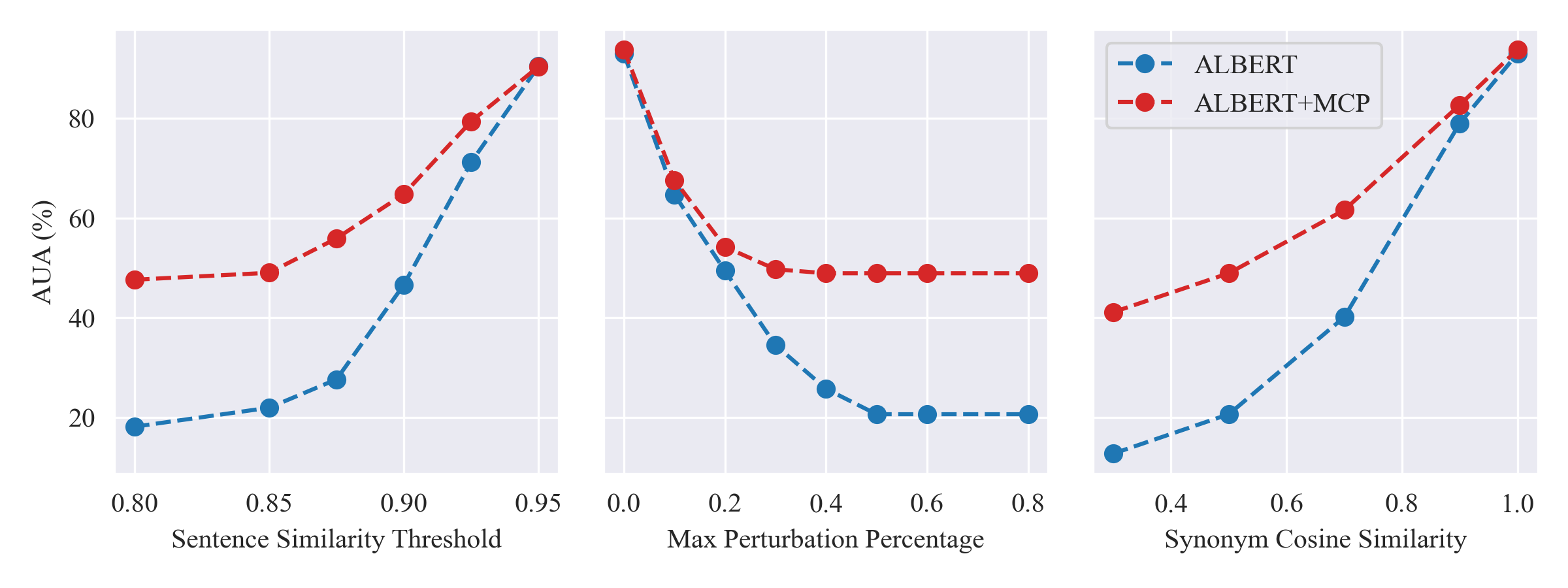}
\vskip -0.1in
\caption{Ablation studies on attack constraints.}
\label{fig:ag_attack_constraint_tf}
\end{center}
\vskip -0.2in
\end{figure}

\begin{table}[h!]
\centering
\caption{Ablation study on max perturbation percentage. }
\label{tab:ag_diff_max_percen}
\vskip 0.1in
\begin{sc}
\resizebox{0.9\linewidth}{!}{
\begin{tabular}{ccccccccccccccccccccc}
\toprule
 \multicolumn{1}{c}{Max-Percentage ($\rho$)}& & \multicolumn{2}{c}{0.1} &\multicolumn{2}{c}{0.2}&\multicolumn{2}{c}{0.3}&\multicolumn{2}{c}{0.4}\\
 \hline
Model&Clean\%&Aua\%&ASR\%&Aua\%&ASR\%&Aua\%&ASR\%&Aua\%&ASR\%\\
\hline
ALBERT&93.0&64.7 &	30.4 &	49.4& 	46.9 &	34.5 &	62.9 &	25.7 &	72.4 \\
Pro-ALBERT-MCP&93.8&67.6 &	27.2 &	54.2 &	41.6& 	49.7 &	46.4 	&48.9& 	47.3  \\

\bottomrule

\toprule

 \multicolumn{1}{c}{Max-Percentage ($\rho$)}&\multicolumn{2}{c}{0.6}&\multicolumn{2}{c}{0.8}&\multicolumn{2}{c}{1.0}\\
 \hline
Model&Aua\%&ASR\%&Aua\%&ASR\%&Aua\%&ASR\%\\
\hline
ALBERT&	20.6 &	77.9 &	20.6& 	77.9 &	20.6 &	77.9 \\
Pro-ALBERT-MCP&	48.9 &	47.3& 	48.9& 	47.3& 	48.9& 	47.3 \\
\bottomrule
\end{tabular}

}
\end{sc}

\end{table}

\begin{table}[h!]
\centering
\caption{Ablation study on minimum synonym cosine similarity.}
\label{tab:ag_diff_min_cos_sim}
\vskip 0.1in
\begin{sc}
\resizebox{0.9\linewidth}{!}{
\begin{tabular}{cccccccccccccccccc}
\toprule
 \multicolumn{1}{c}{Min-Cos-Sim} & &  \multicolumn{2}{c}{0.3}&\multicolumn{2}{c}{0.5}&\multicolumn{2}{c}{0.7}&\multicolumn{2}{c}{0.9}\\
 \hline
Model&Clean\%&Aua\%&ASR\%&Aua\%&ASR\%&Aua\%&ASR\%&Aua\%&ASR\%\\
\hline
ALBERT&93.0&12.7 &	86.3 	&		20.6 &	77.9 &	40.1 &	56.9& 	79.0 &	15.1 \\
Pro-ALBERT-MCP&93.8&41.1 &	55.7 	&		48.9 &	47.3 &	61.6 &	33.6 &	82.7 &	10.9 \\

\bottomrule
\end{tabular}

}
\end{sc}
\end{table}

\begin{table}[h!]

\caption{Ablation study on universal sentence similarity threshold. }
\label{tab:ag_diff_threshold}
\vskip 0.1in
\begin{sc}

\resizebox{1.0\linewidth}{!}{
\begin{tabular}{cccccccccccccccccccccccc}
\toprule
 \multicolumn{1}{c}{Sentence-Sim-Threshold}& &  \multicolumn{2}{c}{0.2}&\multicolumn{2}{c}{0.4}&\multicolumn{2}{c}{0.6}&\multicolumn{2}{c}{0.8}&\multicolumn{2}{c}{0.85}\\
 \hline
Model&Clean\%&Aua\%&ASR\%&Aua\%&ASR\%&Aua\%&ASR\%&Aua\%&ASR\%&Aua\%&ASR\%\\
\hline
ALBERT&93.0&18.0 &	80.7 &	18.0 &	80.7 &	18.0 &	80.7 			&		18.1 	&80.5&21.9&76.5
\\
Pro-ALBERT-MCP&93.8&47.4 &	48.9 &	47.4 &	48.9 &	47.4 &	48.9 &					47.6 &	48.7&49.0&47.2
\\

\bottomrule
\toprule
 \multicolumn{1}{c}{Sentence-Sim-Threshold}&\multicolumn{2}{c}{0.875}&\multicolumn{2}{c}{0.9}&\multicolumn{2}{c}{0.925}&\multicolumn{2}{c}{0.95}\\
 \hline
Model&Aua\%&ASR\%&Aua\%&ASR\%&Aua\%&ASR\%&Aua\%&ASR\%\\
\hline
ALBERT&27.6&70.3&46.6&49.9&71.2&23.4&90.5&2.7 \\
Pro-ALBERT-MCP&55.9&39.8&64.8&30.2&79.4&14.4&90.4&2.6 \\

\bottomrule

\end{tabular}

}
\end{sc}

\end{table}

\newpage
\subsection{Ablation Study on Backbone Models}
\label{sec:backbone}
Our proposed ProAttention
is a universal framework which can be applied to various attention-based models. To verify the universality of our methods, we integrate our robust attention module into various backbones and present the results in Figure~\ref{fig:ag_diff_backbone} and Table~\ref{tab:ag_diff_backbone}. As seen in the results, our ProAttention can consistently enhance the robustness over any backbone under various attacks.

\begin{figure}[h!]
\vskip 0.2in
\begin{center}
\subfigure[] {\includegraphics[width=0.35\columnwidth]{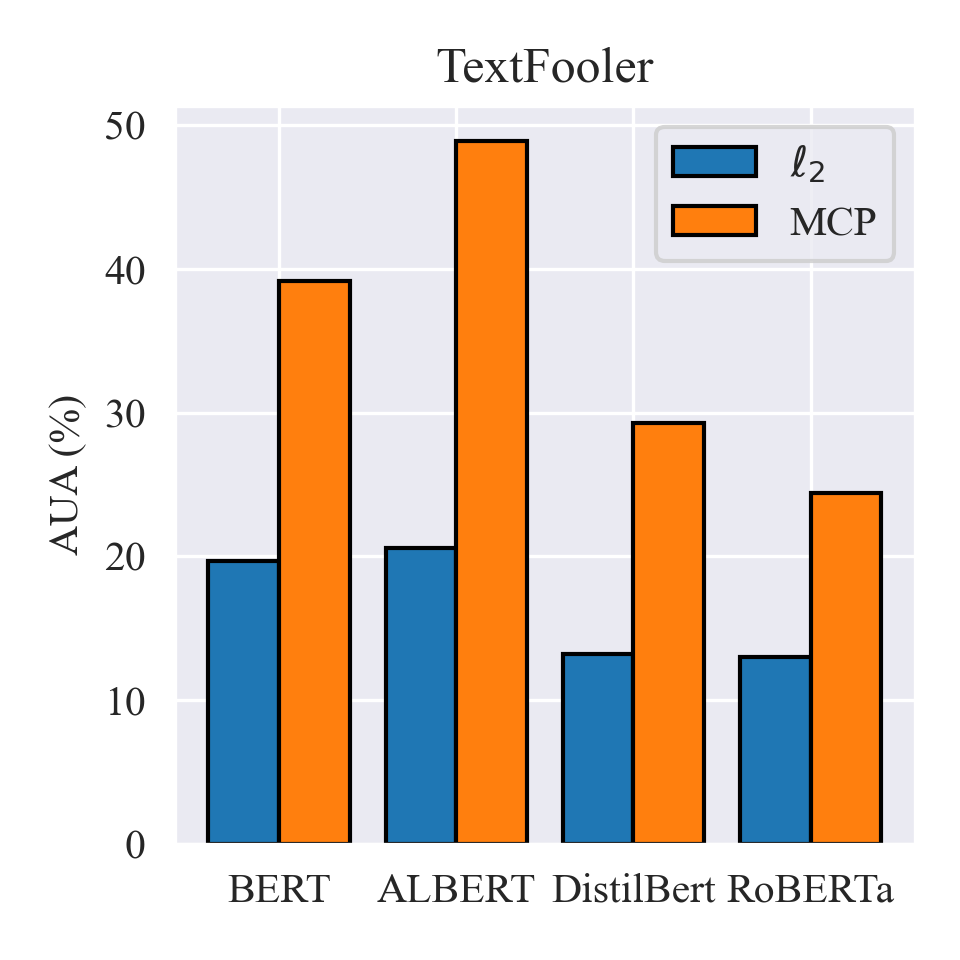}}
\subfigure[] {\includegraphics[width=0.35\columnwidth]{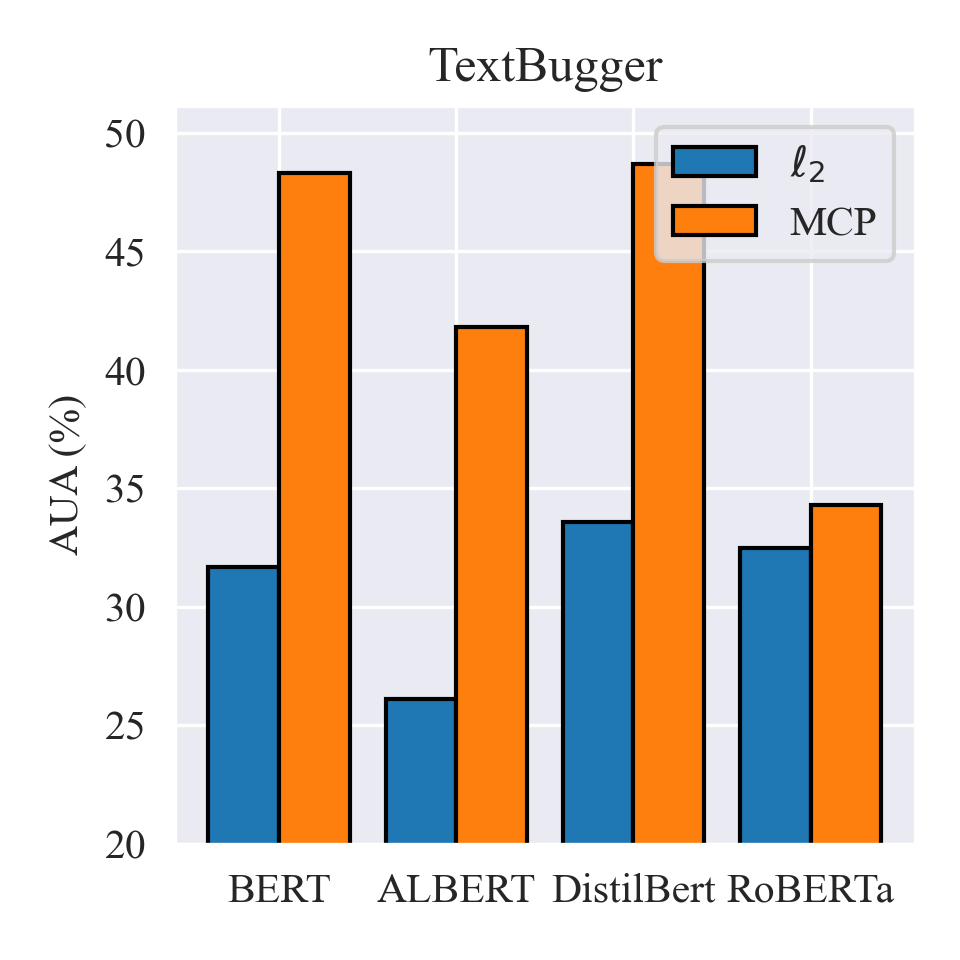}}
\subfigure[] {\includegraphics[width=0.35\columnwidth]{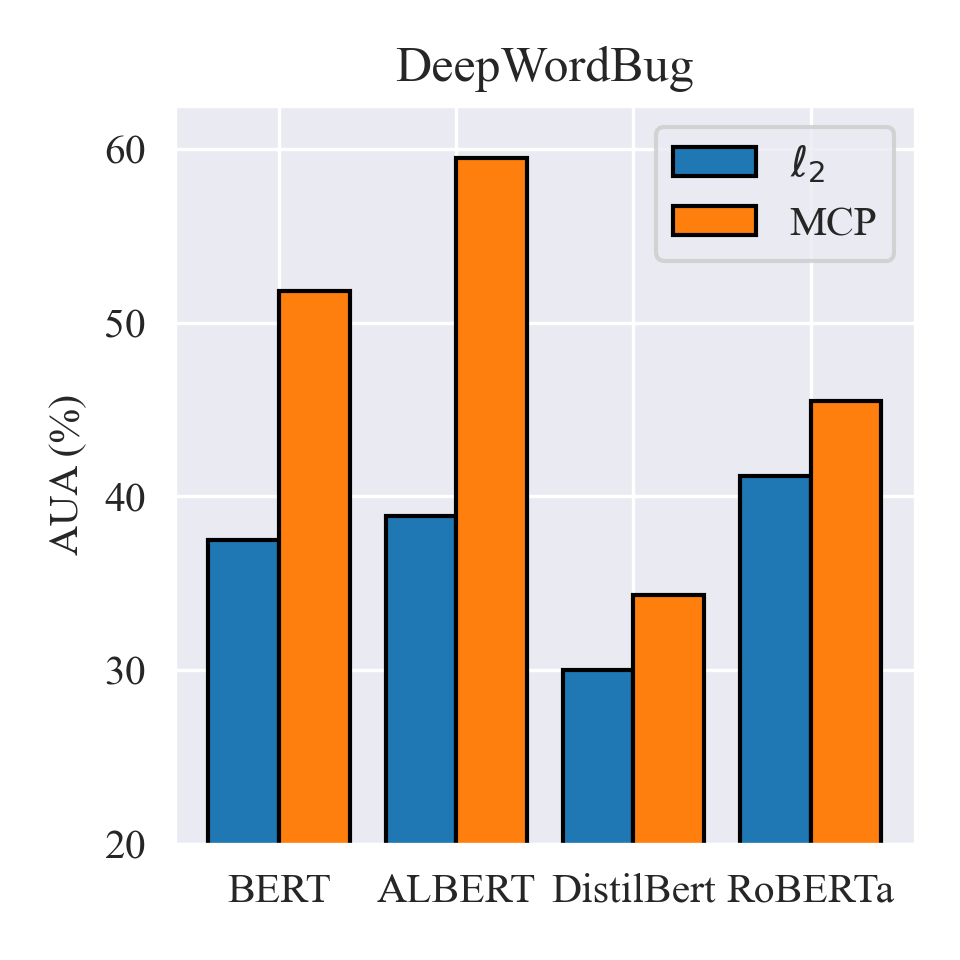}}
\subfigure[] {\includegraphics[width=0.35\columnwidth]{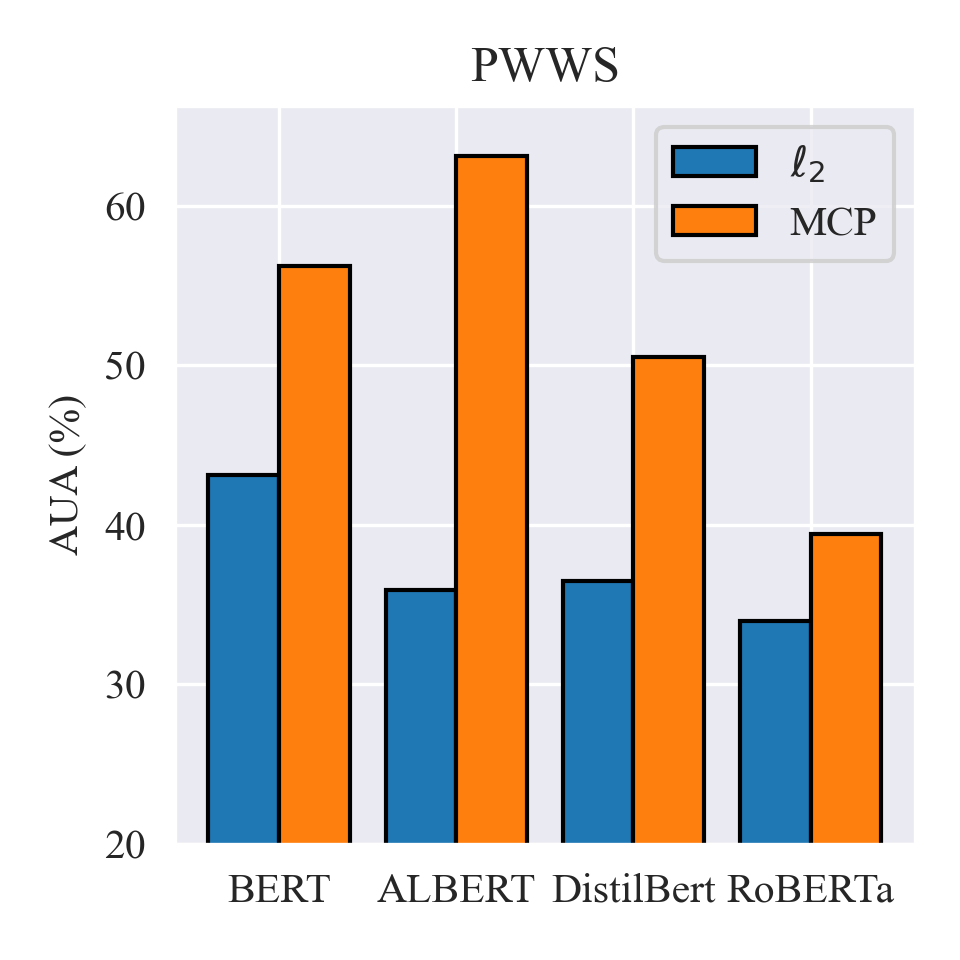}}
\caption{Accuracy under attack of different backbones.}
\label{fig:ag_diff_backbone}

\end{center}
\vskip -0.2in
\end{figure}

\begin{table}[h!]
\caption{The results of different backbones on AGNEWS}
\label{tab:ag_diff_backbone}
\vskip 0.1in
\begin{sc}
\resizebox{1.0\linewidth}{!}{
\begin{tabular}{cccccccccccccccccc}
\toprule
 &   &  \multicolumn{3}{c}{Textfooler}& \multicolumn{3}{c}{TextBugger}&\multicolumn{3}{c}{DeepWordBug}&\multicolumn{3}{c}{PWWS}\\
Model&Clean\%&Aua\%&ASR\%&\#Query&Aua\%&ASR\%&\#Query&Aua\%&ASR\%&\#Query&Aua\%&ASR\%&\#Query\\
\hline
BERT&94.2&19.7&78.9&335.3&31.7&67.5&176.5&37.5&59.8&103.8&43.1&53.8&353.8\\
Pro-BERT-MCP &  93.2 &                  39.2&57.7&377.4&48.3&48.5&207.7&51.8&43.8&107.9&56.2&39.2&363.2 \\
\hline
RoBERTa&93.4&13.0&86.1&301.6&32.5&64.5&180.3&41.2&55.9&105.4&34.0&63.6&345.9\\
Pro-RoBERTa-MCP&93.7&24.4&73.7&312.6&34.3&62.8&195.2&45.5&51.5&118.3&39.4&57.5&336.9\\
\hline
DistilBERT&93.5&13.2&85.9&317.4&33.6&63.4&159.1&30.0&67.9&98.0&36.5&61.0&352.4\\
Pro-DistilBERT-MCP&93.9&29.3&68.5&363.3&48.7&47.9&184.2&34.3&63.1&98.6&50.5&45.6&364.2\\
\hline
ALBERT&93.0&20.6&77.9&315.6&26.1&71.9&150.9&38.9&58.2&101.5&35.9&61.4&342.8\\
Pro-ALBERT-MCP&93.8&48.9&47.3&417.8&41.8&55.3&208.2&59.5&35.9&111.8&63.1&32.0&375.2\\

\bottomrule
\end{tabular}

}
\end{sc}
\end{table}

\newpage

\subsection{Ablation Study on Hyperparameters of Penalties}
\subsubsection{Ablation Study on Huber }
For the Huber-based model, we present the ablation study on the  $\delta$ and layers $K$ of Huber loss under TextFooler in Table~\ref{tab:ag_huber_ablation}
and Figure~\ref{fig:ag_huber}. The default setting are as follows: $\delta=0.9$ and $K=3$, and we vary the two parameters separately to capture the trend. We can find that the performance is insensitive to $\delta$ or layers within an appropriate range.
\begin{table}[h!]
\caption{Ablation study on Huber on AGNEWS.}
\label{tab:ag_huber_ablation}
\centering
\vskip 0.1in
\begin{sc}
\resizebox{0.5\linewidth}{!}{
\begin{tabular}{ccccc}
\toprule
Model &  Clean &  Attacked\\
\hline
Pro-BERT-Huber $\delta=0.6$&94.2&23.9\\
Pro-BERT-Huber $\delta=0.7$&94.2&23.2\\
Pro-BERT-Huber $\delta=0.8$&94.2&23.7\\
Pro-BERT-Huber $\delta=0.9$&94.2&24.2\\
Pro-BERT-Huber $\delta=1$&94.3&23.0\\
Pro-BERT-Huber $\delta=2$&94.1&22.5\\
Pro-BERT-Huber $\delta=3$&94.2&21.7\\
Pro-BERT-Huber $\delta=4$&94.2&20.9\\
Pro-BERT-Huber $\delta=5$&94.1&20.0\\
\hline
Pro-BERT-Huber $K=3$&94.2&24.2\\
Pro-BERT-Huber $K=4$&94.2&24.0\\
Pro-BERT-Huber $K=5$&94.2&23.7\\
Pro-BERT-Huber $K=6$&94.0&24.5\\
Pro-BERT-Huber $K=7$&93.9&24.2\\
Pro-BERT-Huber $K=8$&94.0&25.8\\
\bottomrule

\end{tabular}

}
\end{sc}

\end{table}

\begin{figure}[h!]
\vskip 0.2in
\begin{center}
\subfigure[Effect of $\delta$.] {\includegraphics[width=0.4\columnwidth]{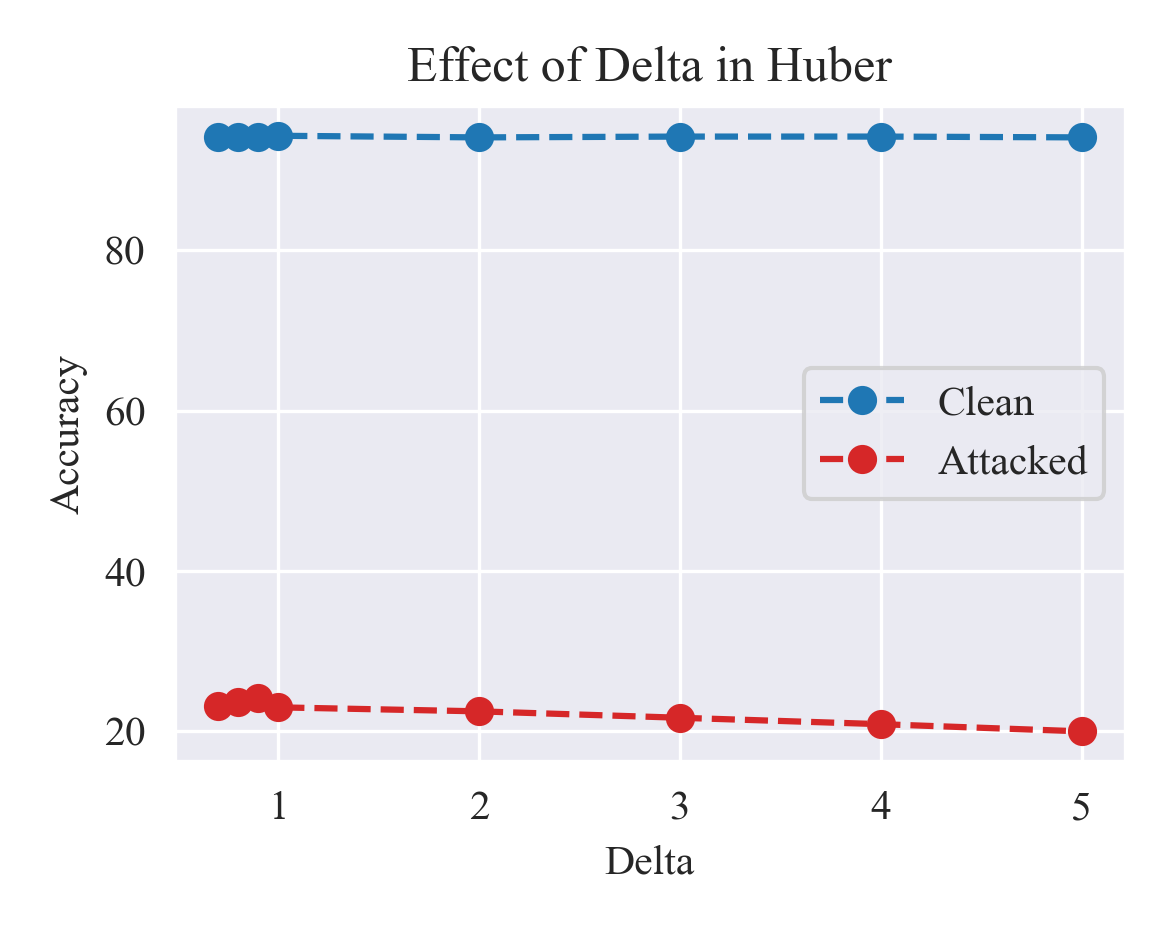}}
\subfigure[Effect of layers.] {\includegraphics[width=0.4\columnwidth]{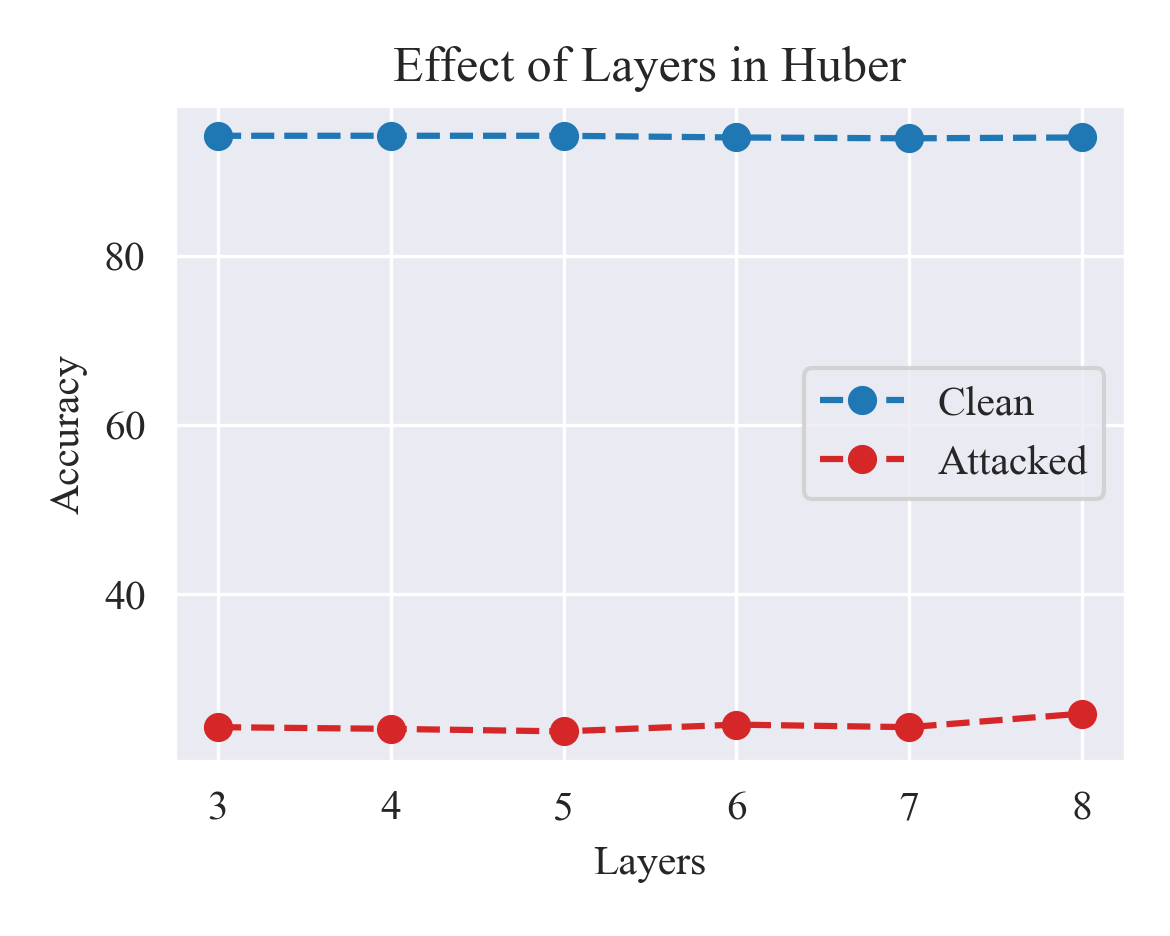}}

\caption{Ablation study on Huber}
\label{fig:ag_huber}
\end{center}
\vskip -0.2in
\end{figure}

\newpage

\subsubsection{Ablation Study on MCP }
We present the ablation study of the $\gamma$ and layers $K$ in MCP in Table~\ref{tab:ag_mcp} \& Figure~\ref{fig:ag_mcp} and Table~\ref{tab:imdb_mcp} \& Figure~\ref{fig:imdb_mcp} .
From the figures, it can be concluded that appropriate $\gamma$ is needed to get the best robustness. Besides, more layers can get a more precise solution for the robust objective, but we need to consider a good balance between precision and efficiency.

\begin{table}[h!]
\caption{Ablation on MCP on AGNEWS.}
\label{tab:ag_mcp}
\centering
\vskip 0.1in
\begin{sc}
\begin{tabular}{ccccccccc}
\toprule
Model &  Clean &  Textfooler
\\
\hline

 Pro-BERT-MCP $K=2,\gamma=5$ &93.3 & 32.8
\\
 Pro-BERT-MCP $K=2,\gamma=4$ &              93.7 & 33.9
\\
 Pro-BERT-MCP $K=2,\gamma=3$& 93.7 &31.7
\\
 Pro-BERT-MCP $K=2,\gamma=2$&93.1&30.4
\\
 Pro-BERT-MCP $K=2,\gamma=1$&92.9&28.7\\
\hline
 Pro-BERT-MCP $K=5,\gamma=4$ & 93.6&39.2   \\
 Pro-BERT-MCP $K=4,\gamma=4$ & 93.7&37.2
\\

 Pro-BERT-MCP $K=3,\gamma=4$ &  93.2 &   37.8
\\
 Pro-BERT-MCP $K=2,\gamma=4$ &93.7 & 33.9
\\
 Pro-BERT-MCP $K=1,\gamma=4$ &              93.9 &                  26.8 \\

\bottomrule

\end{tabular}
\end{sc}

\end{table}

\begin{figure}[h!]
\vskip 0.2in
\begin{center}
\subfigure[Effect of $\gamma$ in MCP] {\includegraphics[width=0.4\columnwidth]{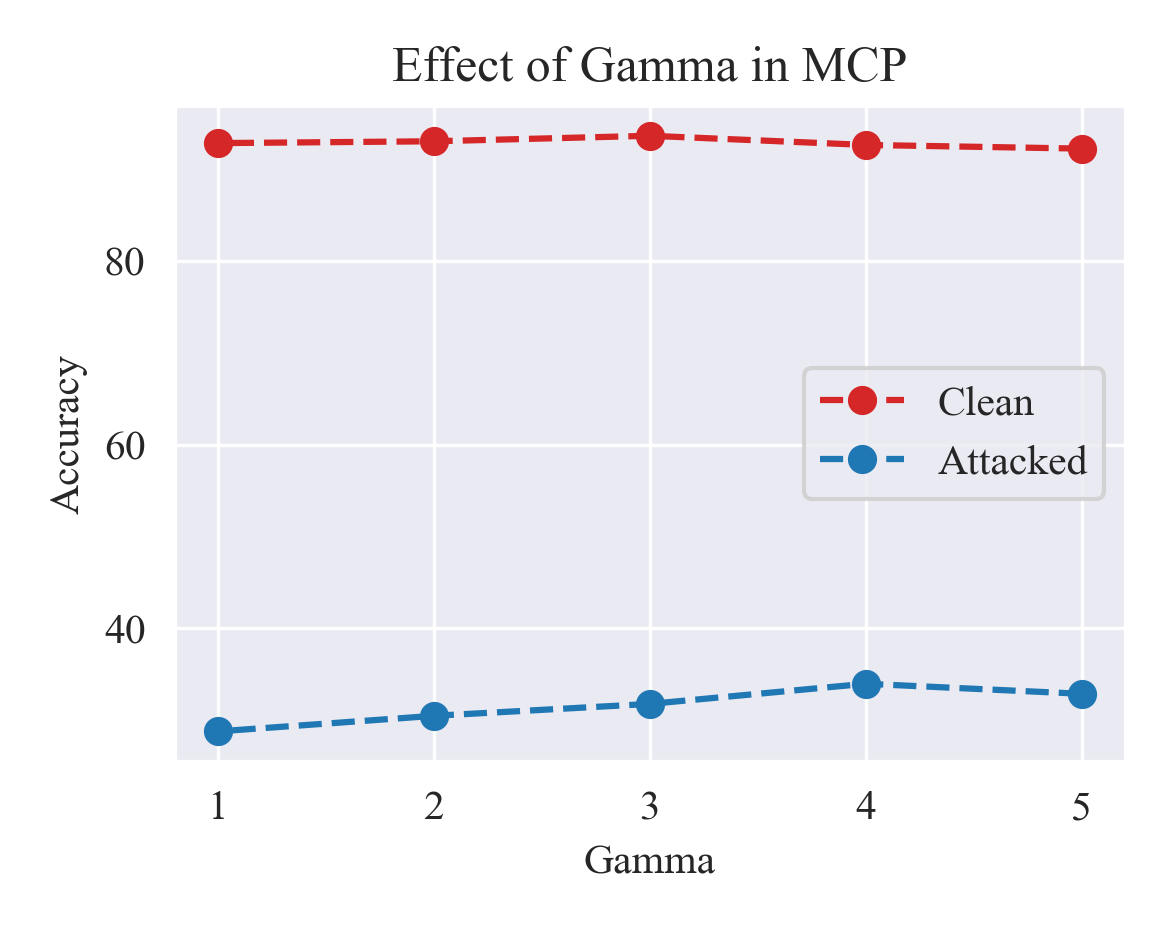}}
\subfigure[Effect of layers in MCP] {\includegraphics[width=0.4\columnwidth]{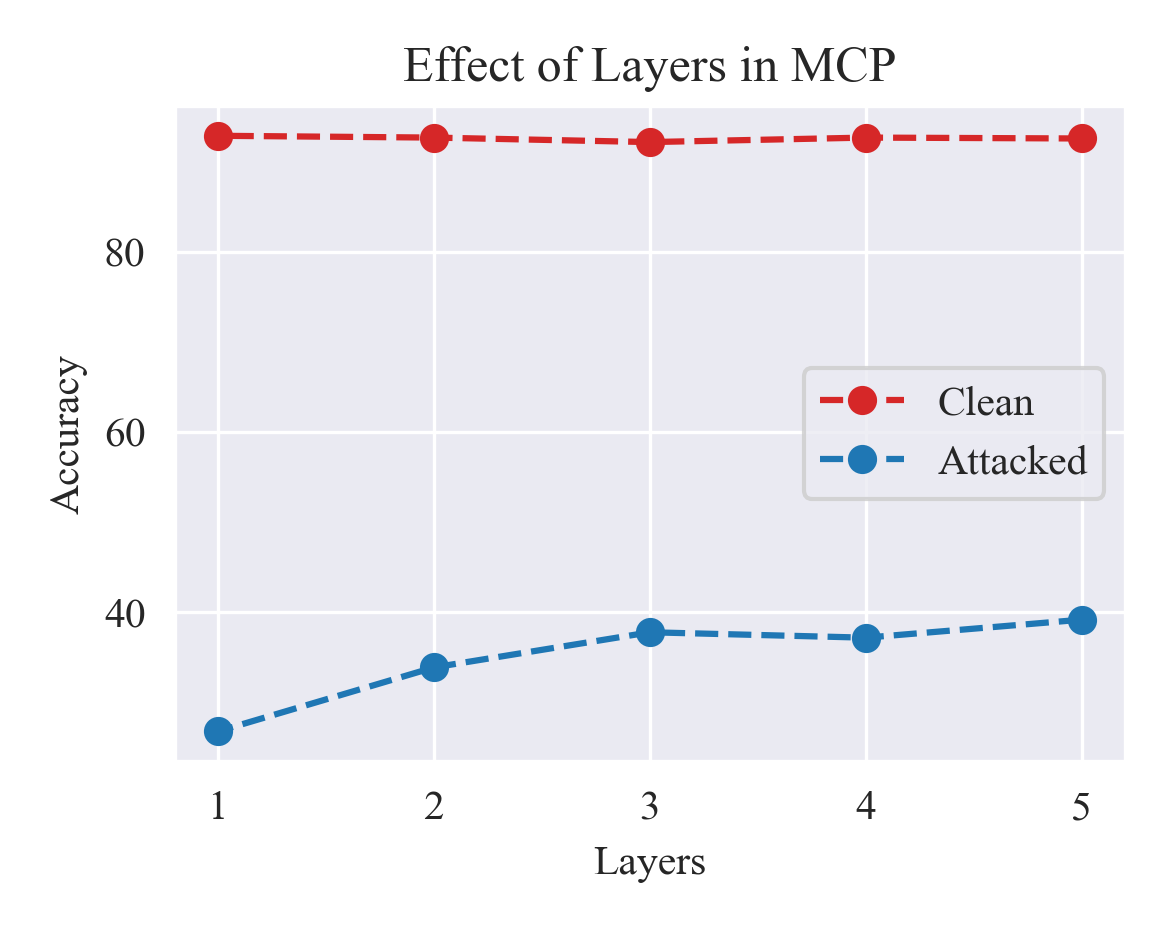}}

\caption{Ablation study on MCP on AGNEWS.}
\label{fig:ag_mcp}
\end{center}
\vskip -0.2in
\end{figure}

\newpage

\begin{table}[h!]
\centering
\caption{Ablation in MCP on IMDB.}
\label{tab:imdb_mcp}
\vskip 0.1in
\begin{sc}
\resizebox{0.9\linewidth}{!}{
\begin{tabular}{cccccccc}
\toprule
Model &  Clean &  Textfooler& TextBugger&DeepWordBug&PWWS\\
\bottomrule
 Pro-BERT-MCP $\gamma=2.0, K=3$&93.9&15.9&19.9&43.7&40.3\\
 Pro-BERT-MCP $\gamma=3.0, K=3$&93.7&15.1&24.0&53.8&51.1\\
 Pro-BERT-MCP $\gamma=4.0, K=3$&93.4&16.5&26.6&55.5&50.7\\
 Pro-BERT-MCP $\gamma=5.0, K=3$&93.9&16.3&29.8&53.9&46.6\\
 Pro-BERT-MCP $\gamma=6.0, K=3$&93.3&13.5&23.0&48.1&41.5\\

\bottomrule
 Pro-BERT-MCP $ K=1$&93.6&12.4&13.8&40.8&40.2\\
 Pro-BERT-MCP $ K=2$&93.8&12.5&15.7&49.0&47.9\\
 Pro-BERT-MCP $ K=3$&93.4&16.5&29.8&53.9&46.6\\
 Pro-BERT-MCP $ K=4$&93.5&20.4&39.4&60.2&56.3\\
 Pro-BERT-MCP $ K=5$&93.5&22.1&44.6&63.3&56.1\\

\bottomrule
\end{tabular}

}
\end{sc}

\end{table}

\begin{figure}[h!]
\begin{center}
\subfigure[Effect of $\gamma$ in MCP]{\includegraphics[width=0.4\columnwidth]{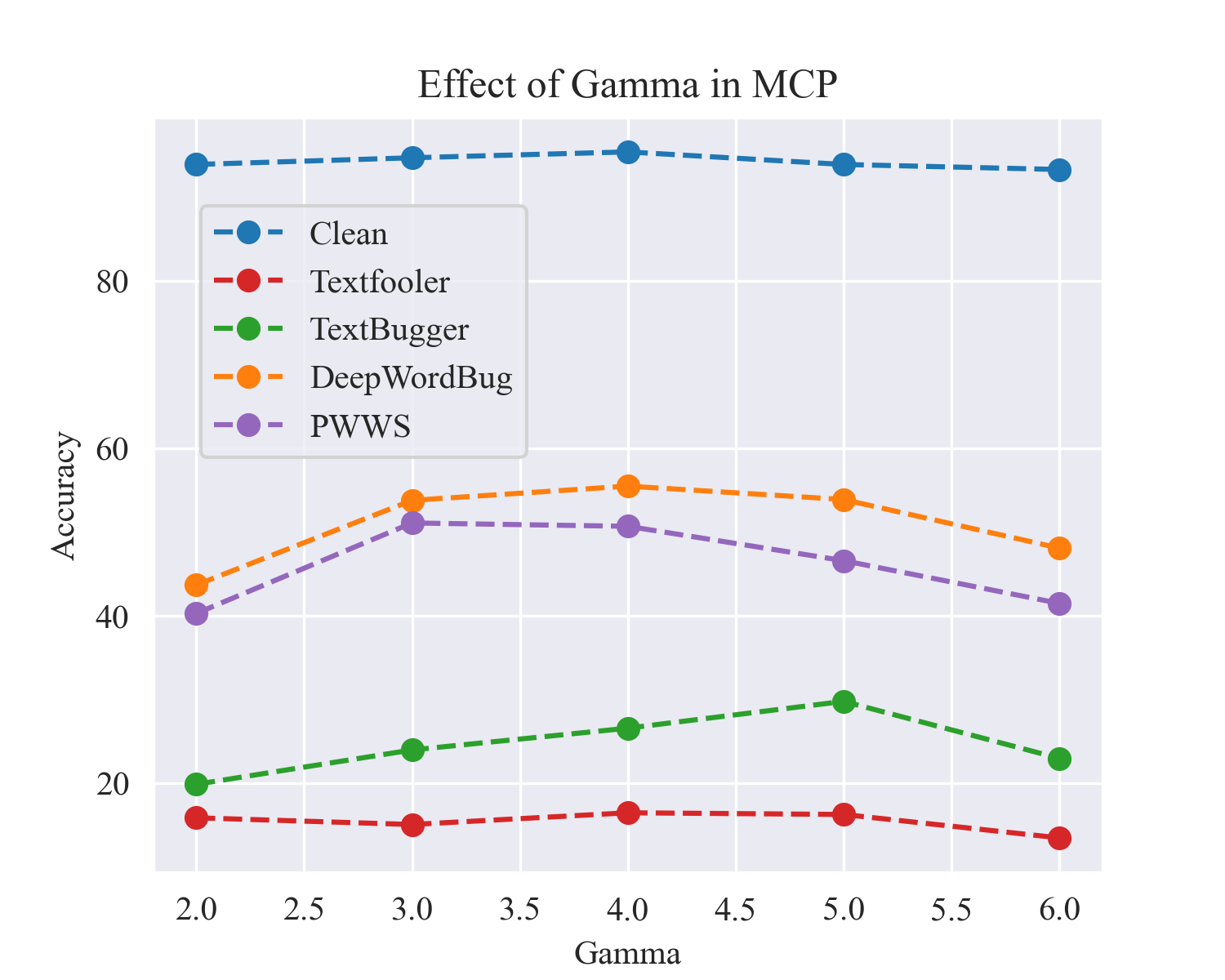}}
\subfigure[Effect of layers in MCP]{\includegraphics[width=0.4\columnwidth]{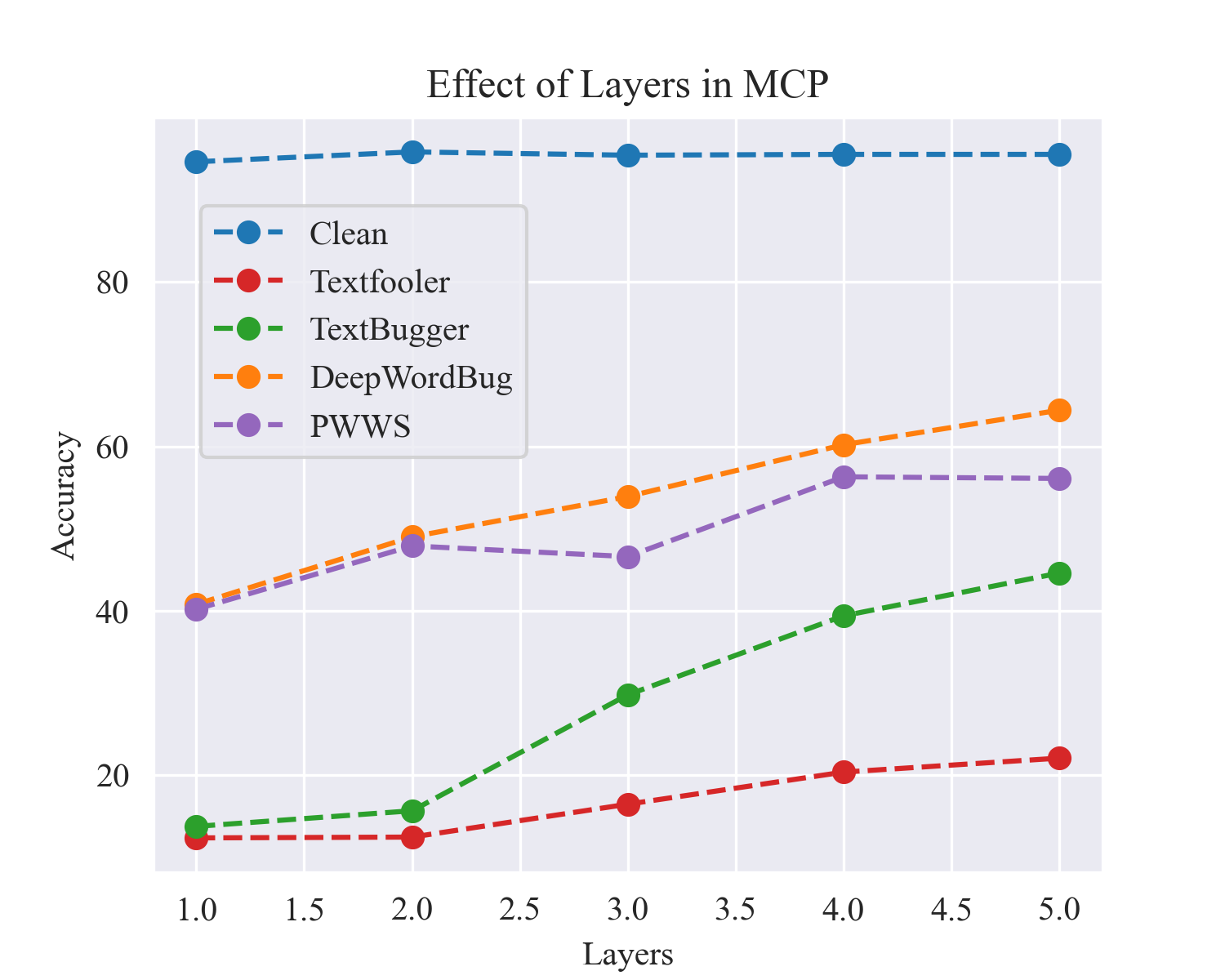}}

\caption{Ablation study on MCP on IMDB.}
\label{fig:imdb_mcp}

\end{center}
\end{figure}


\clearpage

\section{Additional Experiments on LLMs}
\label{sec:exp_llm}

\subsection{Experiments on T5}
\label{sec:exp_t5}
\subsubsection{main result}
We compare the backbone model T5 with its robust version based on $\ell_1$, MCP and Huber loss in Figure~\ref{fig:t5_diff} and Table~\ref{tab:sst2_main}. The experiments are conducted on SST2 under TextFooler.
As shown in the figure, Pro-T5 (Huber or $\ell_1$) can slightly improve the robustness of backbone T5. Moreover, Pro-T5 (MCP) significantly outperform other baselines, especially under the large attack budgets.

\begin{figure}[h!]
\begin{center}
\includegraphics[width=0.5\columnwidth]{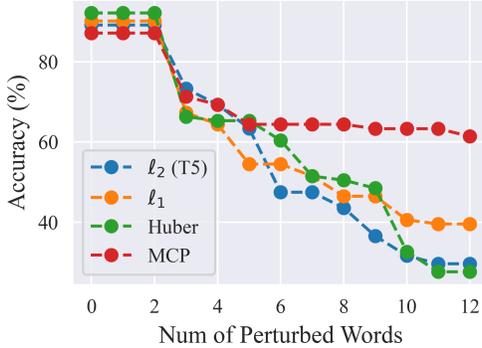}
\vskip -0.1in
\caption{Main results on T5.}
\label{fig:t5_diff}
\end{center}
\end{figure}

\begin{table}[h!]
\caption{Accuary (\%) under prompt attack on SST2 (TextFooler, T5)
}
\label{tab:sst2_main}
\vskip 0.1in
\begin{sc}
\resizebox{1.0\linewidth}{!}{
\begin{tabular}{ccccccccccccccc}
\toprule
\# Perturbed words&  0 (Clean) & 1&2 &3&4&5&6&7&8&9&10&11&12\\
\hline
T5&89.1&89.1&89.1&73.3&69.3&63.4&47.5&47.5&43.6&36.6&31.7&29.7&29.7\\
Pro-T5 $
\ell_1$ $K=3$&90.1&90.1&90.1&67.3&64.4&54.5&54.5&51.5&46.5&46.5&40.6&39.6&39.6\\
Pro-T5 MCP $K=4, \gamma=3.0$&87.1&87.1&87.1&71.3&69.3&64.4&64.4&64.4&64.4&63.3&63.3&63.3&61.4\\
Pro-T5 Huber $K=3, \delta=3$ &95.0&95.0&95.0&64.4&58.4&56.4&52.5&52.5&52.5&43.6&33.7&31.7&31.7\\
Pro-T5 Huber-MCP $K=3,\delta=9, \gamma=15$&89.1&89.1&89.1&62.4&62.4&57.4&55.4&55.4&55.4&55.4&55.4&55.4&54.5\\

\bottomrule
\end{tabular}

}

\end{sc}

\end{table}

\newpage 

\subsubsection{Ablation on Huber}
We present the ablation study on $\delta$ in Huber on SST2 in Figure~\ref{fig:t5_huber} and Table~\ref{tab:sst2_huber}. We fix the number of layers as 3 and vary the values of $\delta$ from 3.0 to 9.0. The Huber-based model with $\delta=4.0$ perform best among all the selected parameters.

\begin{figure}[h!]
\vskip 0.2in
\begin{center}
\includegraphics[width=0.6\columnwidth]{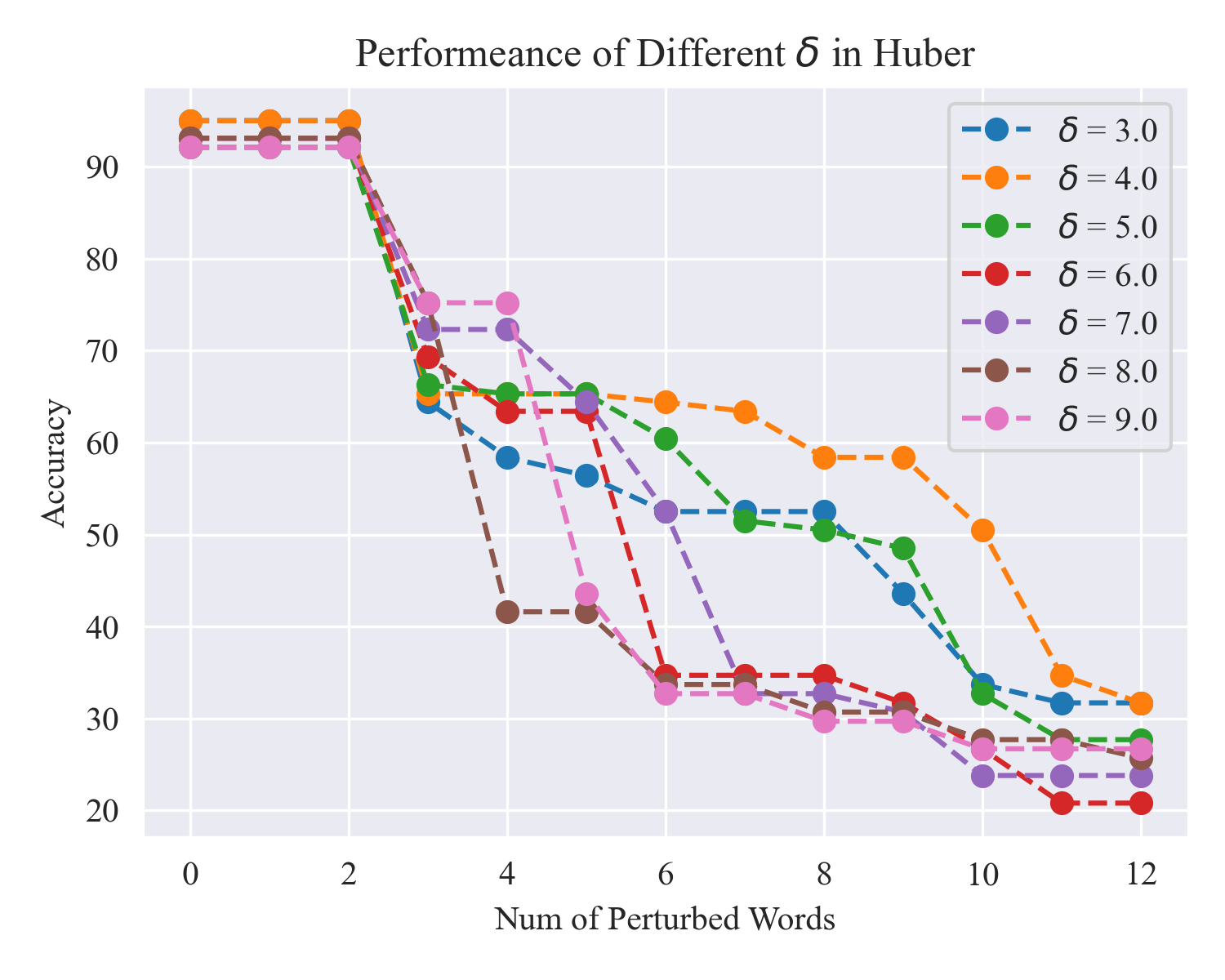}
\vskip -0.2in
\caption{Ablation study on Huber on T5}
\label{fig:t5_huber}
\end{center}
\end{figure}

\begin{table}[h!]
\caption{Ablation study in Huber on SST2.}
\label{tab:sst2_huber}
\vskip 0.1in
\begin{sc}
\resizebox{1.0\linewidth}{!}{
\begin{tabular}{ccccccccccccccc}
\toprule
\# Perturbed words&  0 (Clean) & 1&2 &3&4&5&6&7&8&9&10&11&12\\
\hline

Pro-T5 Huber $K=3, \delta=3$ &95.0&95.0&95.0&64.4&58.4&56.4&52.5&52.5&52.5&43.6&33.7&31.7&31.7\\
Pro-T5 Huber $K=3, \delta=4$ &95.0&95.0&95.0&65.3&65.3&65.3&64.4&63.4&58.4&58.4&50.5&34.7&31.7\\
Pro-T5 Huber $K=3, \delta=5$ &92.1&92.1&92.1&66.3&65.3&65.3&60.4&51.5&50.5&48.5&32.7&27.7&27.7\\
Pro-T5 Huber $K=3, \delta=6$ &93.1&93.1&93.1&69.3&63.4&63.4&34.7&34.7&34.7&31.7&26.7&20.8&20.8\\
Pro-T5 Huber $K=3, \delta=7$ &93.1&93.1&93.1&72.3&72.3&64.4&52.5&32.7&32.7&30.7&23.8&23.8&23.8\\
Pro-T5 Huber $K=3, \delta=8$ &93.1&93.1&93.1&75.2&41.6&41.6&33.7&33.7&30.7&30.7&27.7&27.7&25.7\\
Pro-T5 Huber $K=3, \delta=9$ &92.1&92.1&92.1&75.2&75.2&43.6&32.7&32.7&29.7&29.7&26.7&26.7&26.7\\

\bottomrule
\end{tabular}

}

\end{sc}

\end{table}

\newpage 

\subsubsection{Ablation on MCP of T5}
We present the ablation studies in MCP on SST2 in Figure~\ref{fig:t5_mcp} and Table~\ref{tab:sst2_mcp}. The default settings are as follows: $K=3$ and $\gamma=3.0$. We can make the observations as follows: (1) For $\gamma$, the optimal value falls into the range of (3,5). (2) For number of layers, the best value can be chosen as 4 or 5.

\begin{figure}[h!]
\caption{Ablation study on MCP of T5}
\label{fig:t5_mcp}
\vskip 0.2in
\begin{center}
\subfigure[Effect of $\gamma$ in MCP]{\includegraphics[width=0.48\columnwidth]{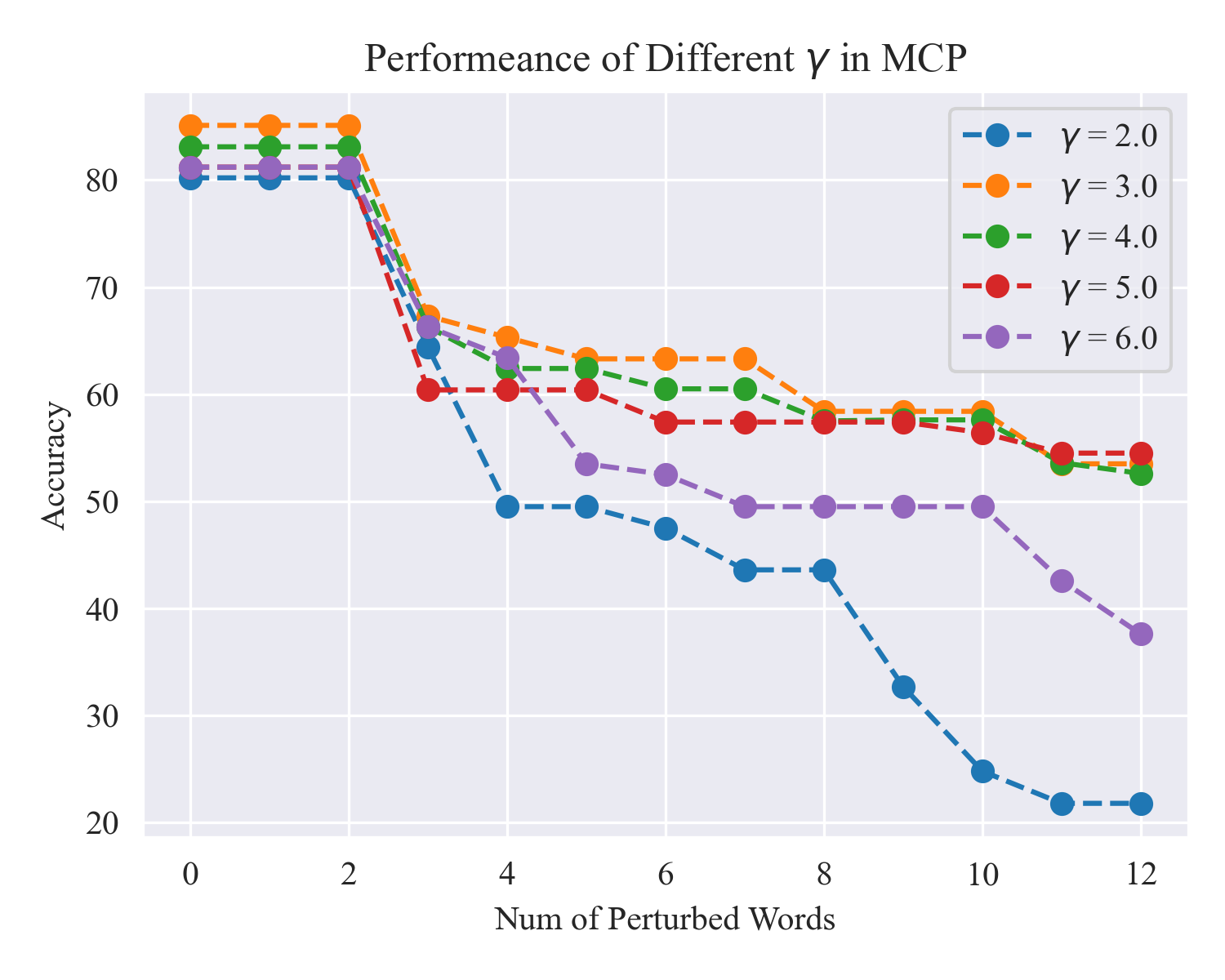}}
\subfigure[Effect of K in MCP]{\includegraphics[width=0.48\columnwidth]{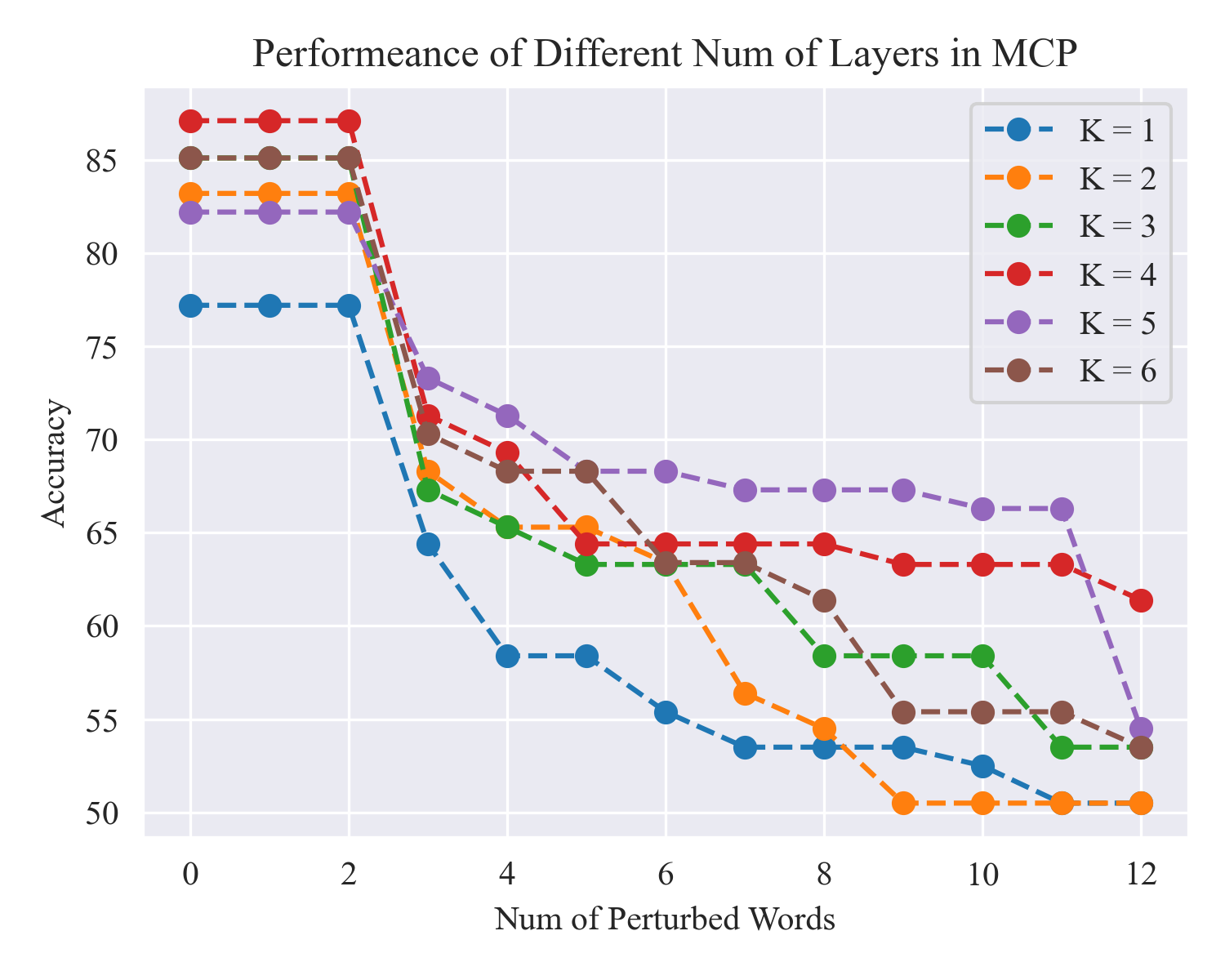}}

\end{center}
\vskip -0.2in
\end{figure}

\begin{table}[h!]
\caption{Ablation study on MCP on SST2}
\label{tab:sst2_mcp}
\vskip 0.1in
\begin{sc}
\resizebox{1.0\linewidth}{!}{
\begin{tabular}{ccccccccccccccc}
\toprule
\# Perturbed words &  0 (Clean) & 1&2 &3&4&5&6&7&8&9&10&11&12\\
\hline
Pro-T5 MCP $K=3, \gamma=2.0$&80.2&80.2&80.2&64.4&49.5&49.5&47.5&43.6&43.6&32.7&24.8&21.8&21.8\\
Pro-T5 MCP $K=3, \gamma=3.0$&85.1&85.1&85.1&67.3&65.3&63.3&63.3&63.3&58.4&58.4&58.4&53.5&53.5\\
Pro-T5 MCP $K=3, \gamma=4.0$&83.1&83.1&83.1&66.3&62.4&62.4&60.5&60.5&57.5&57.6&57.6&53.6&52.6\\
Pro-T5 MCP $K=3, \gamma=5.0$&81.2&81.2&81.2&60.4&60.4&60.4&57.4&57.4&57.4&57.4&56.4&54.5&54.5\\
Pro-T5 MCP $K=3, \gamma=6.0$&81.2&81.2&81.2&66.3&63.4&53.5&52.5&49.5&49.5&49.5&49.5&42.6&37.6\\
\hline
Pro-T5 MCP $K=1, \gamma=3.0$&77.2&77.2&77.2&64.4&58.4&58.4&55.4&53.5&53.5&53.5&52.5&50.5&50.5\\
Pro-T5 MCP $K=2, \gamma=3.0$&83.2&83.2&83.2&68.3&65.3&65.3&63.4&56.4&54.5&50.5&50.5&50.5&50.5\\
Pro-T5 MCP $K=3, \gamma=3.0$&85.1&85.1&85.1&67.3&65.3&63.3&63.3&63.3&58.4&58.4&58.4&53.5&53.5\\
Pro-T5 MCP $K=4, \gamma=3.0$&87.1&87.1&87.1&71.3&69.3&64.4&64.4&64.4&64.4&63.3&63.3&63.3&61.4\\
Pro-T5 MCP $K=5, \gamma=3.0$&82.2&82.2&82.2&73.3&71.3&68.3&68.3&67.3&67.3&67.3&66.3&66.3&54.5\\
Pro-T5 MCP $K=6, \gamma=3.0$&85.1&85.1&85.1&70.3&68.3&68.3&63.4&63.4&61.4&55.4&55.4&55.4&53.5\\

\bottomrule
\end{tabular}

}

\end{sc}

\end{table}

\newpage

\subsubsection{Ablation on Huber-MCP}
We present the ablation study on Huber-MCP on SST2 in Figure~\ref{fig:t5_hm} and Table~\ref{tab:sst2_hm}. The results show that Pro-T5 (Huber-MCP) is insensitive to $\delta$ and get better robustness at $\gamma=14$ or $15$.

\begin{figure}[h!]

\vskip 0.2in
\begin{center}
\subfigure[Effect of $\delta$ in Huber-MCP]{\includegraphics[width=0.48\columnwidth]{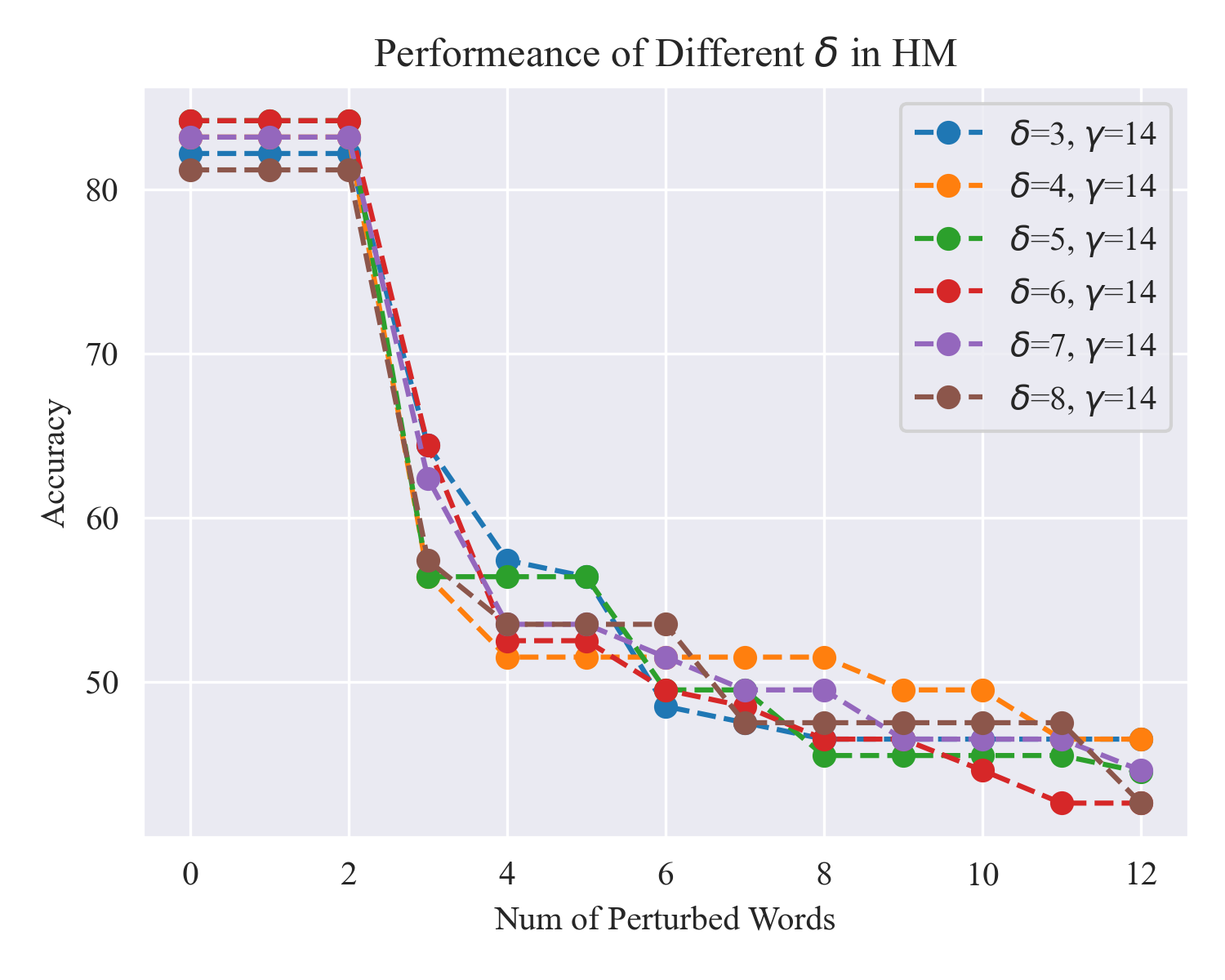}}
\label{fig:t5_delta_hm}
\subfigure[Effect of $\gamma$ in Huber-MCP]{\includegraphics[width=0.48\columnwidth]{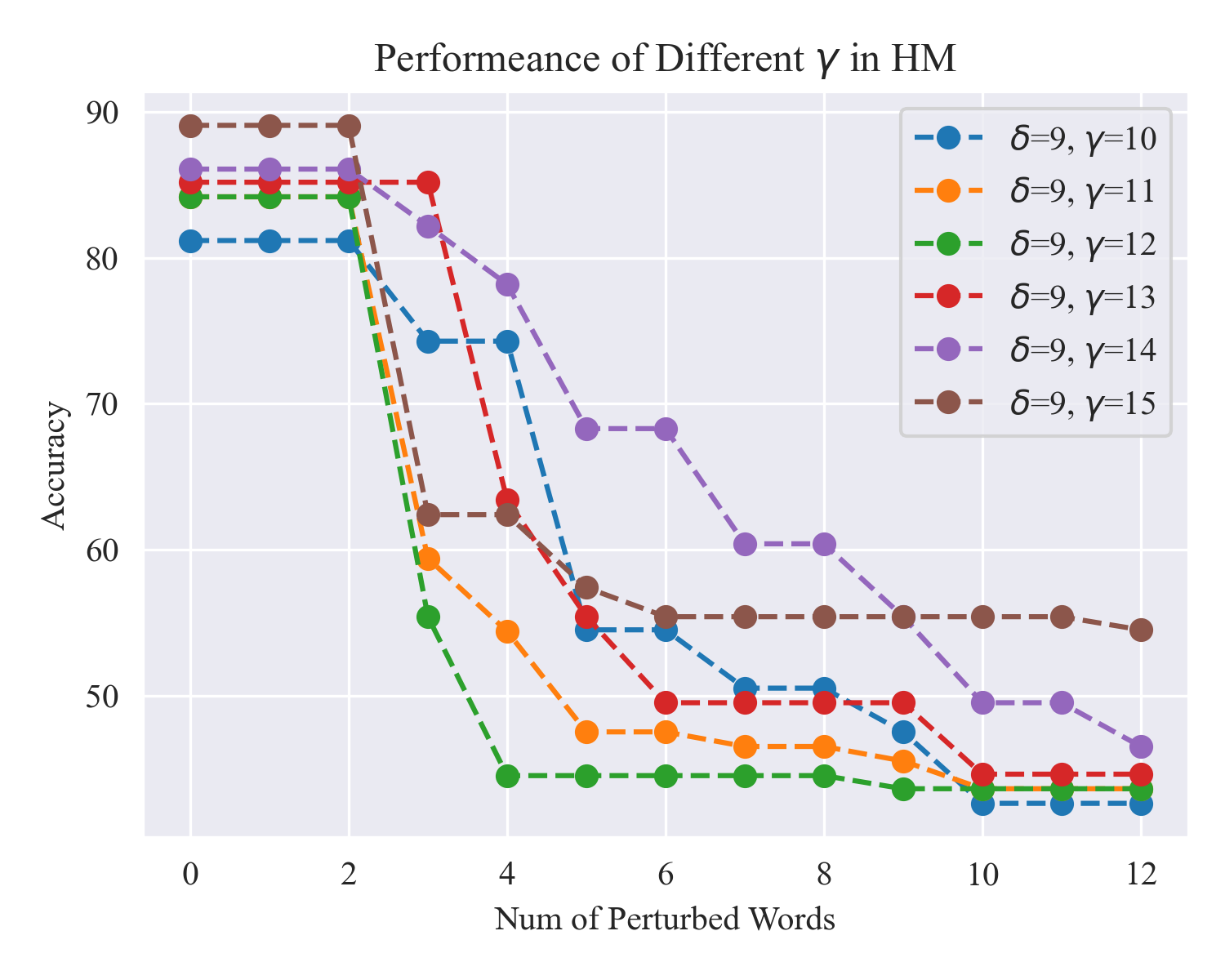}}
\label{fig:t5_gamma_hm}

\caption{Ablation studies on Huber-MCP}
\label{fig:t5_hm}
\end{center}
\end{figure}

\begin{table}[h!]
\caption{Ablation study on Huber-MCP on SST2}
\label{tab:sst2_hm}
\vskip 0.1in
\begin{sc}
\resizebox{1.0\linewidth}{!}{
\begin{tabular}{ccccccccccccccc}
\toprule
\# Perturbed words&  0 (Clean) & 1&2 &3&4&5&6&7&8&9&10&11&12\\
\hline

Pro-T5 Huber-MCP $K=3,\delta=9, \gamma=10$&81.2&81.2&81.2&74.3&74.3&54.5&54.5&50.5&50.5&47.5&42.6&42.6&42.6\\
Pro-T5 Huber-MCP $K=3,\delta=9, \gamma=11$&84.2&84.2&84.2&59.4&54.4&47.5&47.5&46.5&46.5&45.5&43.6&43.6&43.6\\
Pro-T5 Huber-MCP $K=3,\delta=9, \gamma=12$&84.2&84.2&84.2&55.4&44.5&44.5&44.5&44.5&44.5&43.6&43.6&43.6&43.6\\
Pro-T5 Huber-MCP $K=3,\delta=9, \gamma=13$&85.2&85.2&85.2&85.2&63.4&55.4&49.5&49.5&49.5&49.5&44.6&44.6&44.6\\
Pro-T5 Huber-MCP $K=3,\delta=9, \gamma=14$&86.1&86.1&86.1&82.2&78.2&68.3&68.3&60.4&60.4&55.4&49.5&49.5&46.5\\
Pro-T5 Huber-MCP $K=3,\delta=9, \gamma=15$&89.1&89.1&89.1&62.4&62.4&57.4&55.4&55.4&55.4&55.4&55.4&55.4&54.5\\
\bottomrule
Pro-T5 Huber-MCP $K=3,\delta=3, \gamma=14$&82.2&82.2&82.2&64.4&57.4&56.4&48.5&47.5&46.5&46.5&46.5&46.5&46.5\\
Pro-T5 Huber-MCP $K=3,\delta=4, \gamma=14$&83.2&83.2&83.2&56.4&51.5&51.5&51.5&51.5&51.5&49.5&49.5&46.5&46.5\\
Pro-T5 Huber-MCP $K=3,\delta=5, \gamma=14$&84.2&84.2&84.2&56.4&56.4&56.4&49.5&49.5&45.5&45.5&45.5&45.5&44.5\\
Pro-T5 Huber-MCP $K=3,\delta=6, \gamma=14$&84.2&84.2&84.2&64.4&52.5&52.5&49.5&48.5&46.5&46.5&44.6&42.6&42.6\\
Pro-T5 Huber-MCP $K=3,\delta=7, \gamma=14$&83.2&83.2&83.2&62.4&53.5&53.5&51.5&49.5&49.5&46.5&46.5&46.5&44.6\\
Pro-T5 Huber-MCP $K=3,\delta=8, \gamma=14$&81.2&81.2&81.2&57.4&53.5&53.5&53.5&47.5&47.5&47.5&47.5&47.5&42.6\\

\bottomrule
\end{tabular}
}
\end{sc}
\end{table}

\newpage

\subsection{Experiments on LLaMA}
\label{sec:exp_llama}

For LLaMA, we can observe an intriguing phenomenon that differs from the T5 case: the $\ell_1$ and MCP-based methods sacrifice too much accuracy while Huber method can keep decent performance under small budgets. This reason is that in the small range region, $\ell_1$ and MCP utilize a linear or concave functions while Huber can recover the $\ell_2$ function. Inspired by the characteristics of these functions, we combine the properties of Huber and MCP, and construct a new function which we refer to Huber-MCP\footnote{Empirical penalty selection strategy: For small or medium-sized models such as BERT (110M) and ViT (86M), MCP-based models exhibit superior robustness while nearly not sacrificing the clean performance. Moreover, MCP-based models are easy to tune with only one parameter $\gamma$. For large models like LLaMA (7B) and Vicuna (7B), it is necessary to choose Huber and Huber-MCP to recover the original $\ell_2$ penalty within the low-value region in case of clean performance drop.}
. The detailed formulation and derived robust attention layers are available in Appendix~\ref{sec:proof-qn}. As indicated by the following curves, Huber-MCP and Huber-based models exhibits better robustness than other methods while preserving the good clean performance.

\subsubsection{Textfooler}

We present the results of textual entailment on SST2 under TextFooler in 
Figure~\ref{fig:llama_tf}. 
We can observe that $\ell_1$ and MCP-based methods sacrifice the performance because of the estimation bias. Pro-LLaMA (Huber-MCP) shows slight improvement over other models.

\begin{figure}[h!]
\vskip 0.2in
\begin{center}
\subfigure[Main results on SST2 (TextFooler)]{\includegraphics[width=0.4\columnwidth]{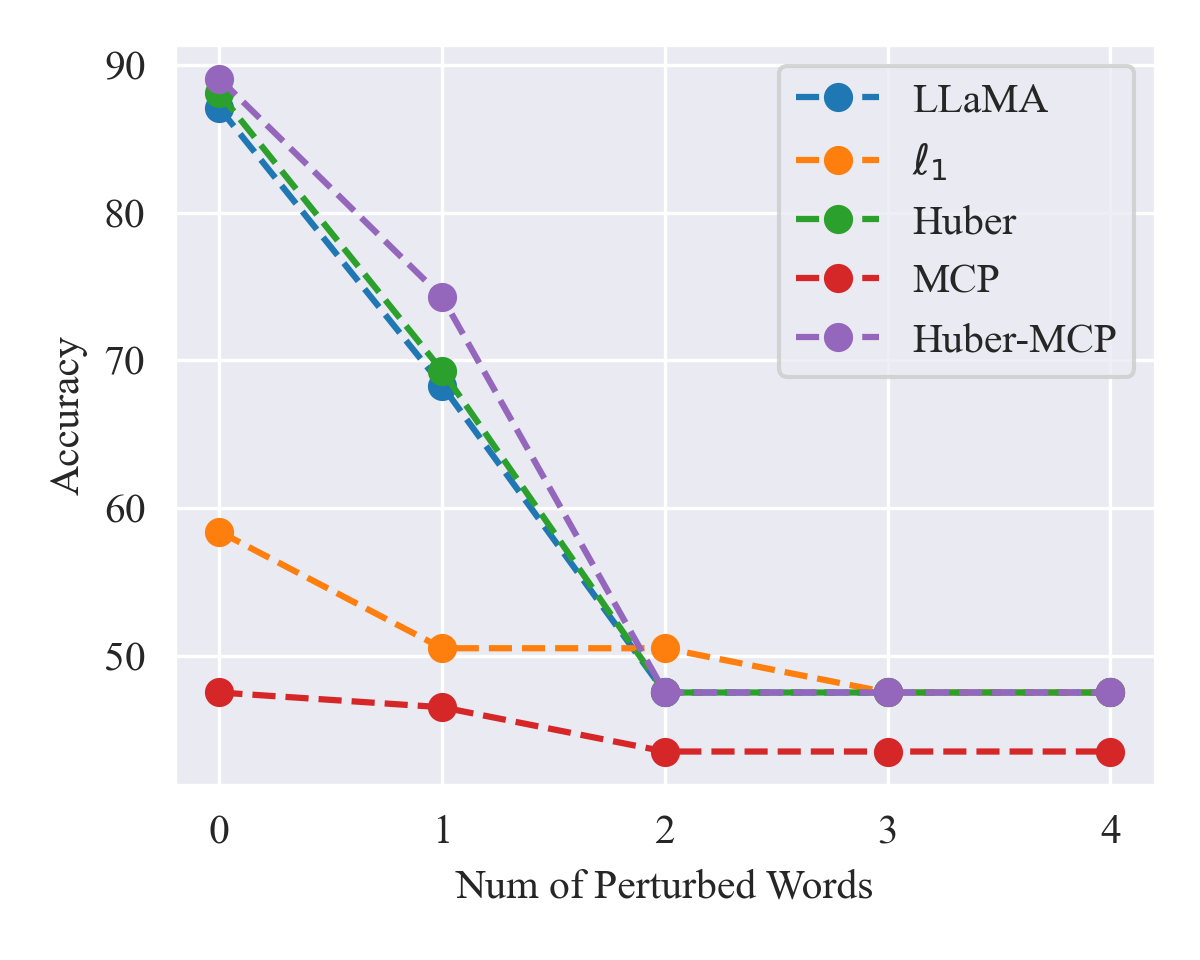}}
\subfigure[Ablation on $\delta$ in Huber ($K=3$)]{\includegraphics[width=0.4\columnwidth]{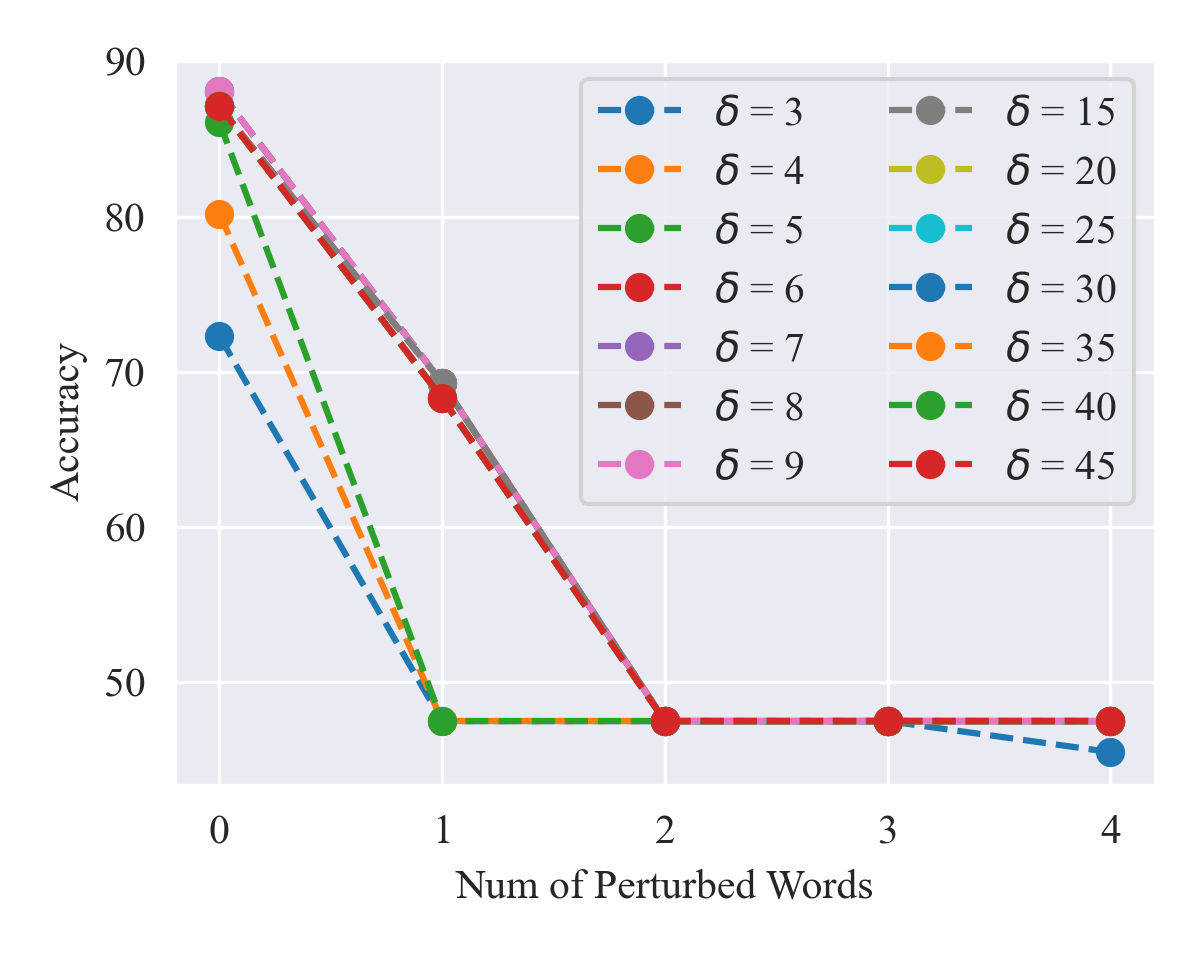}}
\subfigure[Ablation on $\gamma$ in Huber-MCP ($K=1,\delta=9$)]{\includegraphics[width=0.4\columnwidth]{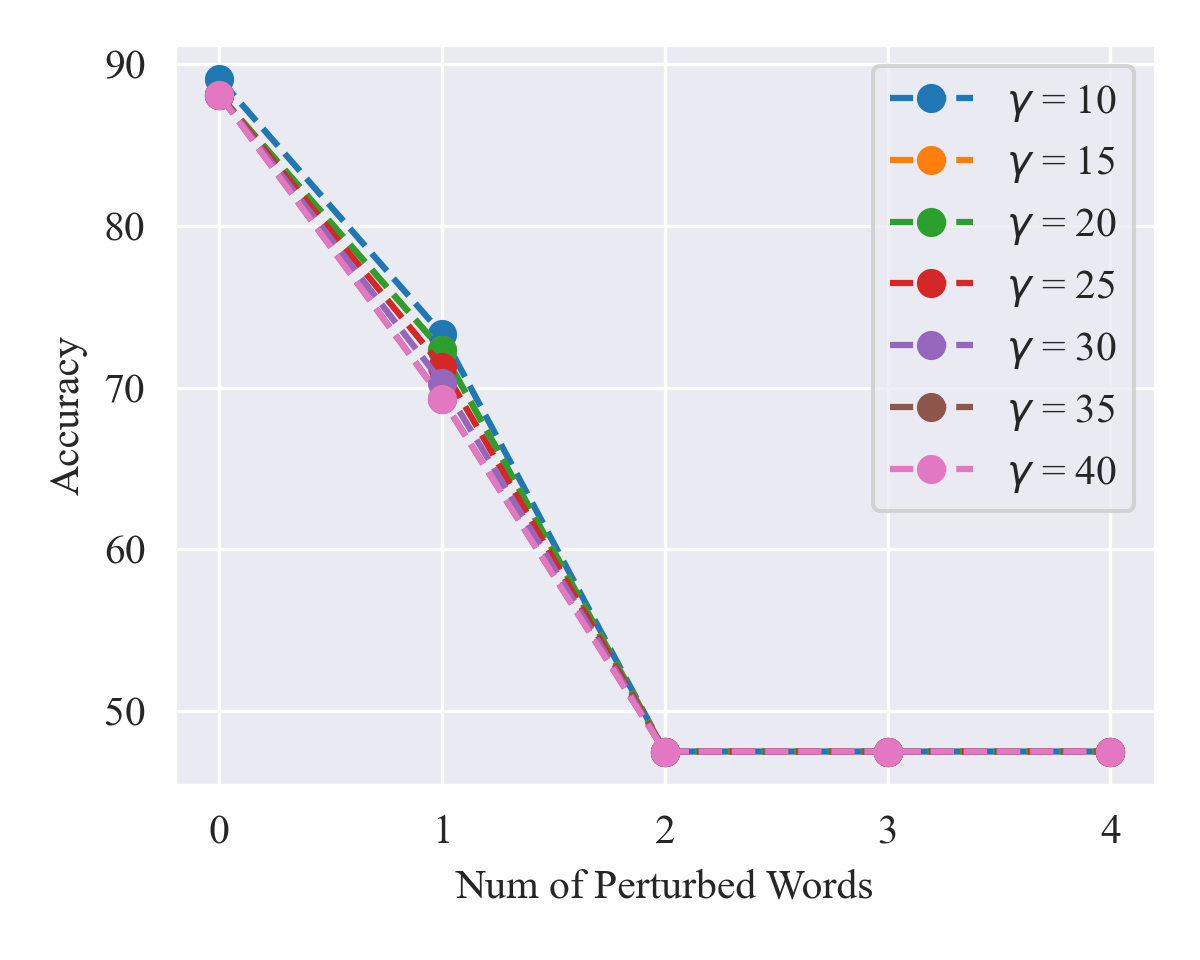}}
\subfigure[Ablation on $\gamma$ in Huber-MCP ($K=3,\delta=9$)]
{\includegraphics[width=0.4\columnwidth]{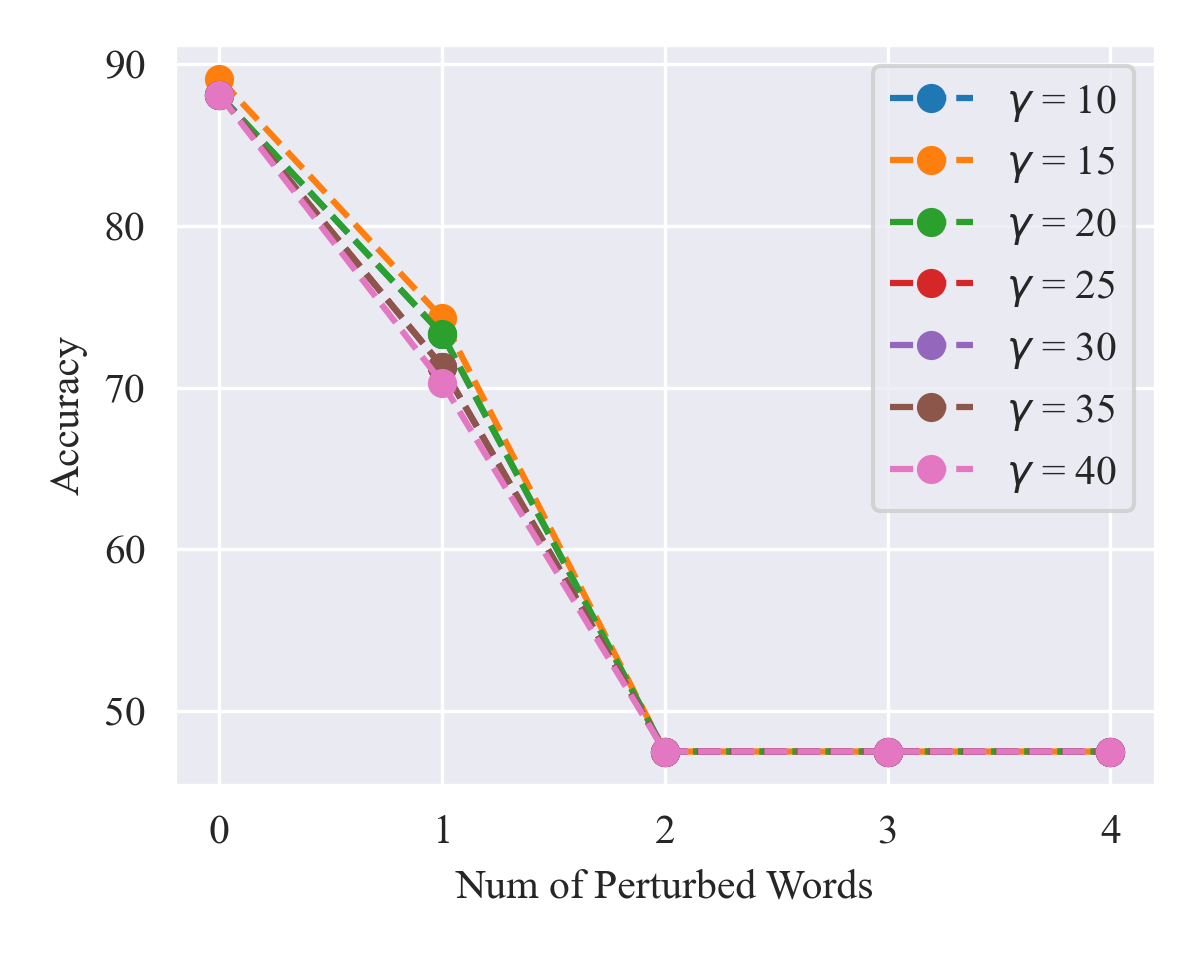}}

\caption{LLaMA (Textfooler)}
\label{fig:llama_tf}

\end{center}
\vskip -0.2in
\end{figure}

\newpage
\subsubsection{TextBugger}
We present the results of textual entailment on SST2 under TextBugger in 
Figure~\ref{fig:llama_tb}. In this case, $\ell_1$-based model show a catastrophic performance drop while Pro-LLaMA (Huber) outperforms other baselines with a sinificant margin.

\begin{figure}[h!]

\vskip 0.2in
\begin{center}
\subfigure[Main results on SST2 (TextBugger)]{\includegraphics[width=0.4\columnwidth]{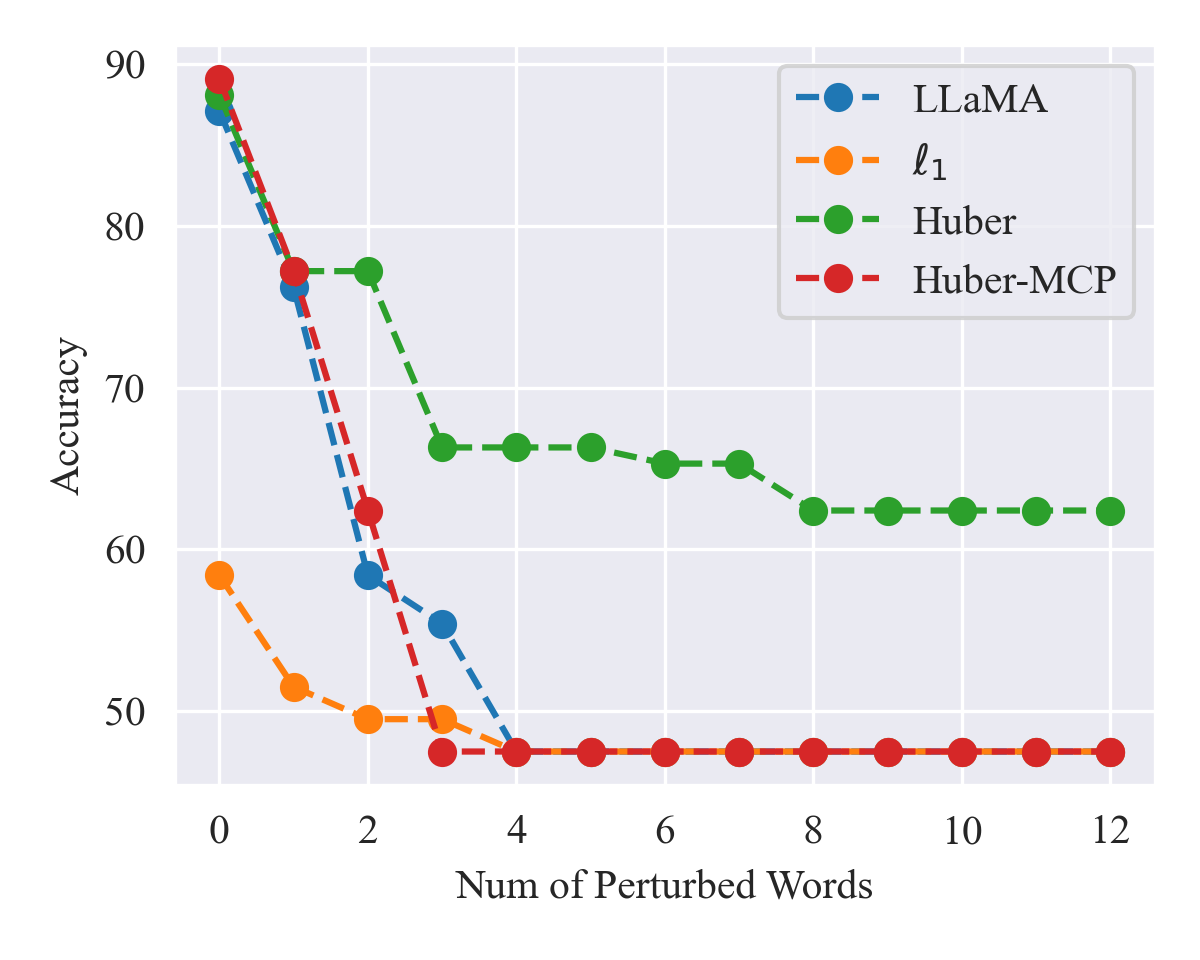}}
\subfigure[Ablation study on $\delta$ in Huber ($K=3$)]{\includegraphics[width=0.4\columnwidth]{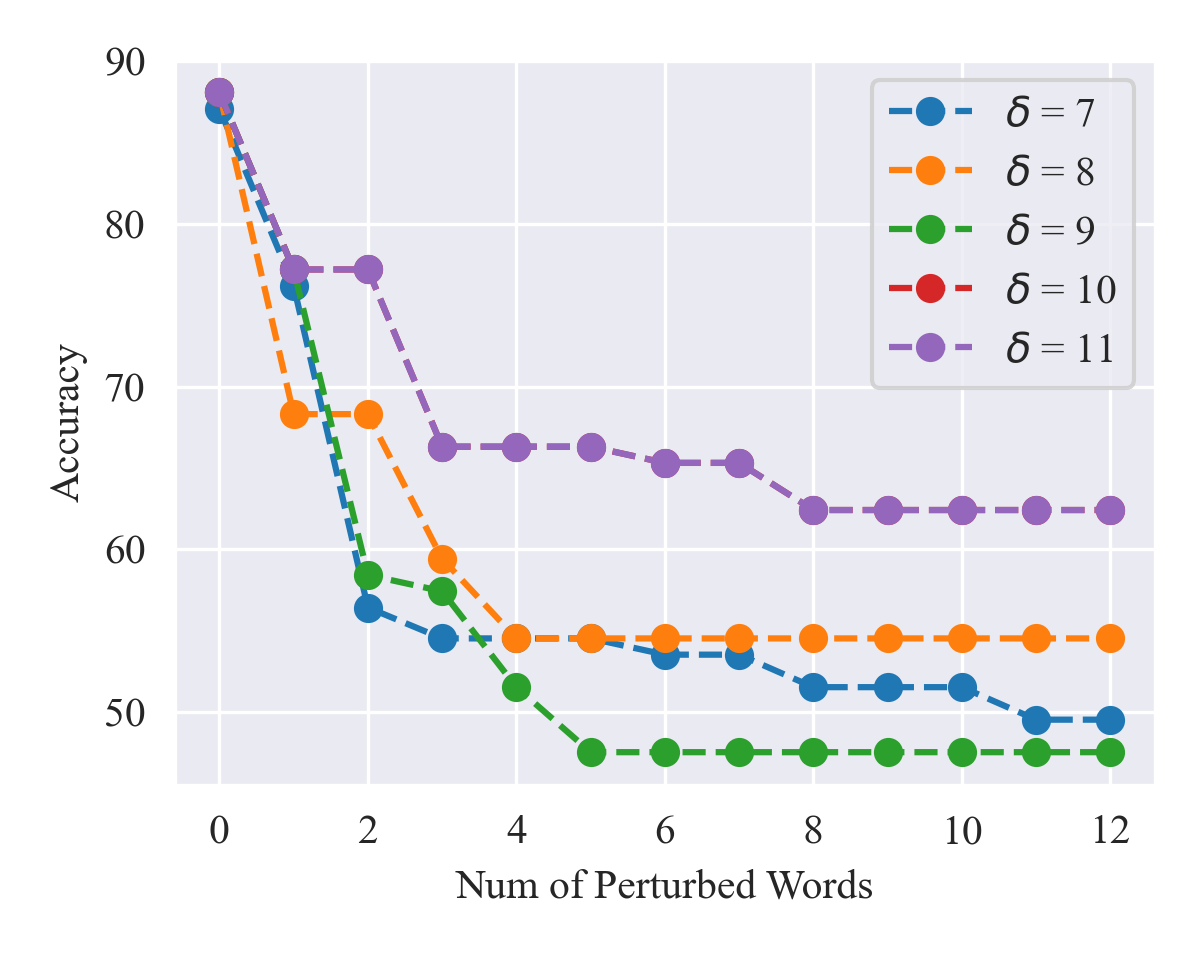}}
\subfigure[Ablation study on $\gamma$ in Huber-MCP ($K=3,\delta=9$)]
{\includegraphics[width=0.4\columnwidth]{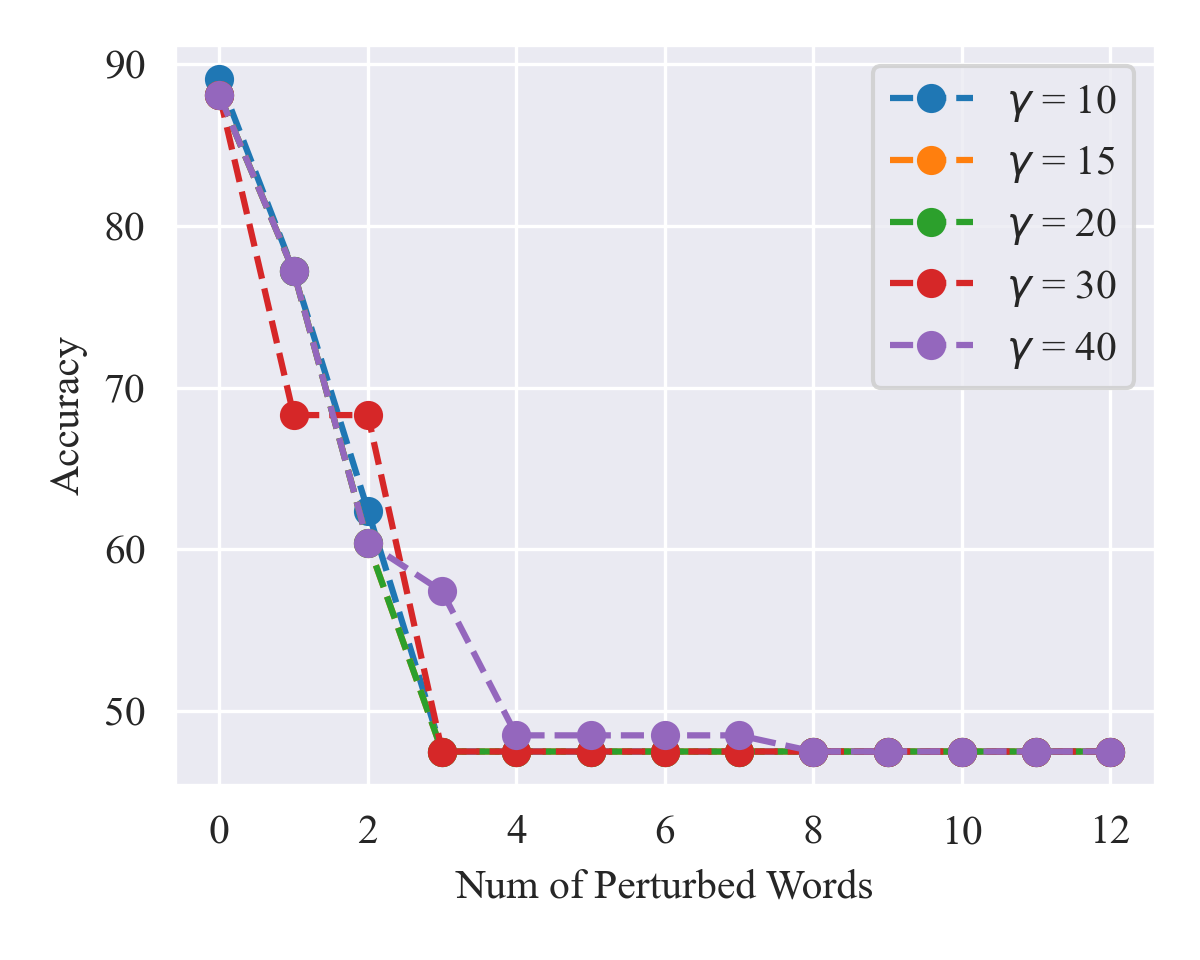}}

\end{center}
\caption{LLaMA (TextBugger)}
\label{fig:llama_tb}
\vskip -0.2in
\end{figure}

\newpage
\subsubsection{DeepWordBug}

We present the results of textual entailment on SST2 under DeepWordBug in 
Figure~\ref{fig:llama_dwb}. The experiment shows the similar phenomenon that $\ell_1$ and MCP-based models sacrifice too much performance. Additionally, Pro-LLaMA (Huber-MCP) significantly outperforms other methods.

\begin{figure}[h!]

\vskip 0.2in
\begin{center}
\subfigure[Main results on SST2 (DeepWordBug)]{\includegraphics[width=0.4\columnwidth]{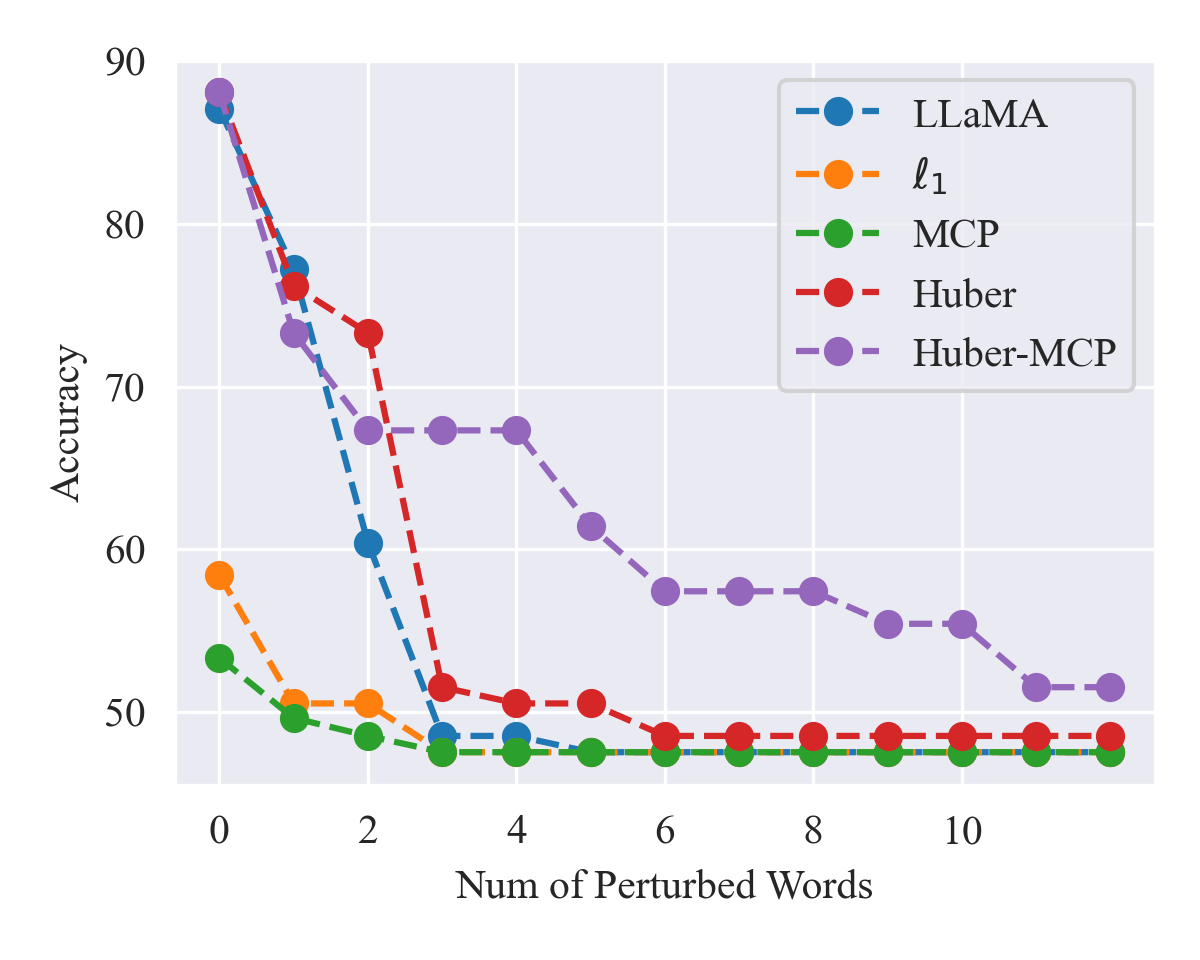}}
\subfigure[Ablation study on $\delta$ in Huber ($K=3$)]{\includegraphics[width=0.4\columnwidth]{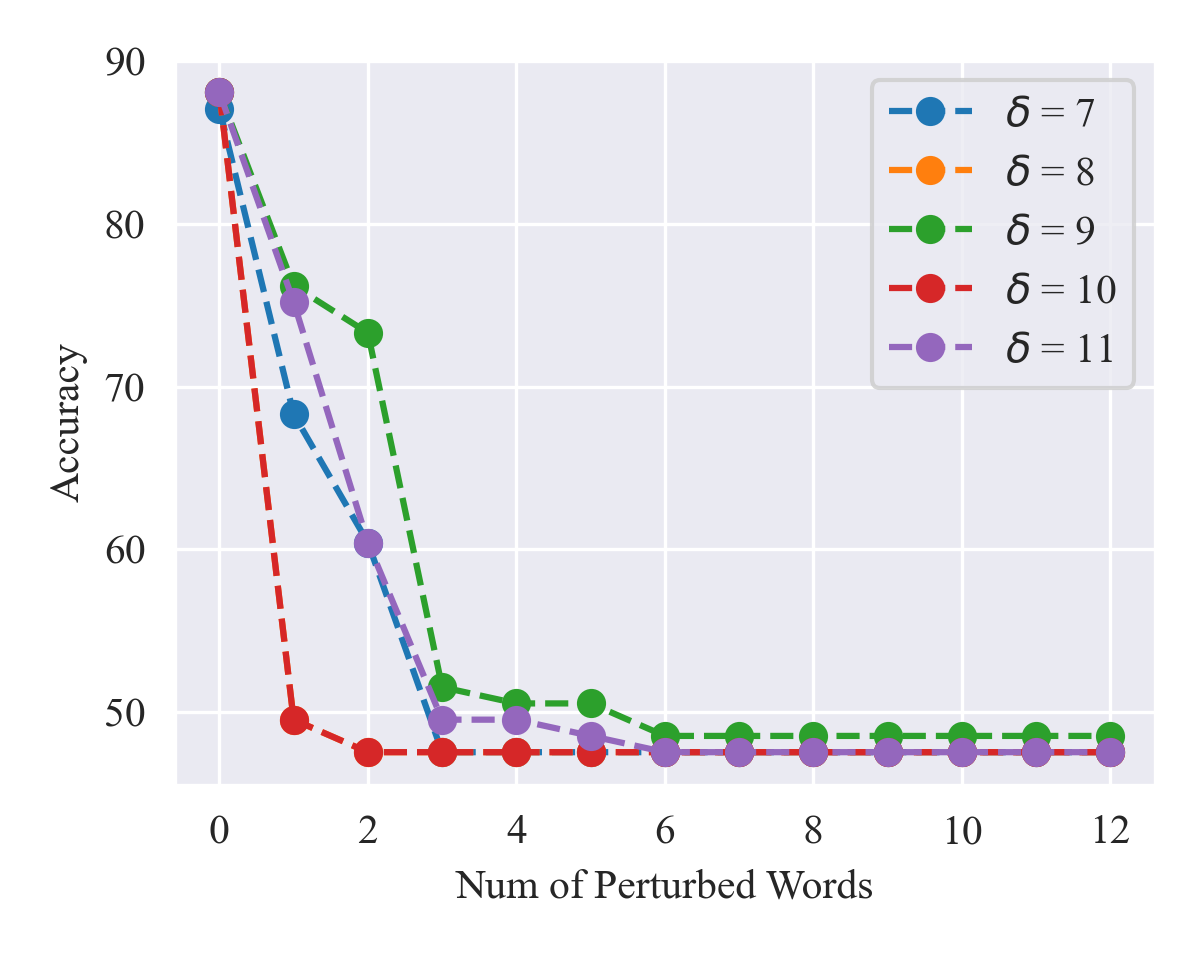}}
\subfigure[Ablation study on $\gamma$ in Huber-MCP ($\delta=9, L=3$)]
{\includegraphics[width=0.4\columnwidth]{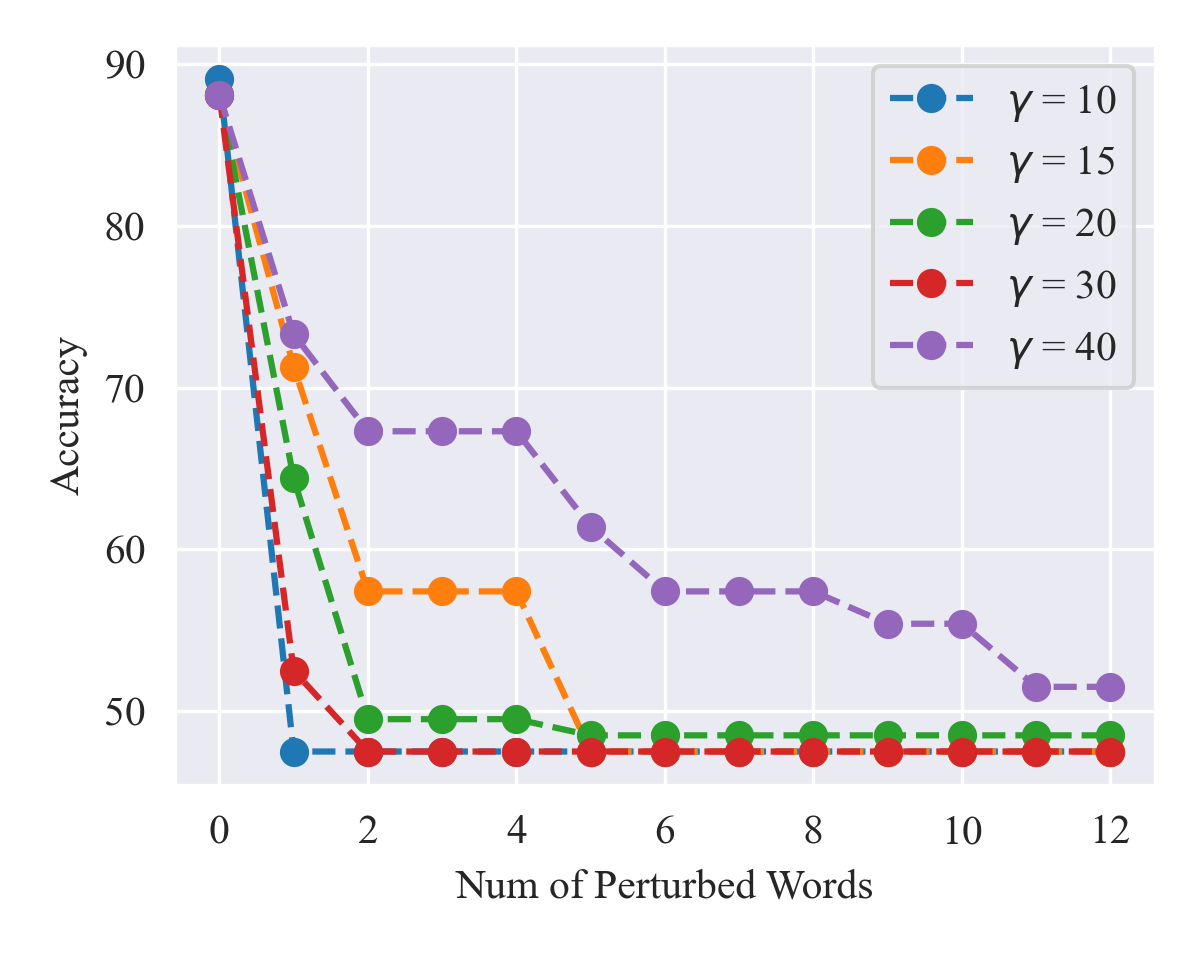}}
\caption{LLaMA (DeepWordBug)}
\label{fig:llama_dwb}

\end{center}
\vskip -0.2in
\end{figure}

\newpage

\section{Additional Experiments on Jailbreak}
\label{sec:jailbreak}
\subsection{Transfer Jailbreak}
We provide the results of transfer jailbreak in Table~\ref{tab:behavior_swap}, Table~\ref{tab:behavior_insert}, and  Table~\ref{tab:behavior_patch}. As SmoothLLM exhibits excellent performance to defend the jaibreaking attack, we also combine the random smoothing with the backbone models and our methods to further validate the effectiveness of our method. As shown in the results, when $q=0$ (without random smoothing), by simply plugging our ProAttention into the Vicuna, the robustness can be improved by a significant margin. Our plug-and-play method can even  be comparable to the randomly smoothed models which require multiple operations including random perturbations, votings and aggregations.

\begin{table}[h!]
\caption{Vicuna: ASRs of JailBreak with  \textbf{Swap} random smoothing on Behaviours.}

\label{tab:behavior_swap}

\vskip 0.1in
\begin{sc}
\begin{center}
\resizebox{0.6\linewidth}{!}{
\begin{tabular}{cccccccccccccccccc}
\toprule
 Smooth $q$&0&1&3&5\\
\hline
Vicuna&91.8&71.8&20.9&0.9\\
Pro-Vicuna-Huber $\delta=0.1$&1.8&0.9&0.9&0.9\\
Pro-Vicuna-Huber $\delta=0.2$&8.2&1.8&0.9&0.9\\
Pro-Vicuna-Huber $\delta=0.3$&21.8&2.7&0.9&0.9\\
Pro-Vicuna-Huber $\delta=0.4$&30.9&21.8&0.9&0.9\\
Pro-Vicuna-Huber $\delta=0.5$&36.4&23.6&0.9&0.9\\
Pro-Vicuna-Huber $\delta=0.8$&61.8&40.0&0.9&0.9\\
Pro-Vicuna-Huber $\delta=1.0$&70.0&53.6&11.8&0.9\\
Pro-Vicuna-Huber $\delta=1.5$&74.5&60.0&16.4&1.8\\
Pro-Vicuna-Huber $\delta=2.0$&82.7&73.6&21.8&1.8\\
Pro-Vicuna-Huber $\delta=3.0$&90.0&74.5&31.8&7.3\\

\bottomrule
\end{tabular}

}
\end{center}

\end{sc}

\end{table}

\begin{table}[!h]

\caption{Vicuna: ASRs of JailBreak with \textbf{Insert} random smoothing on Behaviours.}

\label{tab:behavior_insert}

\vskip 0.1in
\begin{sc}
\begin{center}
\resizebox{0.6\linewidth}{!}{
\begin{tabular}{cccccccccccccccccc}
\toprule
 Smooth $q$&0&1&3&5&10&15\\
\hline
vicuna&91.8&79.1&44.5&10.9&4.5&1.8\\
Pro-vicuna-Huber $\delta=0.1$&1.8&0.9&0.9&0.9&0.9&0.9\\

\bottomrule
\end{tabular}

}
\end{center}

\end{sc}

\end{table}

\begin{table}[!h]
\caption{Vicuna: ASRs of JailBreak with \textbf{Patch} random smoothing on Behaviours.}

\label{tab:behavior_patch}

\vskip 0.1in
\begin{sc}
\begin{center}
\resizebox{0.6\linewidth}{!}{
\begin{tabular}{cccccccccccccccccc}
\toprule
 Smooth $q$&0&1&3&5&10&15\\
\hline
vicuna&91.8&71.8&57.3&39.1&21.8&14.5\\
Pro-vicuna-Huber $\delta=0.1$&1.8&0.9&0.9&0.9&0.9&0.9\\

\bottomrule
\end{tabular}

}

\end{center}

\end{sc}

\end{table}

\newpage

\subsection{Adaptive Jailbreak}

Although our Pro-Vicuna has demonstrated significant effectiveness under transfer attack (black-box), it is still unclear whether our method can be resilient under white-box attacks which adaptively target the specific victim models.
 The comparison of our Pro-Vicuna and backbone Vicuna under adaptive jailbreak is presented in Table~\ref{tab:behavior_adaptive} and Figure~\ref{fig:jailbreak_adaptive}. 
 Our Pro-Vicuna can improve Vicuna by an average of $10.4\%$ across various numbers of attack queries.
 We don't include SmoothLLM in adaptive attacks
 since it introduces non-differentiable operators that preclude the gradient-based GCG attack on it.

\begin{table}[!h]
\centering
\caption{Vicuna: ASRs of \textbf{Adaptive} JailBreak on Behaviours}

\label{tab:behavior_adaptive}

\vskip 0.1in
\begin{sc}
\resizebox{0.9\linewidth}{!}{
\begin{tabular}{cccccccccccccccccc}
\toprule
 Num of Attack Queries &12&13&14&15&16&17&18&19&20\\
\hline
vicuna&61.4 & 65.2 & 71.5 & 75.8 & 78.7 & 82.6 & 84.1 & 86.5 & 87.4\\
Pro-Vicuna (Best)&50.7&55.9&60.8&64.3&67.4&70.5&74.0&77.7&78.6\\
\hline
Pro-vicuna-Huber $\delta=0.3$&60.2 & 60.2 & 65.0 & 70.9 & 72.8 & 75.7 & 77.7 & 77.7 & 78.6\\
Pro-vicuna-Huber $\delta=0.5$&60.8 & 61.8 & 62.7 & 66.7 & 67.6 & 71.6 & 78.4 & 81.4 & 82.4\\
Pro-vicuna-Huber $\delta=0.7$&50.7 & 55.9 & 60.8 & 64.3 & 67.4 & 70.5 & 74.0 & 79.7 & 82.4\\
\bottomrule
\end{tabular}
}

\end{sc}

\end{table}

\begin{figure}[h!]
\vskip -0.1in
\begin{center}
\centerline{\includegraphics[width=0.6\columnwidth]{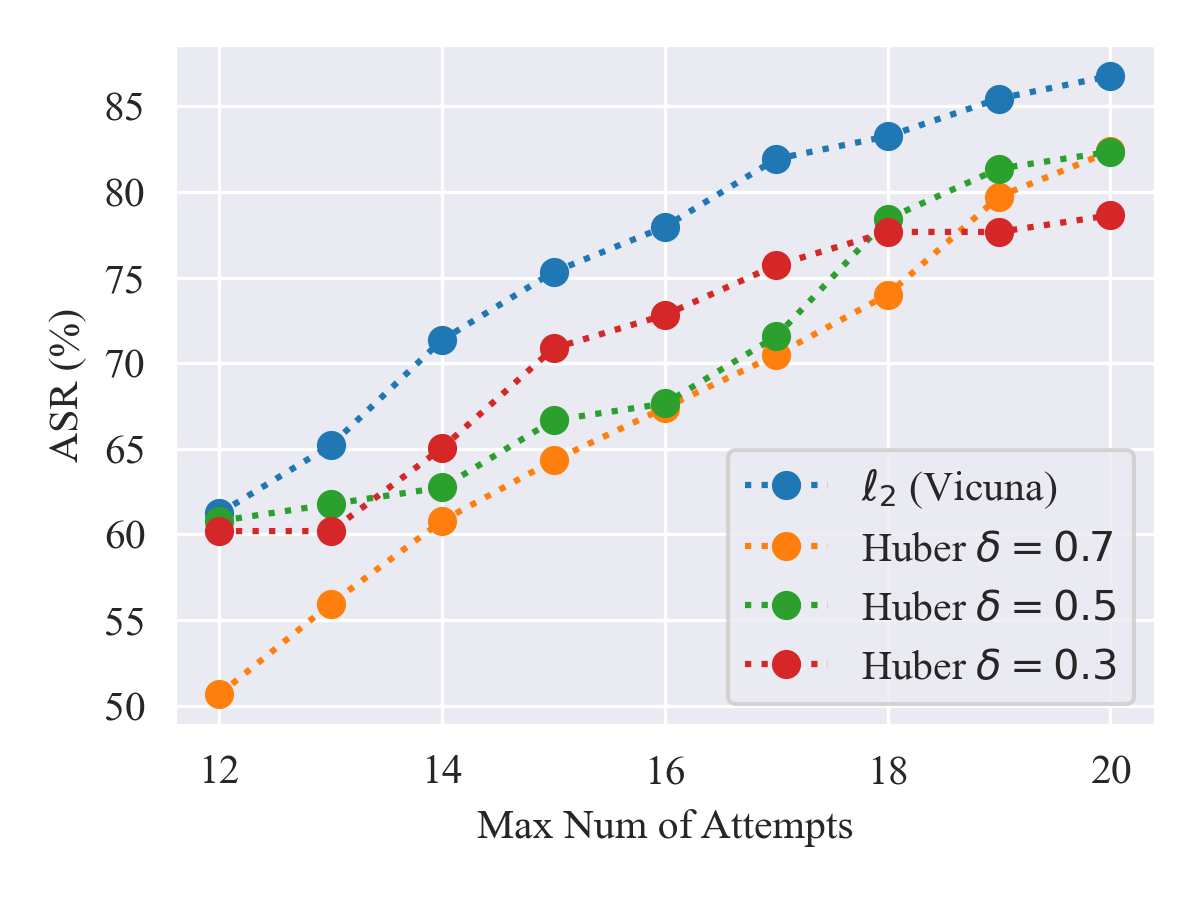}}
\caption{Adaptive JailBreak }
\label{fig:jailbreak_adaptive}
\end{center}
\vskip -0.2in
\end{figure}

\newpage

\section{Additional Experiments on ViT}
\label{sec:vit}
\subsection{Main Results}
The main results of FGSM on ViT are presented in Table~\ref{tab:cifar-adv-fgsm}. We can conclude that our Pro-ViT (MCP) outperforms other methods across various budgets.

\begin{table}[h!]
\centering
\caption{Adversarial robustness  on CIFAR-10 (FGSM).}
\label{tab:cifar-adv-fgsm}.
\begin{sc}
\resizebox{0.6\linewidth}{!}{

\begin{tabular}{cccccccccc}
\toprule
Model $\backslash$ Budget & $0$ (Clean) & $8/255$&$4/255$&$1/255$\\

\hline
ViT&98.74&35.05&39.04&60.43\\
Pro-ViT-L1&98.46&42.41&46.33&61.99\\
Pro-ViT-Huber&98.76&37.47&41.93&62.72\\
Pro-ViT-MCP (Ours)&98.40&54.10&60.38&75.85  \\

\bottomrule

\end{tabular}

}
\end{sc}

\end{table}

\begin{table}[!h]
\caption{Adversarial Robsutness on Imagenet-1k (PGD). }
\label{tab:imagenet-adv-pgd}.
\setlength\tabcolsep{2.2pt}
\renewcommand{\arraystretch}{1.2}
\vskip 0.0in
\centering

\begin{sc}
\resizebox{0.6\linewidth}{!}{
\begin{tabular}{c|ccccccccccccccccc}
\toprule
Model&0 (Clean)&8/255&4/255&2/255&1/255\\
\hline
ViT&90.7&0.0&0.1&4.1&30.2\\
Pro-ViT-L1 &89.1&5.4&14.2&31.6&47.6\\
Pro-ViT-Huber &90.6&4.3&15.1&32.7&51.6\\
Pro-ViT-MCP (Ours)&88.6&13.0&23.1&49.1&65.6\\

\bottomrule
\end{tabular}
}

\end{sc}
\vskip -0.1in

\end{table}

\subsection{Ablation study.}

The ablation study of PGD and FGSM are provided in Table~\ref{tab:cifar-ablation-pgd},   Table~\ref{tab:cifar-ablation-fgsm} and Figure~\ref{fig:vit_ablation}. As shown in the results, the optimal $\gamma$ of MCP fall into the range of (3,4). The robust estimators with more layers show the better robustness while slightly sacrifice the clean performance.

\begin{table}[h!]
\centering
\caption{Ablation: CIFAR-10 (PGD)}
\label{tab:cifar-ablation-pgd}
\vskip 0.1in
\begin{sc}

\resizebox{0.7\linewidth}{!}{

\begin{tabular}{cccccccccc}
\toprule
Model $\backslash$ Budget &  0 (Clean) & 8/255&4/255&1/255\\
\hline
Pro-ViT-Huber $K=3,\delta=1$&98.43&0.09&0.82&28.38\\
Pro-ViT-Huber $K=3,\delta=3$&98.56&0.07&1.36&31.04\\
Pro-ViT-Huber $K=3,\delta=5$&98.56&0.09&1.57&34.9&\\
Pro-ViT-Huber $K=3,\delta=7$&98.76&0.15&1.72&34.89\\
Pro-ViT-Huber $K=3,\delta=9$&98.75&0.18&1.86&34.98\\
\hline
Pro-ViT-MCP $K=1,\gamma=4$&98.07&1.03&2.43&26.16  \\
Pro-ViT-MCP $K=2,\gamma=4$&96.92&2.22&3.85&39.04  \\
Pro-ViT-MCP $K=3,\gamma=4$&95.79&6.47&12.65&65.15\\
Pro-ViT-MCP $K=4,\gamma=4$&94.29&14.64&27.17&74.72 \\
Pro-ViT-MCP $K=5,\gamma=4$&93.43&23.34&37.56&76.75 \\
Pro-ViT-MCP $K=6,\gamma=4$&92.56&28.94&43.34&77.39\\
Pro-ViT-MCP $K=7,\gamma=4$&91.89&31.57&47.01&76.86\\
Pro-ViT-MCP $K=8,\gamma=4$&91.36&33.4&47.22&76.16\\
Pro-ViT-MCP $K=9,\gamma=4$&90.76&33.17&48.11&75.56\\
\hline
Pro-ViT-MCP $K=3,\gamma=2$&98.4&2.95&6.19&56.41  \\
Pro-ViT-MCP $K=3,\gamma=3$&97.97&5.8&10.83&67.07  \\
Pro-ViT-MCP $K=3,\gamma=4$&95.79&6.47&12.65&65.15   \\
Pro-ViT-MCP $K=3,\gamma=5$&92.77&3.53&7.54&45.70  \\
Pro-ViT-MCP $K=3,\gamma=6$&94.0&3.54 &7.22&37.22 \\
\bottomrule

\end{tabular}

}

\end{sc}

\end{table}

\begin{table}[h!]
\centering
\caption{Ablation: CIFAR-10 (FGSM)}
\label{tab:cifar-ablation-fgsm}
\vskip 0.1in
\begin{sc}

\resizebox{0.7\linewidth}{!}{

\begin{tabular}{cccccccccc}

\toprule
Model $\backslash$ Budget &  0 (Clean) & 8/255&4/255&1/255\\
\hline
Pro-ViT-Huber $K=3,\delta=1$&98.43&36.23&40.57&58.84\\
Pro-ViT-Huber $K=3,\delta=3$&98.56&36.99&40.88&60.54\\
Pro-ViT-Huber $K=3,\delta=5$&98.65&37.47&41.93&62.72\\
Pro-ViT-Huber $K=3,\delta=7$&98.76&36.55&41.02&62.20\\
Pro-ViT-Huber $K=3,\delta=9$&98.75&35.75&40.39&61.38\\
\hline
Pro-ViT-MCP $K=1,\gamma=4$&98.07&35.22&38.12&52.82  \\
Pro-ViT-MCP $K=2,\gamma=4$&96.92&39.84&43.19&55.49  \\
Pro-ViT-MCP $K=3,\gamma=4$&95.79&47.38&53.6&67.03  \\
Pro-ViT-MCP $K=4,\gamma=4$&94.29&49.26&58.49&72.71\\
Pro-ViT-MCP $K=5,\gamma=4$&93.42&49.42&59.03&74.35\\
Pro-ViT-MCP $K=6,\gamma=4$&92.56&48.23&59.21&76.01\\
\hline
Pro-ViT-MCP $K=3,\gamma=2$&98.4&47.98&52.39&70.59  \\
Pro-ViT-MCP $K=3,\gamma=3$&97.97&51.64&57.21&73.16  \\
Pro-ViT-MCP $K=3,\gamma=4$&95.79&47.38&53.6&67.03    \\
Pro-ViT-MCP $K=3,\gamma=5$&92.77&35.37&41.76&52.10  \\
Pro-ViT-MCP $K=3,\gamma=6$&94.0&37.08&41.42&48.56  \\
\hline
Pro-ViT-MCP $K=3,\gamma=3$&97.97&51.64&57.21&73.16 \\
Pro-ViT-MCP $K=4,\gamma=3$&97.76&54.1&59.66&75.30  \\
Pro-ViT-MCP $K=5,\gamma=3$&97.75&53.29&60.08&75.85 \\
Pro-ViT-MCP $K=6,\gamma=3$&97.74&52.37&60.38&75.70  \\

\bottomrule

\end{tabular}

}

\end{sc}

\end{table}

\begin{figure}[h!]
\vskip 0.2in
\begin{center}
\subfigure[Ablation study on $\gamma$ in MCP (PGD)] {\includegraphics[width=0.4\columnwidth]{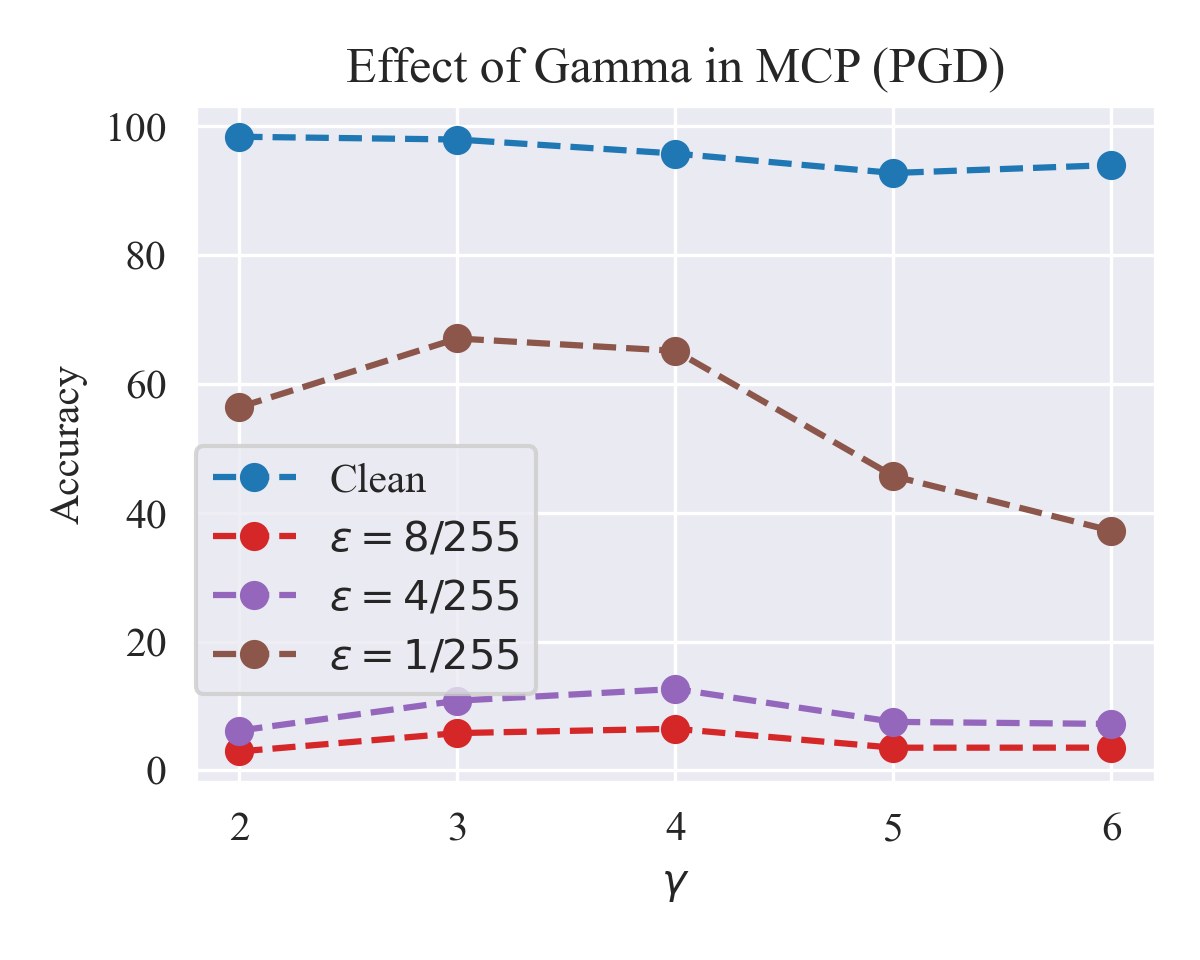}}
\subfigure[Ablation study on $K$ in MCP (PGD)] {\includegraphics[width=0.4\columnwidth]{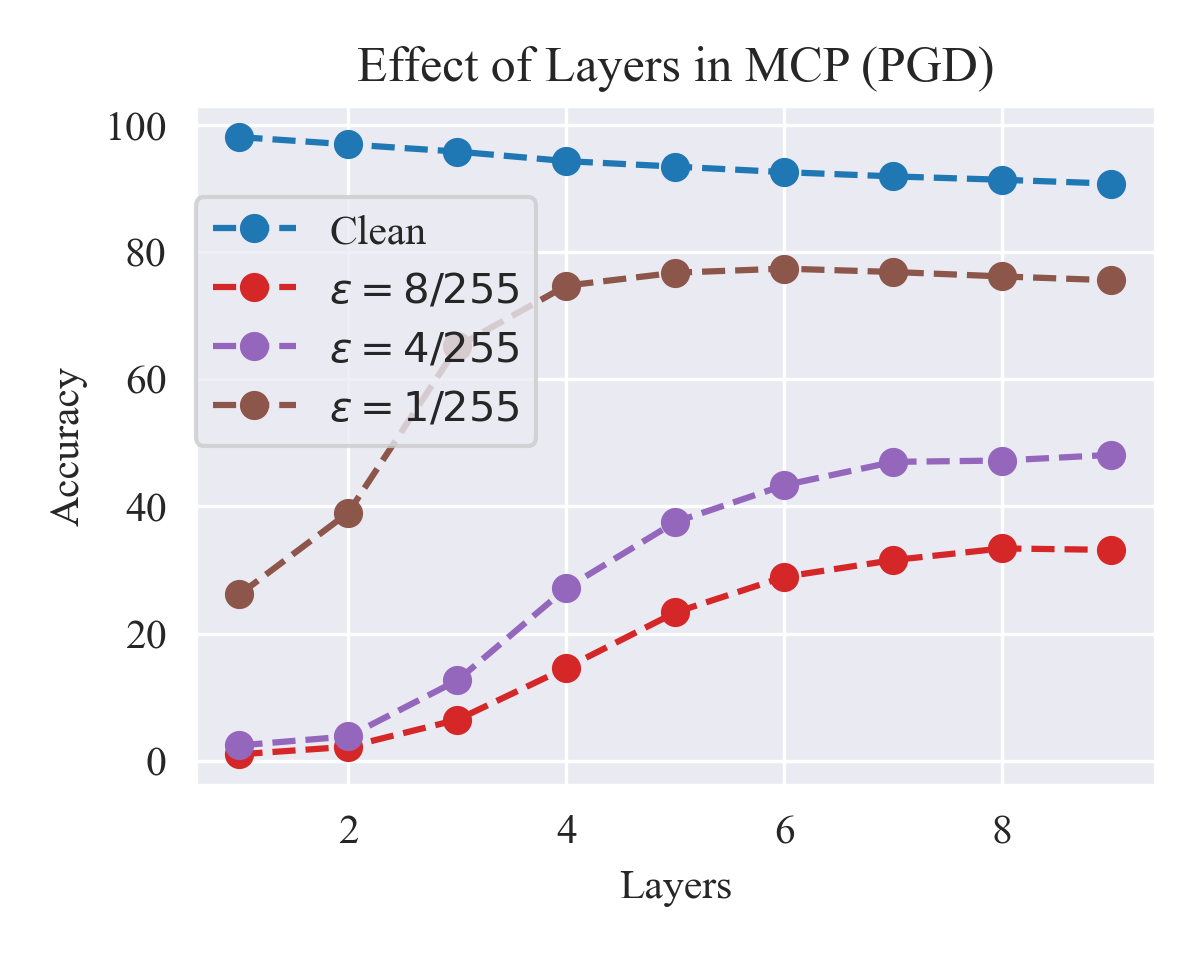}}
\subfigure[Ablation study on $\gamma$ in MCP (FGSM)] {\includegraphics[width=0.4\columnwidth]{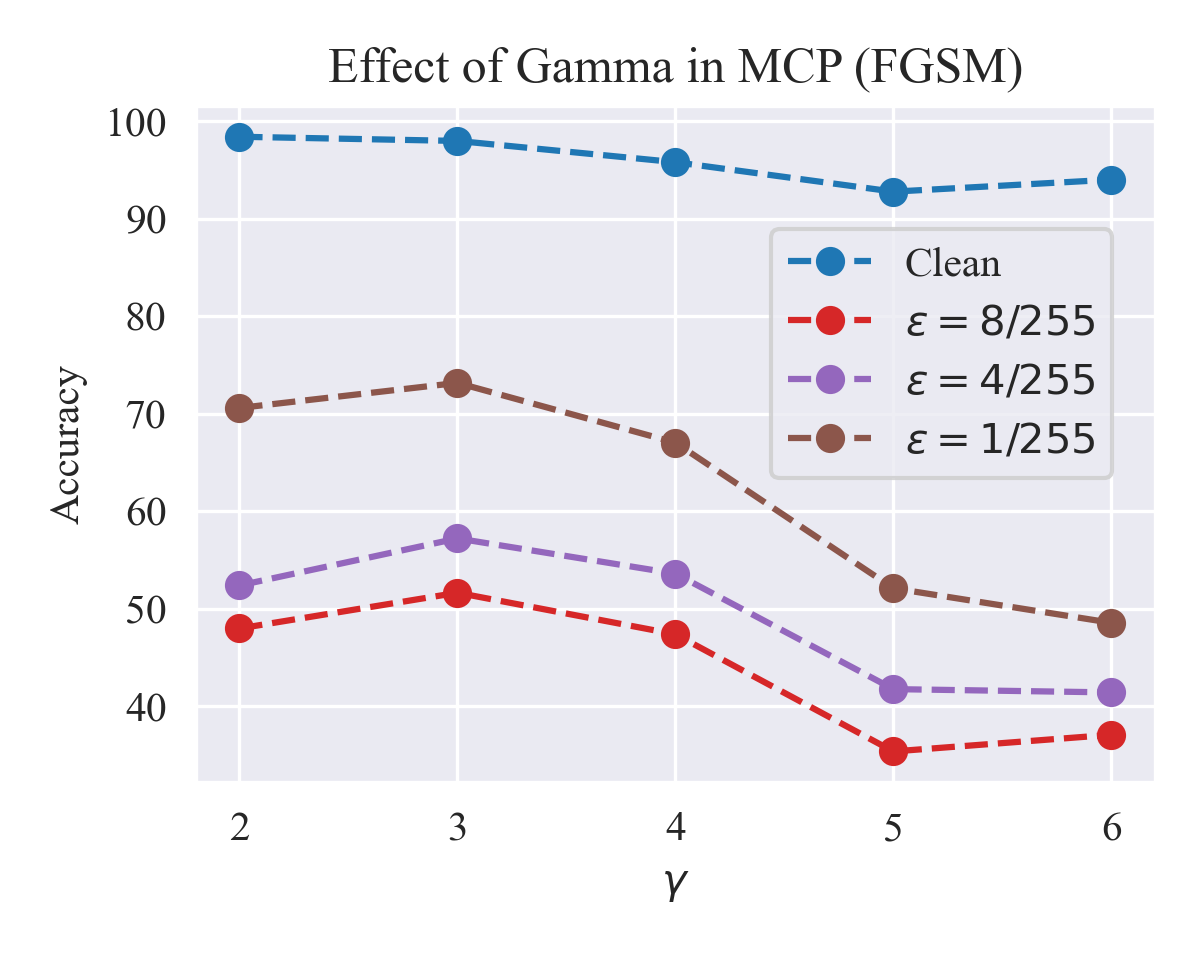}}
\subfigure[Ablation study on $L$ in MCP (FGSM)] {\includegraphics[width=0.4\columnwidth]{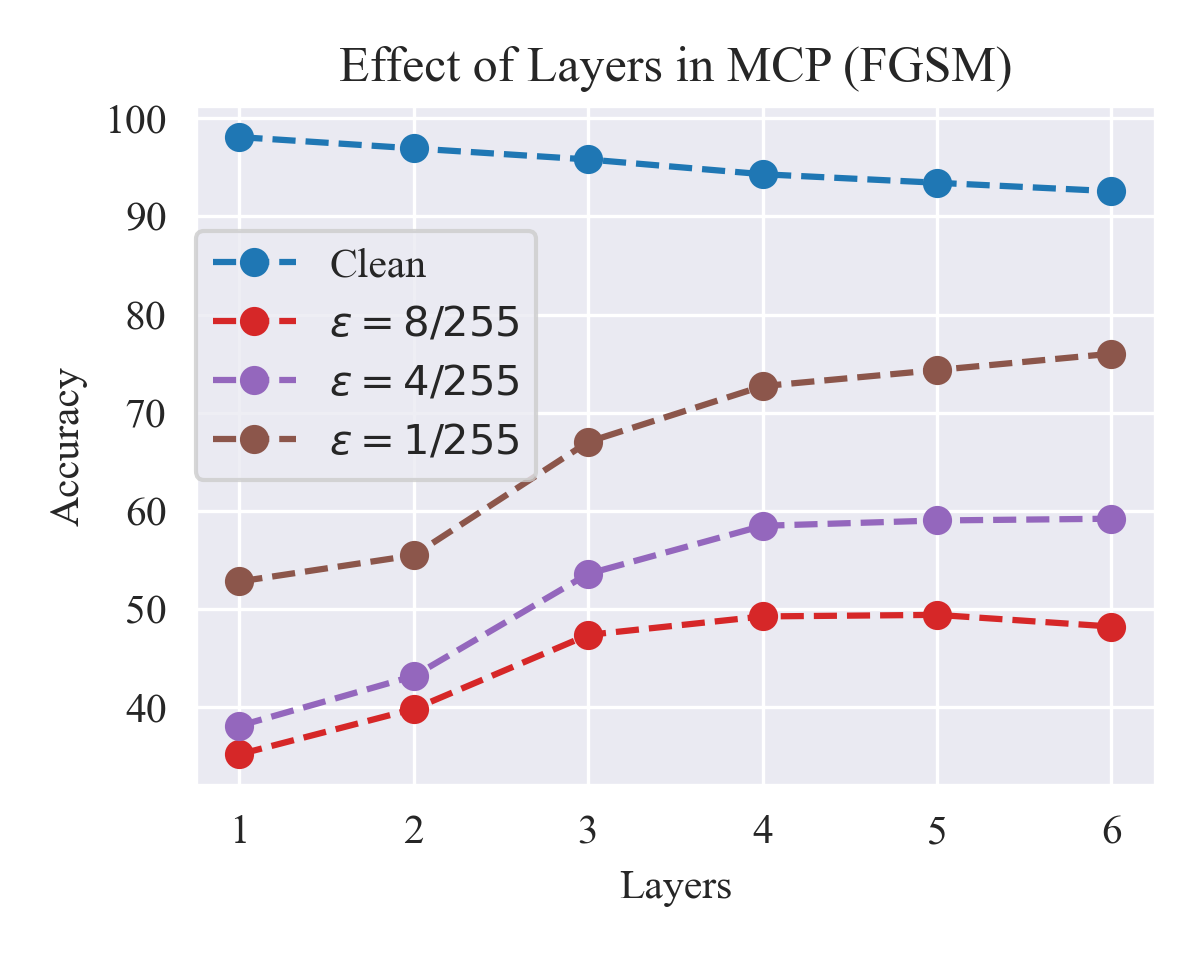}}

\caption{Ablation study in MCP}
\label{fig:vit_ablation}
\end{center}
\vskip -0.2in
\end{figure}

\newpage

\section{Additional Experiments in GAT}
\label{sec:graph}

 The main results on Citeseer  and 
 the ablation study on Cora-ML are presented in Table~\ref{tab:cite-adv} and Table~\ref{tab:cora-ablation}. From the results, we can make the following conclusions: (1) Our Pro-GAT outperform other methods significantly. Under the larger budgets, the outliers introduced by the adversarial attack will enlarge the bias effect of the estimation. In this scenario, our MCP function can mitigate or even remove the effect of outlying values in the large-value region. (2) The parameter $\gamma$ provide an implication for the robustness. For small budget, the models with large $\gamma$ perform well since it is more similar the original attention. While for large budget, the models with small $\gamma$ offer better robustness cause it can mitigate the bias introduced by the outliers.

\begin{table}[h!]
\centering
\caption{Results on Citeseer.}
\label{tab:cite-adv}

\vskip 0.1in
\begin{sc}
\resizebox{1.0\linewidth}{!}{
\begin{tabular}{cccccccccc}
\toprule
Model $\backslash$ Budget &  0\% &  5\%&10\%&20\%&30\%&40\%  \\
\hline

GCN      & $\mathbf{74.8 \pm 1.2}$  & $66.1 \pm 1.0$  & $60.9 \pm 0.8$  & $53.0 \pm 1.0$  & $47.0 \pm 0.8$ &$41.2\pm1.1$  \\

GNNGuard & $72.4 \pm 1.1$  & $65.6 \pm 0.9$  & $61.8 \pm 1.4$  & $55.6 \pm 1.4$  & $51.0 \pm 1.3$ &$47.3\pm1.3$ \\
RGCN     & $74.4 \pm 1.0$  & $66.0 \pm 0.8$  & $60.6 \pm 0.9$  & $52.5 \pm 0.8$  & $46.1 \pm 0.9$ & $40.2\pm1.0$\\
GRAND    & $\mathbf{74.8 \pm 0.6}$  & $66.6 \pm 0.7$  & $61.8 \pm 0.7$  & $53.6 \pm 1.1$  & $47.4 \pm 1.2$ & $42.2\pm0.9$  \\
ProGNN  & $74.2 \pm 1.3$  & $65.6 \pm 1.1$  & $60.3 \pm 1.1$  & $52.7 \pm 1.4$  & $46.2 \pm 0.9$  & $40.8 \pm 0.6$ \\
Jaccard-GCN & $\mathbf{74.8 \pm 1.2}$  & $66.3 \pm 1.2$  & $60.9 \pm 1.2$  & $53.3 \pm 0.9$  & $46.5 \pm 0.9$  & $41.1 \pm 1.0$ \\
SoftMedian   & $74.6 \pm 0.7$  & $68.0 \pm 0.7$  & $64.4 \pm 0.9$  & $59.3 \pm 1.1$  & $55.2 \pm 2.0$ & $51.9\pm2.1$ \\ 

\hline
GAT   & $73.4 \pm 1.2$  & $65.4 \pm 1.3$  & $60.4 \pm 1.4$  & $52.6 \pm 2.5$  & $47.2 \pm 3.4$  & $41.2 \pm 4.8$ \\
Pro-GAT (ours)& $73.4 \pm 1.1$&	$\mathbf{68.9 \pm 1.4}$	&$\mathbf{66.0 \pm 2.2}$	&$\mathbf{63.0 \pm 2.4}$&	$\mathbf{59.5 \pm 2.6}$	&$\mathbf{57.7 \pm 2.0}$\\

\bottomrule
\end{tabular}
}
\end{sc}

\end{table}

\begin{table}[h!]

\caption{Ablation study on Cora-ML.}
\label{tab:cora-ablation}

\vskip 0.1in
\begin{sc}

\resizebox{1.0\linewidth}{!}{
\begin{tabular}{cccccccccc}
\toprule
Model $\backslash$ Budget &  0\% (Clean) &  5\%&10\%&20\%&30\%&40\%  \\

\hline 
$K=1,\gamma=1.0$& $84.14 \pm 0.35$ & $78.51 \pm 0.39$ & $75.70 \pm 0.45$ & $72.06 \pm 0.44$ & $69.00 \pm 0.65$ & $66.34 \pm 0.99$ \\
$K=1,\gamma=2.0$& $83.70 \pm 0.72$ & $78.46 \pm 0.51$ & $75.46 \pm 0.80$ & $71.21 \pm 1.32$ & $68.39 \pm 1.60$ & $65.91 \pm 2.17$ \\
$K=1,\gamma=3.0$& $83.95 \pm 0.77$ & $77.93 \pm 0.55$ & $74.35 \pm 0.63$ & $69.46 \pm 1.16$ & $66.31 \pm 1.73$ & $62.87 \pm 1.54$ \\
$K=1,\gamma=4.0$& $84.18 \pm 0.64$ & $77.41 \pm 0.64$ & $73.97 \pm 0.74$ & $68.97 \pm 0.98$ & $65.70 \pm 1.20$ & $62.70 \pm 1.42$ \\
$K=1,\gamma=5.0$& $83.91 \pm 1.17$ & $77.57 \pm 0.93$ & $73.88 \pm 1.29$ & $68.98 \pm 1.27$ & $65.17 \pm 1.54$ & $62.40 \pm 1.75$ \\
$K=1,\gamma=6.0$& $83.91 \pm 0.79$ & $77.45 \pm 0.75$ & $73.56 \pm 0.88$ & $68.66 \pm 1.21$ & $64.80 \pm 1.41$ & $61.94 \pm 2.08$ \\
$K=1,\gamma=7.0$& $84.26 \pm 0.54$ & $77.77 \pm 0.79$ & $74.21 \pm 0.67$ & $68.94 \pm 0.88$ & $65.20 \pm 1.30$ & $62.31 \pm 1.69$ \\
\hline
$K=3,\gamma=1.0$& $82.75 \pm 0.87$ & $77.59 \pm 0.95$ & $75.04 \pm 1.25$ & $71.47 \pm 1.06$ & $68.70 \pm 1.20$ & $66.53 \pm 1.24$ \\
$K=3,\gamma=2.0$& $80.88 \pm 3.79$ & $75.89 \pm 2.88$ & $72.61 \pm 2.39$ & $68.71 \pm 2.07$ & $65.39 \pm 2.25$ & $62.29 \pm 2.65$ \\
$K=3,\gamma=3.0$& $83.04 \pm 1.04$ & $77.09 \pm 1.22$ & $73.82 \pm 1.23$ & $69.27 \pm 1.45$ & $65.71 \pm 1.62$ & $62.62 \pm 2.07$ \\
$K=3,\gamma=4.0$& $81.84 \pm 3.57$ & $76.37 \pm 2.62$ & $73.45 \pm 2.04$ & $68.63 \pm 2.49$ & $65.09 \pm 2.56$ & $62.42 \pm 2.33$ \\
$K=3,\gamma=5.0$& $83.79 \pm 0.75$ & $77.81 \pm 0.85$ & $74.58 \pm 0.96$ & $69.90 \pm 1.05$ & $66.32 \pm 1.26$ & $63.33 \pm 1.66$ \\
$K=3,\gamma=6.0$& $83.38 \pm 1.12$ & $77.17 \pm 1.15$ & $74.12 \pm 1.28$ & $69.27 \pm 1.33$ & $65.59 \pm 1.34$ & $62.86 \pm 1.75$ \\
$K=3,\gamma=7.0$& $84.57 \pm 0.76$ & $78.47 \pm 0.78$ & $75.15 \pm 0.84$ & $70.47 \pm 0.96$ & $66.91 \pm 1.33$ & $63.94 \pm 1.33$\\ 
\bottomrule
\end{tabular}
}
\end{sc}

\end{table}

\newpage

\section{Complexity Analysis.}
\label{sec:complexity}

Here we will provide a complexity analysis of the vanilla attention, our robust attention, KDE-based attention and RKDE-based attention. The related notations are $\vQ, \vK, \vV \in\mathbb{R}^{N\times D}$ and $\vA\in\mathbb{R}^{N\times N}$. The major difference in these methods is how to derive the attention matrix, therefore we will only count the complexity of attention matrix derivation and context matrix computation.
\begin{itemize}
\item \textbf{Vanilla Attention.} The vanilla  attention matrix in can be formulated as $\vA=\text{softmax}\left(\frac{\vQ\vK^\top}{\sqrt{D}}\right)$, which costs about $N\times N\times D$. The context matrix computation requires $N\times N\times D$, so the total cost is $2\cdot (N\times N\times D)$
\item \textbf{Our  ProAttention.} For our robust attention, we need to firstly compute the original matrix $\vA$ ($N\times N\times D$). Then we need to compute weight $\vW^{(k)}$ based on the pairwise distance between the $\vV$ and current estimator $\vZ^{(k)}$ ($N\times N\times D$). Finally we need to update the estimator by  $\vZ^{(k+1)}=(\vA\odot\vW^{(k)})\vV$ ($N\times N\times D$). The context matrix is calculated through the iterations. Therefore, the total cost will be $(1+2K)\cdot N\times N\times D$.  As stated in the ablation study in Section~\ref{sec:ablation_study_main}, our Newton-IRLS in ProAttention can converge efficiently within 3 steps ($K\leq 3$). Therefore, our ProAttention is still effeicient.

\textbf{Kernel Density Estimation (KDE) Attention.}
For KDE-based attention, the attention matrix is computed based on the pairwise ditance between the $\vK$ and $\vQ$, which cost $N\times N\times D$.
The context matrix computation requires $N\times N\times D$, so the total cost is $2\cdot (N\times N\times D)$.
\item
\textbf{Robust Kernel Density Estimation (RKDE) Attention.} For the RKDE-based attention, we need to perform the following operations. Firstly,
we need to compute the basic matrix ad KDE attention matrix $\vA$ ($N\times N\times D$). Then we need to compute pairwise distance $\vD_\vK^{(k)}$ for all the  key pairs ($N\times N\times D$) and update the weight $\vW_\vK^{(k)}$ based on $\vW_\vK^{(k-1)}$ and $\vD_\vK^{(k)}$ ($N\times N\times N$). Similarly, we need calculate the pairwise distance $\vD_{\vK\vV}^{(k)}$ and update the weight $\vW_{\vK\vV}^{(k)}$ for concatenated key and value, which costs $N\times N\times 2D$ and $N\times N\times N$, repectively. The context matrix computation requires $N\times N\times D$. Therefore, the total cost will be $(2+3K)\cdot N\times N\times D+ 2K \cdot N\times N\times N$.

\end{itemize}

\end{document}